\newtheorem{lemma}{Lemma}
\newtheorem{definition}{Definition}
\newtheorem{assumption}{Assumption}
\newtheorem{proposition}{Proposition}
\newcommand{\bentseq}{\texttt{Bent/Sequential}}
\newcommand{\bentconc}{\texttt{Bent/Concurrent}}
\newcommand{\strseq}{\texttt{Straight/Sequential}}
\newcommand{\strconc}{\texttt{Straight/Concurrent}}
\newcommand{\calP}{\mathcal P}
\newcommand{\calC}{\mathcal C}
\newcommand{\calK}{\mathcal K}
\newcommand{\calT}{\mathcal T}
\newcommand{\calR}{\mathcal R}
\journal{Robotics and Autonomous Systems}
\begin{document}

\begin{frontmatter}
%
\title{Planning coordinated motions for tethered planar mobile robots}

\author[1]{Xu Zhang}
\author[2]{Quang-Cuong Pham\corref{cuong}}


\cortext[cuong]{Corresponding author. Email: \texttt{cuong@ntu.edu.sg}.}

\address{Singapore Centre for 3D Printing, School of Mechanical \& Aerospace Engineering,\\
Nanyang Technological University, 50 Nanyang Avenue, 639798,
Singapore}

\begin{abstract}
  This paper considers the motion planning problem for multiple
  tethered planar mobile robots. Each robot is attached to a fixed
  base by a flexible cable. Since the robots share a common workspace,
  the interactions amongst the robots, cables, and obstacles pose
  significant difficulties for planning. Previous works have studied
  the problem of detecting whether a target cable configuration is
  intersecting (or entangled). Here, we are interested in the motion
  planning problem: how to plan and coordinate the robot motions to
  realize a given non-intersecting target cable configuration. We
  identify four possible modes of motion, depending on whether (i) the
  robots move in straight lines or following their cable lines; (ii)
  the robots move sequentially or concurrently. We present an in-depth
  analysis of \texttt{Straight \& Concurrent}, which is the most
  practically-interesting mode of motion. In particular, we propose
  algorithms that (a) detect whether a given target cable
  configuration is realizable by a \texttt{Straight \& Concurrent}
  motion, and (b) return a valid coordinated motion plan. The
  algorithms are analyzed in detail and validated in simulations and
  in a hardware experiment.
\end{abstract}

\begin{keyword}
multi-robot motion planning, tethered robots
\end{keyword}

\end{frontmatter}


\section{Introduction} \label{sec:Introduction}

In many applications, tethers are used to provide power and resources
to robots or used as a communication
link~\cite{nassiraei2007concept,hong1997tethered,zhang2018large},
especially in extreme environments, such as in underwater, disaster
recovery and rescue~\cite{pratt2008use,
  michael2012collaborative,StephenCass}.  While tethering helps in
communication and resource delivery, it also poses challenges to robot
control and planning.  The limited-length cable restricts the mobile
robot to the workspace around its home station.  In addition, due to
the presence of obstacles and cables, certain positions can be reached
by robot only under specific robot cable configurations.  Under many
circumstances, multiple tethered robots have to cooperate and work
together in a shared workspace~\cite{pratt2008use,
  michael2012collaborative, mcgarey2016system,zhang2018large}, which
introduces more constraints to the planning and control problems.  
More specifically, in the large-scale, multi-robot 3D Printing 
system~\cite{zhang2018large}, the tethers are
indispensable to deliver the fresh concrete from the mixer to the
print nozzle. 
Consider also the cable-cable interaction and robot-cable interaction 
when multiple tethered robots are deployed for terrain and harsh 
environment exploration~\cite{pratt2008use, michael2012collaborative, mcgarey2016system}.
It is important to make sure that the cables are not entangled while 
the robots move to their target positions, considering the above mentioned interactions and obstacle avoidance.



\begin{figure}[htp]
\centering
\subfloat[]{%
    \includegraphics[width=3.5cm]{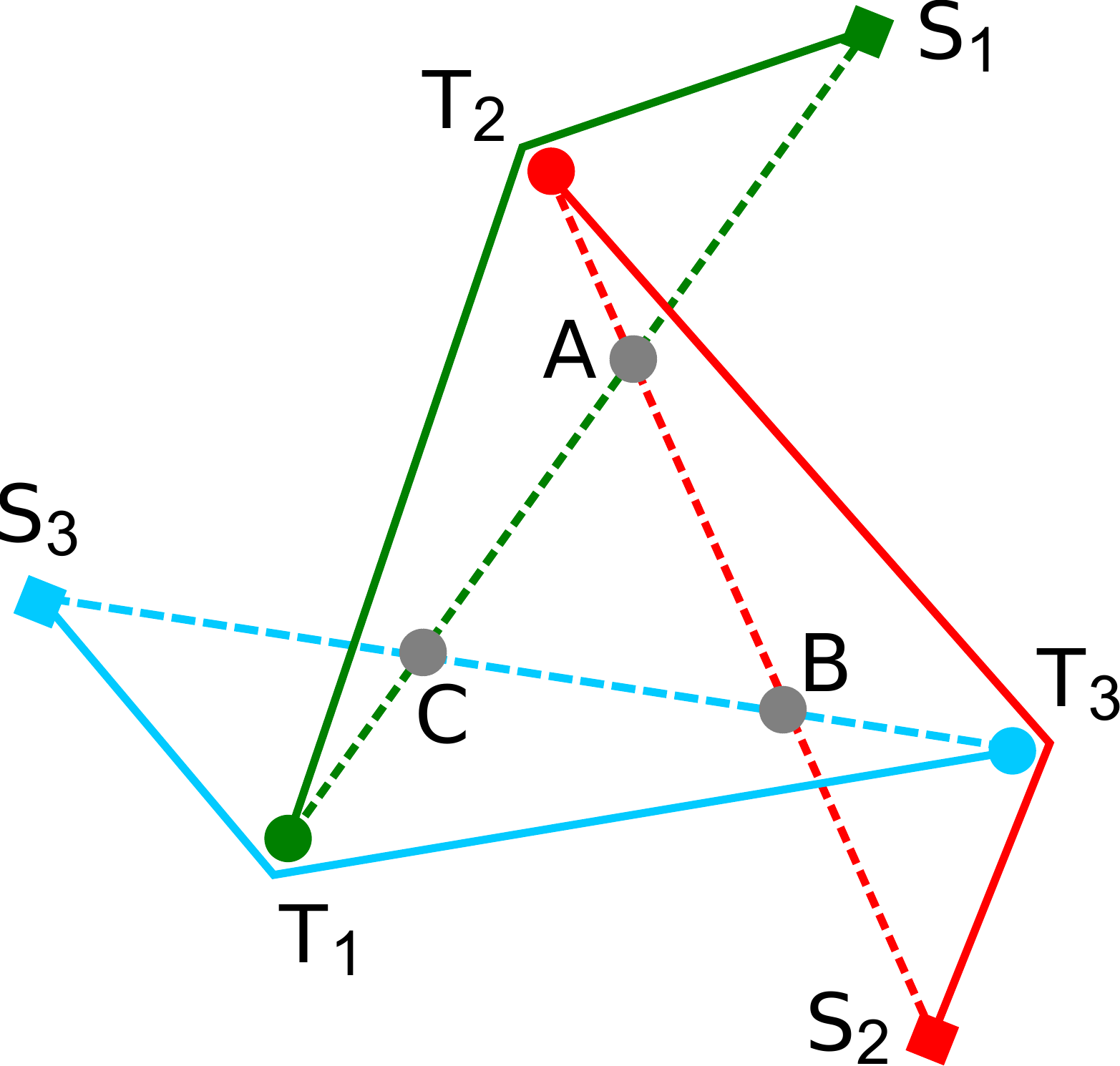} }

\subfloat[]{%
    \includegraphics[width=7cm]{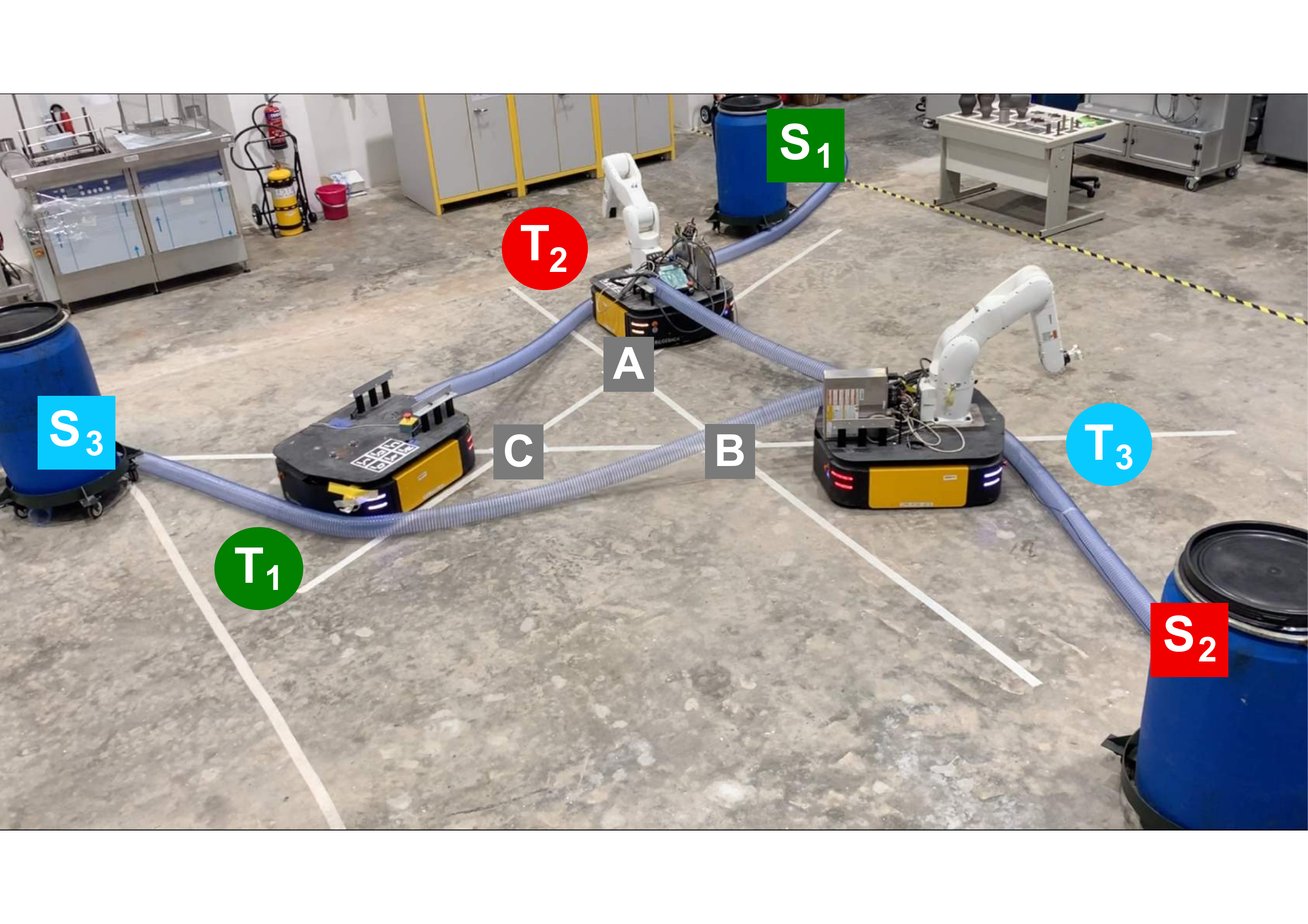} }
  \caption{An example of coordinated motion planning problem for three
    tethered mobile robots. (a) Each robot $r_i$ ($i=1,2,3$) must move
    from its starting position $S_i$ towards its target position
    $T_i$, while being tethered to $S_i$. The desired configuration of
    the cables at the end of the motions are depicted in solid
    lines. This paper proposes an algorithm to plan simultaneous,
    straight-line, motions of the robots to realize the desired cable
    configuration. We show in particular that the motion priority at
    the intersection points $A,B,C$ is critical. (b) Hardware
    implementation of the coordinated motion. See full video of the
    experiment at \url{https://youtu.be/Wdk9E0bB4yA}.}
\label{fig:tether-intro}
\end{figure}

Here we consider multiple tethered planar (point) mobile robots sharing 
a common workspace. Each robot is attached to a fixed base by a flexible
cable, which is kept taut at all times.  The cable can be pushed by
other robots, and can bend around other robots or obstacles. The 
assumptions are consistent with the requirements of the tethers in many 
practical applications, such as the material-delivery tethers in the 
concrete 3D Printing system~\cite{zhang2018large}, or in terrain explorer 
system~\cite{mcgarey2016system}. 
Previous work has studied the problem of detecting whether a target 
cable configuration is intersecting (or
entangled)~\cite{hert1997planar}. Here, we
investigate the \emph{motion planning} problem to \emph{realize} a
given non-intersecting target cable configuration.  More precisely, we
study four possible modes of motion -- \texttt{Straight/Sequential}
and \texttt{Straight/Concurrent}, \texttt{Bent/Sequential},
\texttt{Bent/Concurrent}, depending on whether (i) the robots move in
straight lines or following their cable lines; (ii) the robots move
sequentially or concurrently.

\begin{figure}[htp]
\centering
\subfloat[]{%
    \fbox{\includegraphics[height=2.5cm]{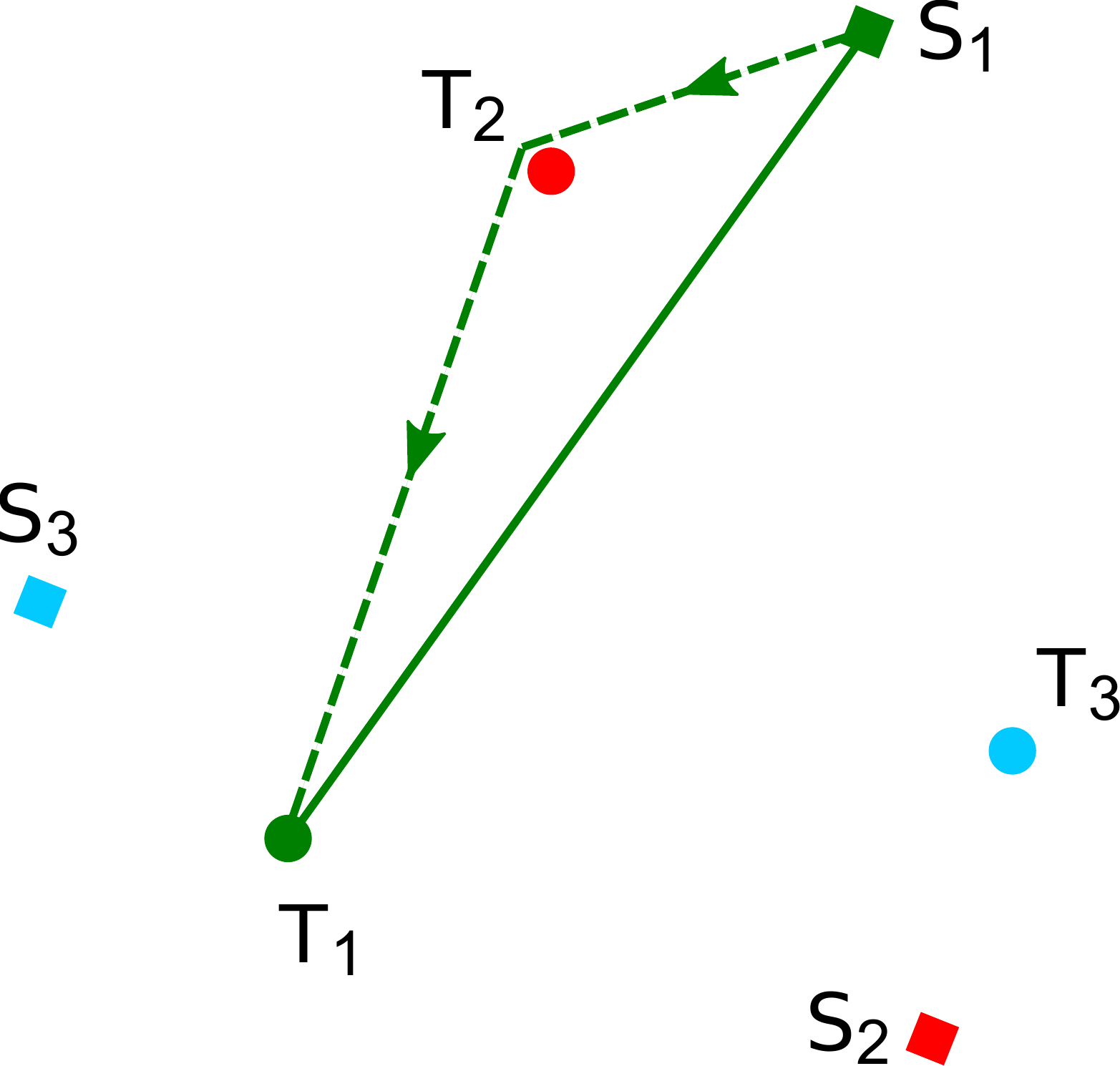}}   
    \fbox{\includegraphics[height=2.5cm]{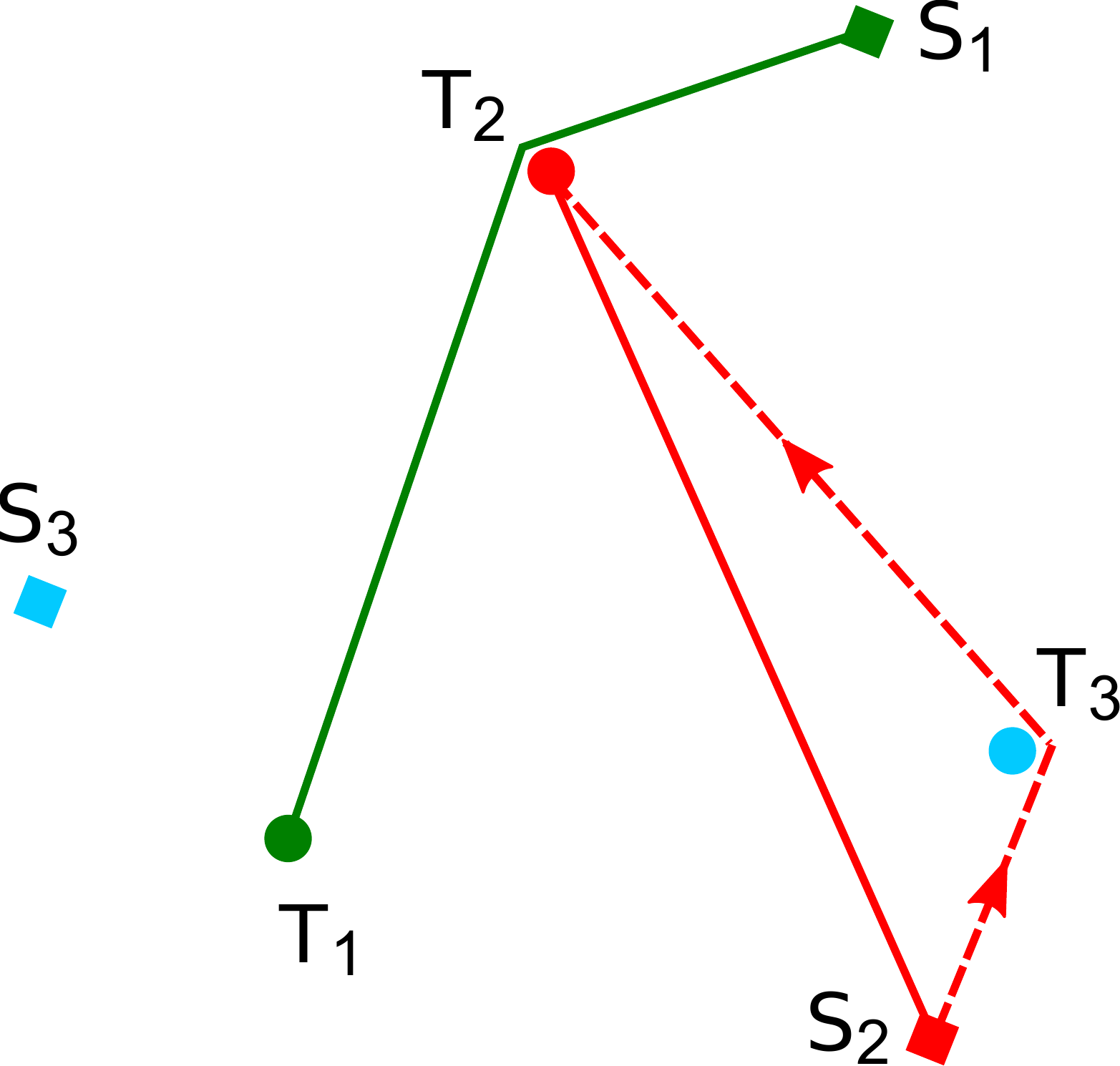}}
    \fbox{\includegraphics[height=2.5cm]{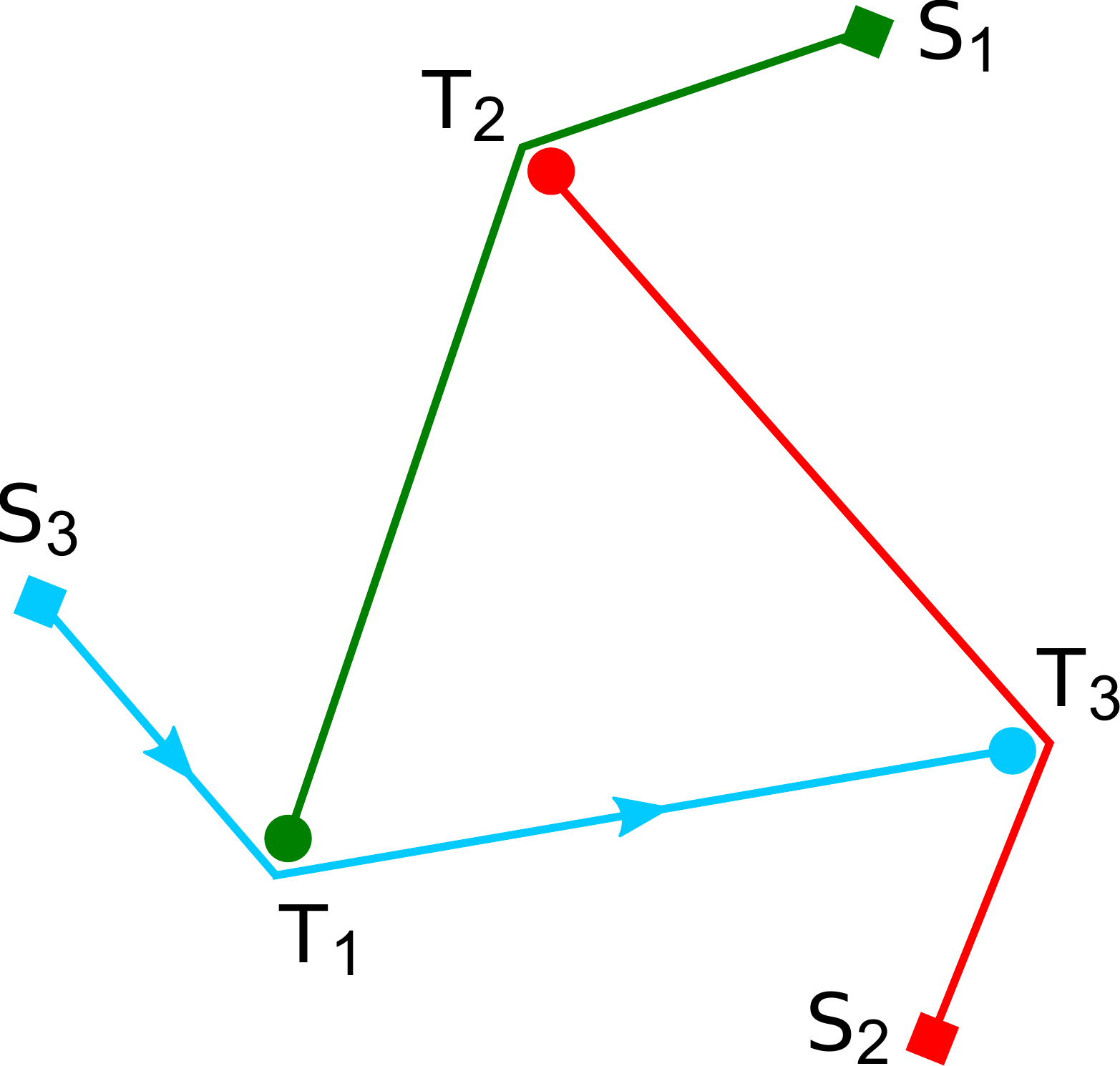}}}

\subfloat[]{%
    \fbox{\includegraphics[height=2.5cm]{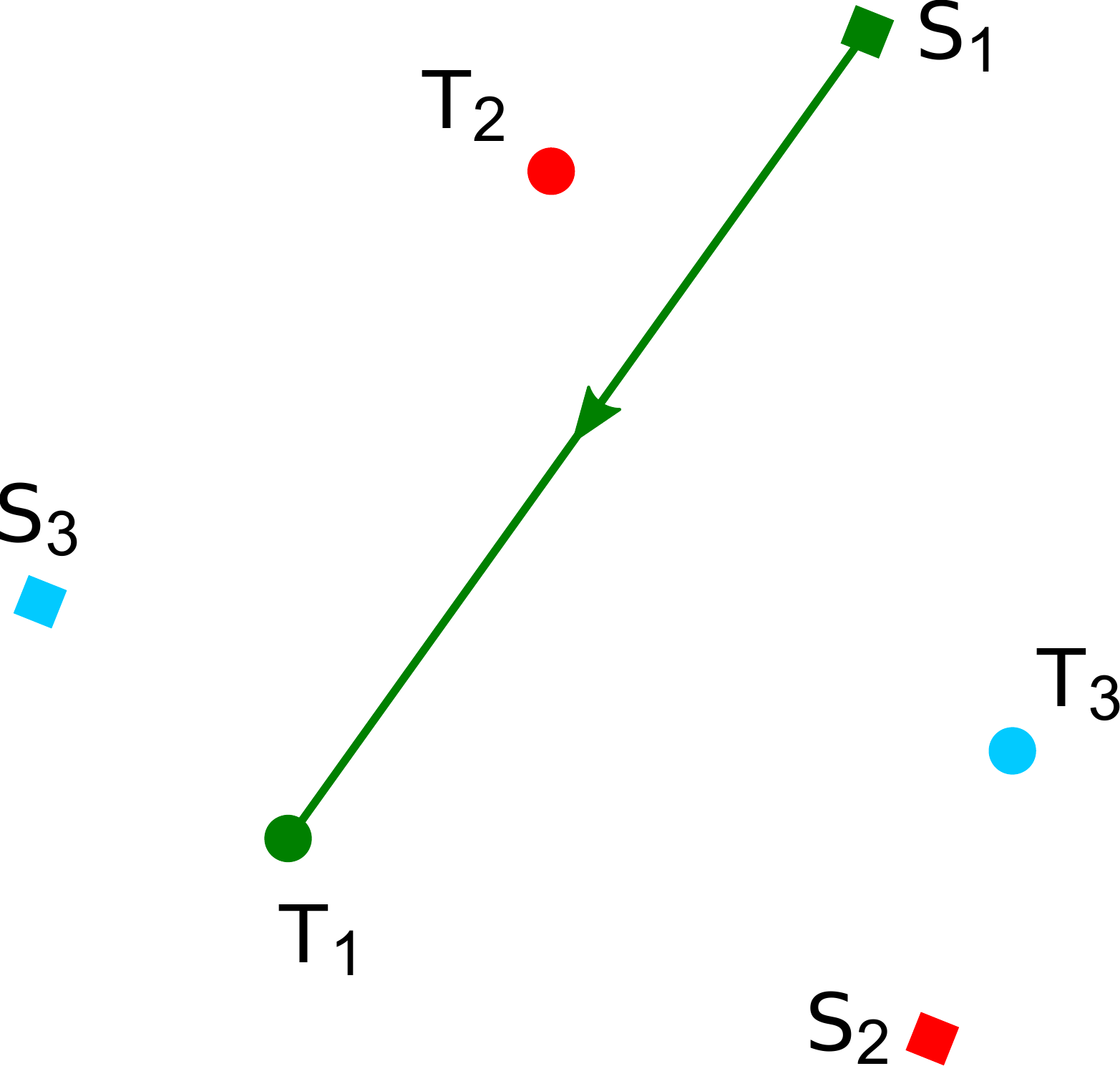}}   
    \fbox{\includegraphics[height=2.5cm]{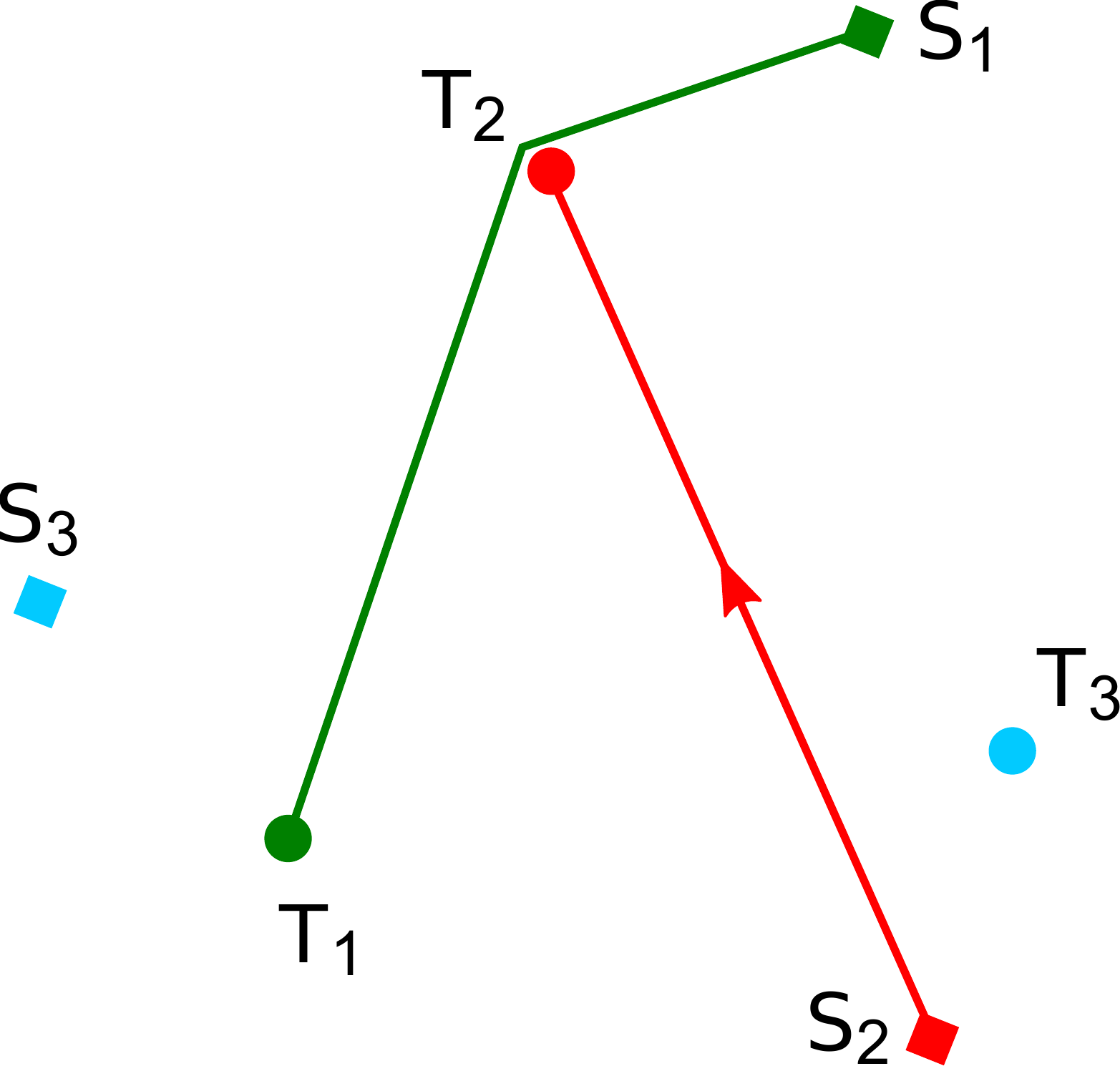}}
    \fbox{\includegraphics[height=2.5cm]{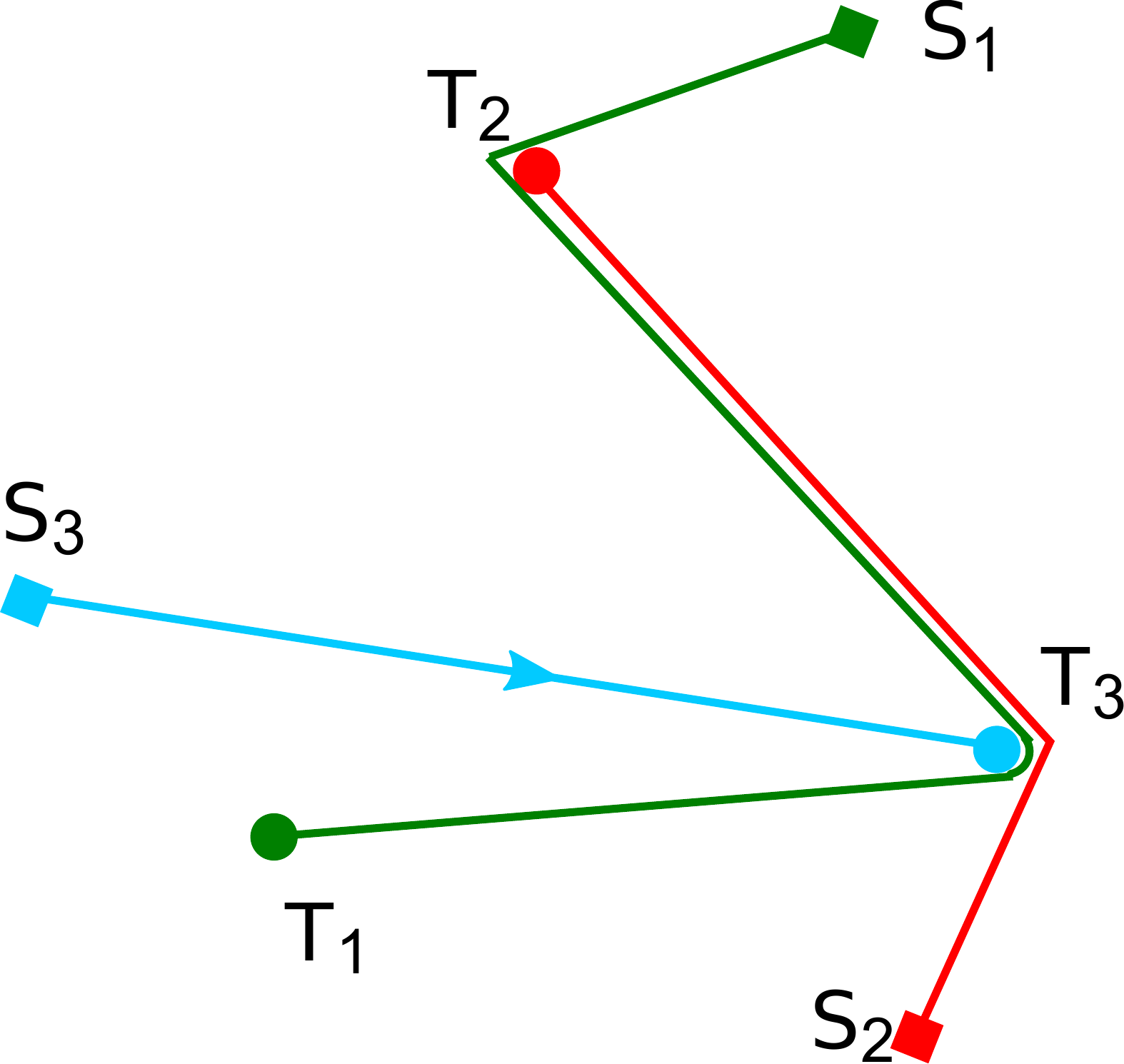}}}

\subfloat[]{%
    \fbox{\includegraphics[height=2.5cm]{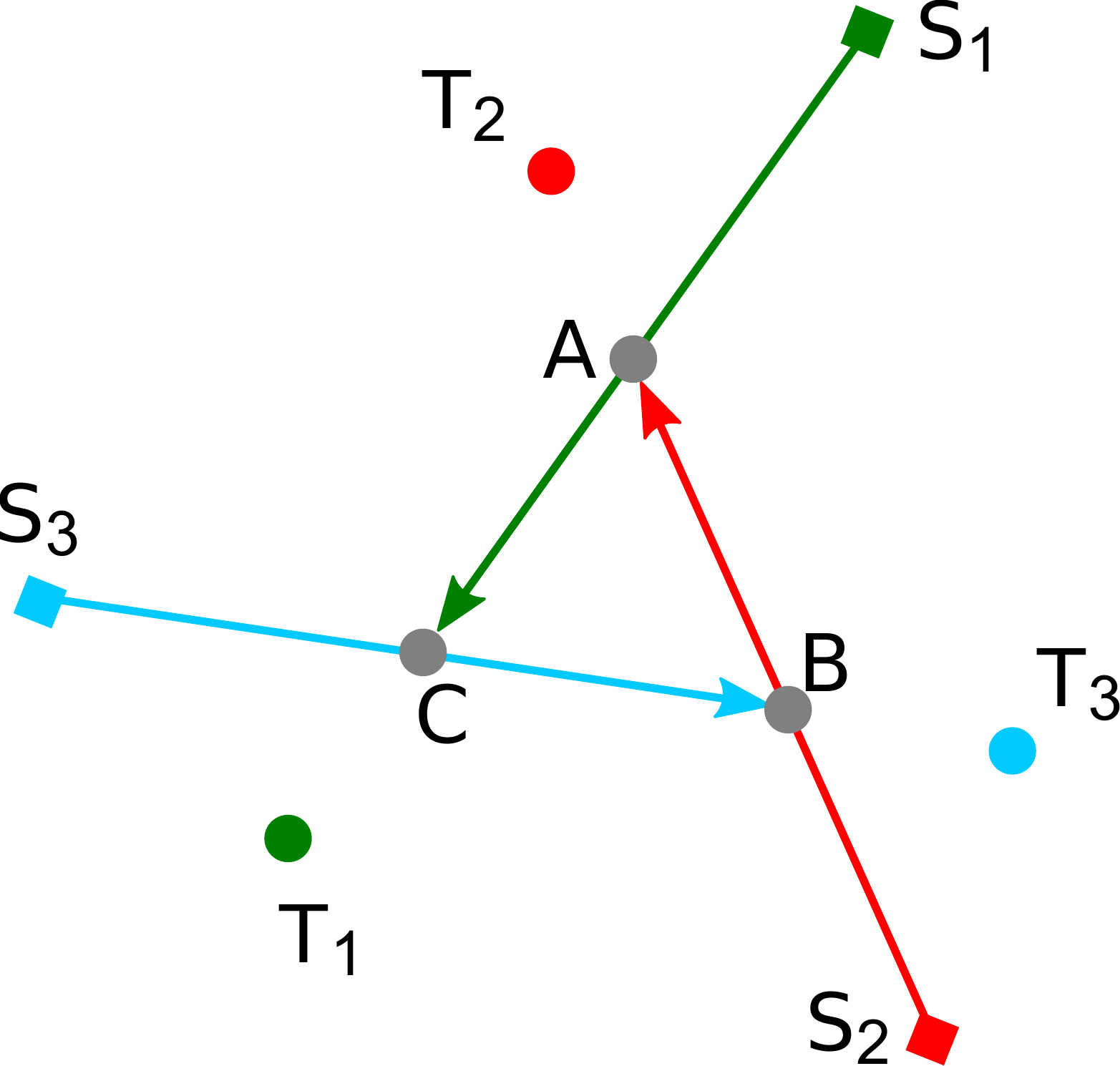}}   
    \fbox{\includegraphics[height=2.5cm]{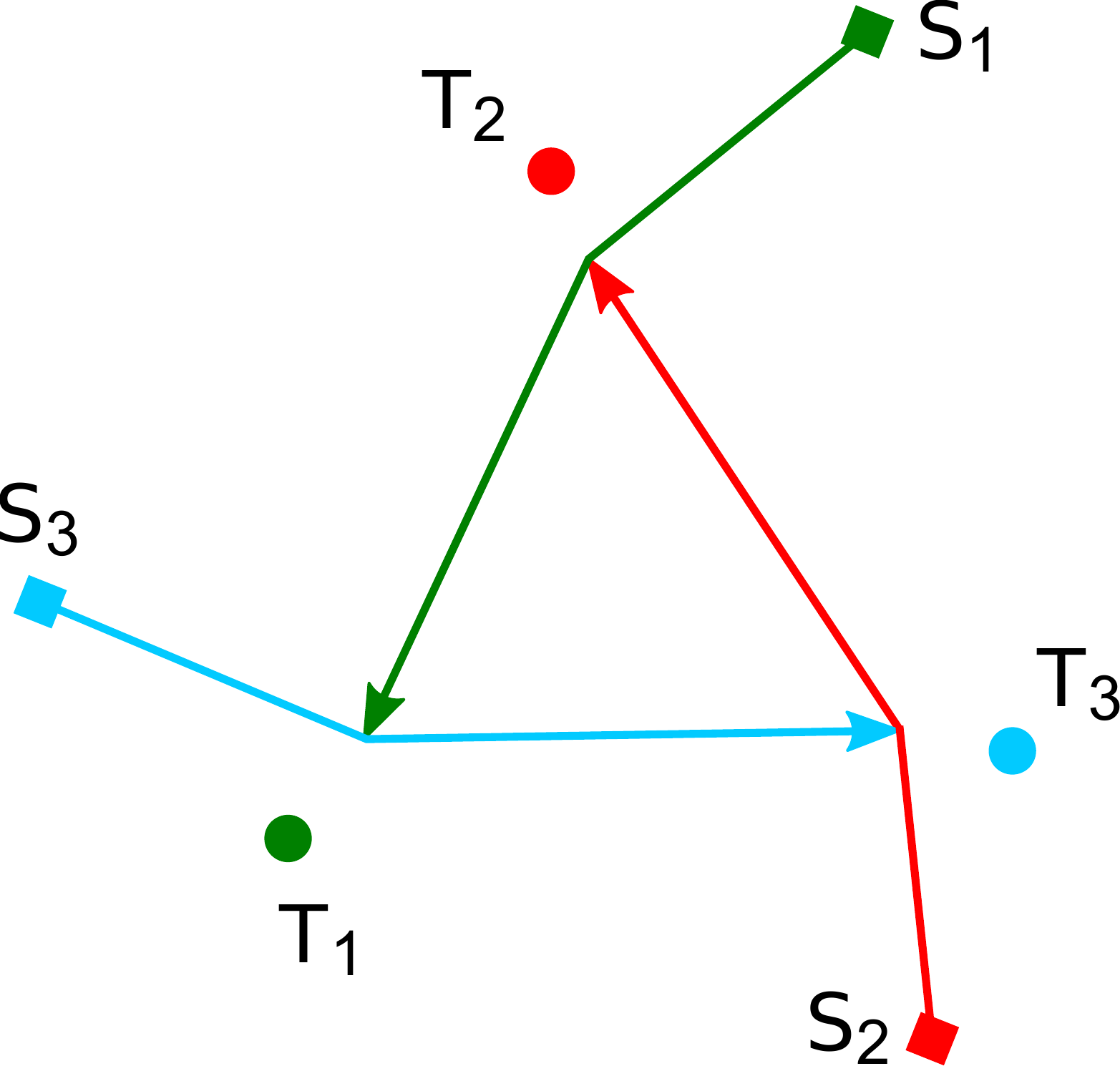}}
    \fbox{\includegraphics[height=2.5cm]{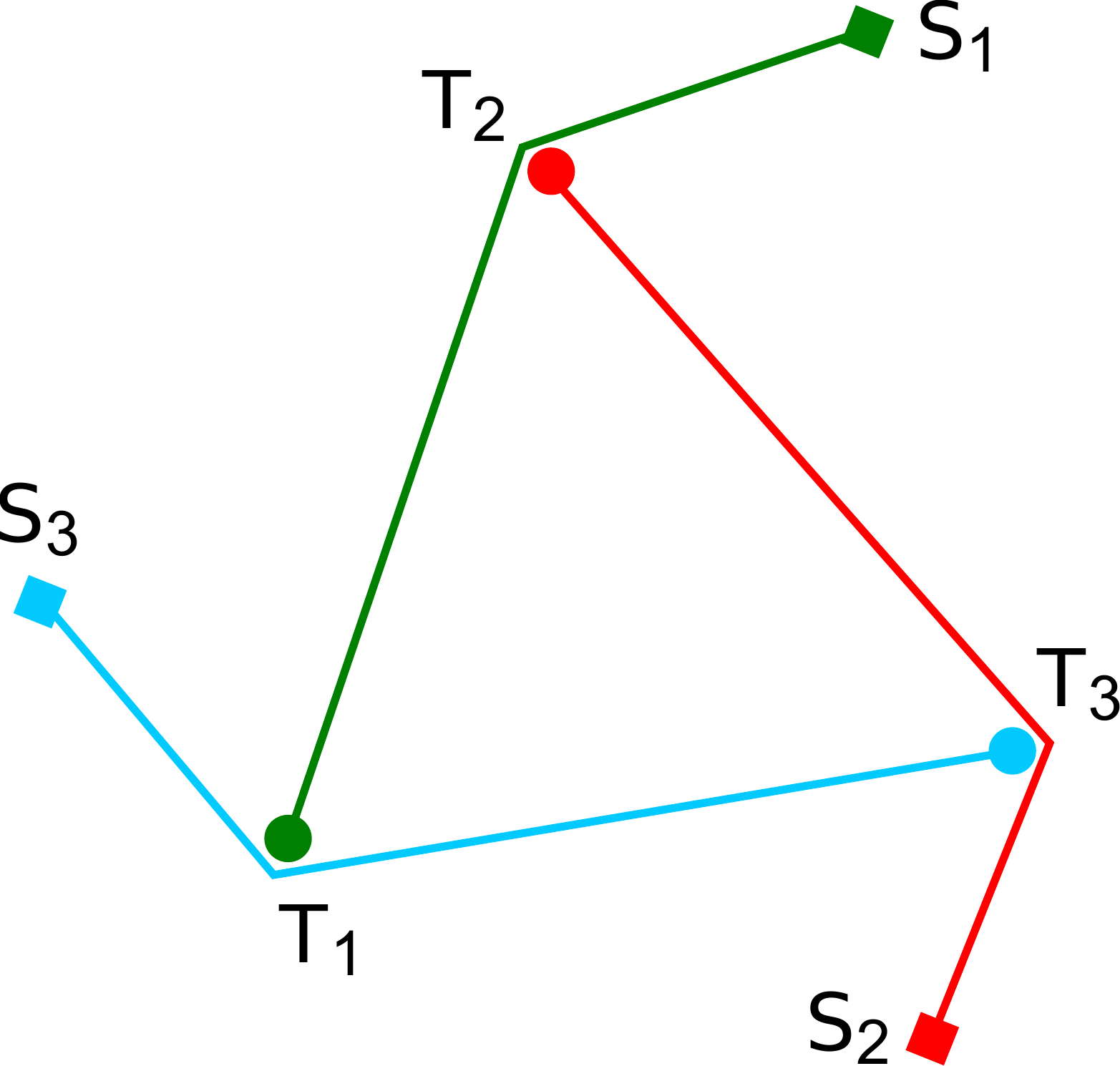}}}

  \caption{An example of planning for a target cable configuration by different modes of robot motion. 
  (a) \bentseq{} mode. The target cable configuration can be achieved when robots move in the order: $r_1$, $r_2$ and $r_3$. Dotted cable lines retract to solid ones since the cables are kept taut at all times.
  (b) \strseq{} mode. The target cable configuration \emph{CANNOT} be achieved by this mode of motion. The example shows how the final result deviates from the target cable configuration.
  (c) \strconc{} mode. The target cable configuration can be achieved when robots move along the $(S_i, T_i)$ straight-line segments at the same velocity.}
\label{fig:example-intro}
\end{figure}

\subsection{Illustrative example}

To illustrate some of the difficulties, consider a scenario with three
tethered robots as in Fig.~\ref{fig:tether-intro}: each robot $r_i$
($i=1,2,3$) tethered to its starting base $S_i$ must navigate from
$S_i$ to its target position $T_i$.  The target cable configuration as
shown in solid lines has to be achieved upon arrival of all robots. 

In \bentseq{} mode, the target cable configuration can be achieved,
for example, in the order: $r_1$, $r_2$ and $r_3$.  Since the cables
are kept taut at all times, when $r_1$ first moves along
$S_1 - T_2 - T_1$, its cable $C_1$ retracts to the straight-line
segment $S_1T_1$ when it reaches $T_1$.  Then $r_2$ moves along
$S_2 - T_3 - T_2$, pushing cable $C_1$ to the layout $S_1T_2T_1$ as it
reaches $T_2$, and $C_2$ retracts to $S_2T_2$.  Finally $r_3$ moves
along $S_3 - T_1 - T_3$ and pushes cable $C_2$ to the layout
$S_2T_3T_2$.  Since $r_1$ has already arrived at $T_1$, cable $C_3$
will bend around it and remain at its final layout as $S_3T_1T_3$.
The target cable configuration is thus achieved (see Fig.~\ref{fig:example-intro}(a)).  Note that the
configuration can also by achieved by \bentconc{} mode where the
robots move simultaneously following their cable lines.
 
However, simply following robots’ cable lines is not the most desirable 
motion scheme.
Compared to \texttt{Bent} motion, \texttt{Straight}-line robot motion offers the following benefits:
\begin{enumerate}
  \item[1)] Straight-line robot motion reduce traveling distance, which in turn improves time and energy efficiency. Take the two target cable configurations provided in this paper for example.
  To achieve the target cable configuration in Fig.~\ref{fig:tether-intro} and Fig.~\ref{fig:straight_sequential}(a), straight-line robot motion saves respectively 30\% and 20\% of the traveling distance compared to robot motion that follows cable lines.
  \item[2)] 
  It is well-known that in robot localization, rotational and curved translational motions induce larger estimation errors than straight motions, in relation to both Inertial Measurement Unit (IMU) and odometry~\cite{siegwart2011introduction}. Thus, by using the \texttt{Straight} mode, one can significantly reduce localization errors;
  \item[3)] Working environment safety could be enhanced as straight-line robot motion is more predictable for the workers who need to collaborate or work in the same environment with the robots;
\end{enumerate}
The first benefit motivates the work by Hert and
Lumelsky~\cite{hert1994ties}, where an algorithm is proposed for
robots to achieve a given target cable configuration by \strseq{}
motion.  \strseq{} mode improves the efficiency at the sacrifice of
realizability, which is illustrated in Fig.~\ref{fig:example-intro}(b).
Suppose the robots move in the order of $r_1$, $r_2$ and $r_3$ along
$S_1-T_1$, $S_2-T_2$ and $S_3-T_3$ respectively.  Upon arrival of
$r_1$ at $T_1$ and $r_2$ at $T_2$, we can obtain the same cable
configuration as under \bentseq{} motion, but after that, $r_3$ must
push $C_2$ in order to navigate to $T_3$, which violates the target
cable configuration, resulting in a deadlock.
In their later work~\cite{HERT1996187},
\strconc{} mode is considered, but the concurrent motions are not 
leveraged to solve deadlocks.  Yet, considering again the example 
of Fig.~\ref{fig:tether-intro}, concurrency can be leveraged to 
solve the deadlock as follows: $r_1$, $r_2$ and $r_3$ move along 
their $(S_i, T_i)$ straight-line segments at the same velocity 
(see Fig.~\ref{fig:example-intro}(c)).
By doing so, they will indeed pass respectively points $A$, $B$ 
and $C$ at the same time, which in turn satisfies the requirement 
that $r_1$ passes $A$ before $r_2$ does, $r_2$ passes $B$ before 
$r_3$ does, and $r_3$ passes $C$ before $r_1$ does.
The general algorithm to produce such a schedule is one of the 
main contributions of the present paper.

\subsection{Contributions and organization of the paper}

Our contribution in this work is three-fold.  First, we systematically
examine the realizability of a target cable configuration by the above
mentioned four modes of motion.  We then present a further analysis of
\strconc{} motion, which is the most practically-relevant mode.
Second, we propose algorithms to detect whether a given target cable
configuration is realizable by \strconc{} motion.  Third, our
algorithms return a valid coordinated motion plan if there exists a
solution. In addition, we provide some corrections to the algorithms
presented in~\cite{hert1997planar} to detect whether a cable
configuration is intersecting.

The remainder of this article is organized as follows.  In
Section~\ref{sec:literature}, we review related works in motion
planning for single and multiple tethered robots.  In
Section~\ref{sec:background}, we introduce the background on
non-intersecting cable configurations, formulate the planning problem
for multiple tethered robots, and survey the realizability of cable
configurations by the four modes of motion. In
Section~\ref{sec:proposed-algorithm}, we present our planning
algorithms for \strconc{} robot motion in detail. In
Section~\ref{sec:experiments}, we report experimental results in
simulation and in hardware.
Finally, we conclude our work in Section~\ref{sec:conclusion}.

\section{Related work} \label{sec:literature}

\subsection{Single tethered robot} \label{single}

Motion planning for a single tethered robot mainly falls into two
topics: 1) shortest path planning; 2) coverage and terrain
exploration.

\subsubsection{Shortest path planning}    \hspace*{\fill}

There has been active research on planning the shortest path for a 
tethered robot to navigate from a starting position to a target position.  
Methods developed by Xavier~\cite{xavier1999shortest} and Xu et
al.~\cite{Xu2012AnIA} are similar: they construct a visibility graph
based on the triangulation of the environment, and perform an
almost-exhaustive enumeration of graph paths in the selected homotopy
class.  In~\cite{igarashi2010homotopic,kim2014path,kim2015path}, the
authors approached the problem by using a conception of homotopy
classes of the cables.  The difference between these papers lies in
the way they identify homotopy classes.  Igarashi and
Stilman~\cite{igarashi2010homotopic} use distance from the initial
vertex that entirely based on a metric, which under certain circumstances
would fail to identify that two vertices in the graph represent two
different homotopy classes.  To avoid this failure, Kim et
al.~\cite{kim2014path,kim2015path} use a true homotopy invariant
(h-signature) to construct a h-augmented graph that explicitly bears
the topological information. The method is capable of producing an optimal
result based on a discretized workspace. Similar concepts are also applied 
to plan for manipulation and transportation of objects on the plane using 
tethered mobile robots~\cite{bhattacharya2015topological}.
In another recent
study~\cite{salzman2015optimal} that is built
on~\cite{kim2014path,kim2015path}, the authors 
use a visibility-graph based approach instead of the grid-based approach, 
and extend the point robot assumption to polygonal (translating) robot.
Unlike prior work that generally adopts off-line global discretization of 
the configuration space, Teshnizi and Shell~\cite{teshnizi2014computing} 
decompose the environment into cells using a subset of the visibility graph. 
The approach enables dynamic generation of necessary parts of the space, 
which in turn, improves efficiency of the proposed method.
In their later work~\cite{teshnizi2016planning}, they generalize the 
decomposition method to solve the planning problem for a robot attached 
to a stiff tether.

\subsubsection{Coverage and terrain exploration}    \hspace*{\fill}

Coverage and extreme terrain exploration have also been studied regarding 
planning for tethered robots. Shnaps and
Rimon~\cite{shnaps2014online} present an online algorithm considering
the case where a tethered robot has to cover an unknown planar
environment that contains obstacles, and returns back to the base.
Abad-Manterola et al.~\cite{abad2011motion} give a motion planner for an 
\emph{Axel} tethered robotic rover on steep terrains whose geometry is 
known a priori with high precision.  Their work is later extended to the 
case where the terrain information is not fully known beforehand~\cite{tanner2013online}.
In~\cite{mcgarey2016line}, a nonvisual tether-based localization and 
mapping technique based on FastSLAM is developed and tested on the 
Tethered Robotic eXplorer (\emph{Axel}). The online particle-filter 
approach is then expanded to TSLAM~\cite{mcgarey2017tslam}.
Using the same \emph{Axel} platform with customized tether controller, 
the authors demonstrate visual route following for extreme terrain 
exploration~\cite{mcgarey2017falling}. 
Visual route following for a tethered mobile robot is also discussed 
in~\cite{tsai2013autonomous}, without addressing tether management.

\subsection{Multiple tethered robots} \label{multiple}

There are two major streams pertaining to multiple tethered robots planning:
1) cooperative control and manipulation; 2) navigation from respective
starting to target positions.

\subsubsection{Cooperative control and manipulation}    \hspace*{\fill}

In ~\cite{yamashita1998cooperative}, the authors propose a method that
make multiple mobile robots accomplish tasks with the
cables act as tools connected between the robots.  
Similarly, in ~\cite{bhattacharya2011cooperative}, the authors address 
the cooperative control of two autonomous surface vehicles for oil
skimming and cleaning.

\subsubsection{Robot navigation}    \hspace*{\fill}

Khuller et al.~\cite{khuller1998graphbots} plan for a team of robots 
to move from any starting location
to any goal destination in a graph-structured space simultaneously 
while maintaining a particular configuration.

In some other applications, instead of keeping a particular formation,
each of the robots needs to navigate to a different location
for a specific task, where cable entanglement avoidance emerges as a
crucial challenge.
There are some remarkable works by Hert and Lumelsky in this field of
study.  In~\cite{hert1994ties,hert1999motion}, to achieve tangle-free
planning, robots are prioritized and scheduled to move sequentially in
a common planar and spatial environment that is free of obstacles.
However, the sequential motion of a group of robots is not efficient and
the planning may fall in deadlock under certain circumstances.
Although simultaneous robot motion is mentioned in their later work on 
planar workspaces~\cite{HERT1996187}, it is not leveraged to 
solve deadlocks. 
Another work by Sinden provides an algorithm for scheduling multiple
robots moving simultaneously while avoiding tether tangling, but only 
under some special cases~\cite{sinden1990tethered}.

\subsection{Summary}
In this work, we consider the robot navigation problem for multiple tethered planar mobile robots. 
We aim at examining the realizability of a target cable configuration 
by different modes of robot motion, and
developing path planning algorithms for the robots to achieve the
given target cable configuration by \strconc{} motion.

\section{Background: non-intersecting target cable configurations and
  realizability} \label{sec:background}


\subsection{Problem formulation}

Consider $n$ point robots on a planar workspace $\boldsymbol{W}$.
Each robot $r_i$, $i = 1, \dots, n$, must navigate from its starting
position $S_i$ to its target position $T_i$, while being attached to
$S_i$ by a flexible cable.  The cables are assumed to remain on the
ground and to be taut at all times; they can bend around other robots, 
and be pushed by other robots.

\begin{definition}[Cable line, cable configuration]
  Because of the assumption that the cables are taut and can only bend
  around point robots, the cable of any robot at any time instant is a
  polygonal chain, which we call the robot's \emph{cable line}. The
  \emph{final (target) cable line} of a robot is the (desired) state
  of the robot's cable line after all robots have reached their
  target positions.

  A \emph{cable configuration} consists of the cable lines of all
  robots at a given time instant. A \emph{final (target) cable
    configuration} is the (desired) cable configuration after all
  robots have reached their targets.
\end{definition}

\emph{Cable polygon} in the following definition was originally defined 
as \emph{retraction polygon} in~\cite{hert1994ties}.
\begin{definition}[Cable polygon]
  The cable polygon $\Pi_i$ of a given robot $r_i$ is formed by
  closing the robot's target cable line with the straight-line segment
  $(S_i,T_i)$ joining the robot's starting and target positions.
\end{definition}

We now introduce two important assumptions.

\begin{assumption}[Starting positions and cable polygons]
  The starting position $S_i$ of a robot $r_i$ does not belong to any
  cable polygon $\Pi_j$, $j\neq i$.
  \label{assumption:starting-positions}
\end{assumption}

Assumption~\ref{assumption:starting-positions} fully covers the
hypotheses of previous works~\cite{hert1994ties,hert1999motion}, which
assume that $\boldsymbol{W}$ is a convex polygon and that all starting
positions lay on the boundary of $\boldsymbol{W}$. The extension to
more general cases with obstacles present in the environment is
discussed in~\ref{Stage1-Obstacle}.

\begin{assumption}[No self loop]
  When a target cable line contains repeated waypoints, we say that it
  contains \emph{self loops} (see Fig.~\ref{fig:self loop}(a)) We
  assume that there is no self loop in any target cable line.
\end{assumption}

self loops are obviously redundant and there is always a better
solution.  An example is presented in Fig.~\ref{fig:self loop}: The
target cable configuration in Fig.~\ref{fig:self loop}(a) is valid
in the sense that no cable tangle happens, but it is highly
inefficient and can be replaced with an alternative simple cable
layout as shown in Fig.~\ref{fig:self loop}(b).

\begin{figure} [htp] \subfloat[An example of cable self loop. \label{subfig:self loop-no}]{%
    \includegraphics[width=1.6in]{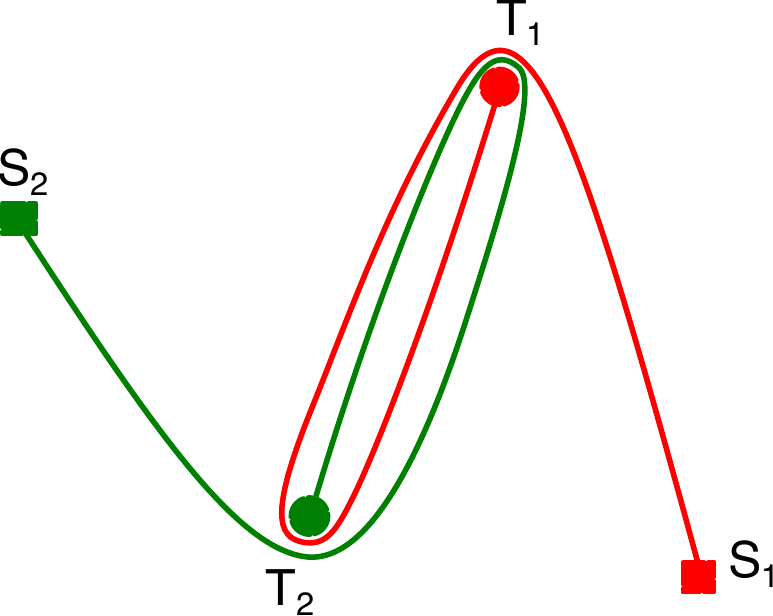} }
\hspace{3mm} \subfloat[An alternative simple cable
layout. \label{subfig:self loop-has}]{%
    \includegraphics[width=1.6in]{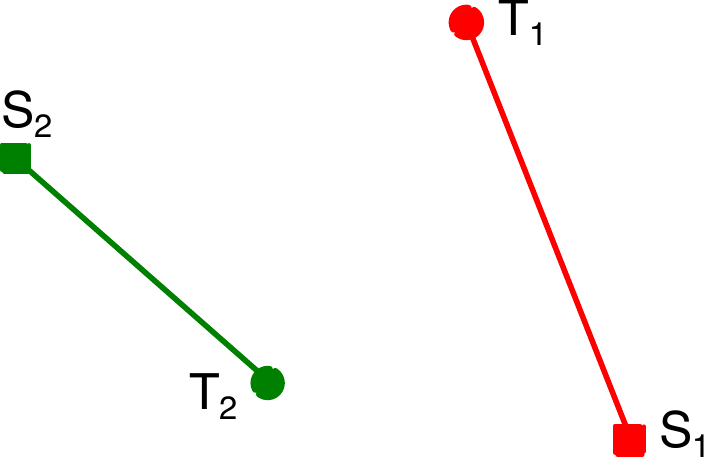} }
 \caption{An example of cable self loop and its alternative simple
layout.}
 \label{fig:self loop}
\end{figure}

\subsection{Non-intersecting target cable configurations}

We say that a cable configuration is \emph{non intersecting} if no
pair of cable lines in that configuration intersect each other.
In~\cite{hert1997planar}, the problem of finding non-intersecting
target cable configurations is formulated as a graph problem.
$G = \{V, E\}$ contains all starting points \{$S_1, \dots, S_n$\} and
target points \{$T_1, \dots, T_n$\} in its node set $V$, and all edges
in the form of ($S_i, T_j$) in its edge set $E$.  Finding
non-intersecting target cable configuration is now equivalent to
finding a set of consecutive edges from $S_i$ to $T_i$ for each robot
$r_i$ while observing cable property.  An algorithm using exhaustive
search method with pruning is suggested to identify and eliminate
every route intersection case, producing all possible non-intersecting
cable configurations.  Regarding the cable intersection detection
algorithm, readers may refer to~\cite{hert1997planar} for more
details.  Note however that the algorithm would fail under certain
circumstances. We discuss the failure cases and provide corrections in~\ref{Hert-Lumelsky}.

\subsection{Realizability of non-intersecting target cable
  configurations} \label{sec:realizability}

We now study the realizability of a given non-intersecting target
cable configuration (in the following text we shall omit the term
``non-intersecting'' when there is no possible confusion).

There are four reasonable modes of motion: \bentseq{}, \bentconc{},
\strseq{} and \strconc{}.  \texttt{Straight} and \texttt{Bent} refer to the
route that the robots have to follow during the navigation:
\texttt{Straight} denotes that the robots moves in straight lines from
their starting positions directly towards target positions, while
\texttt{Bent} indicates that the robots follow their target cable
lines. Meanwhile, \texttt{Sequential} and \texttt{Concurrent} refer to the schedule
of the robot motion: \texttt{Sequential} indicates that the robots are
prioritized and they have to move towards their target positions one
at a time, while \texttt{Concurrent} denotes that the robots move to target
positions simultaneously.  The solution spaces of these different
modes of motion is shown in Fig.~\ref{fig:solvability}, and a thorough
analysis is presented in the following sections.

\begin{figure}[htp]
\centering
\includegraphics[width=6cm]{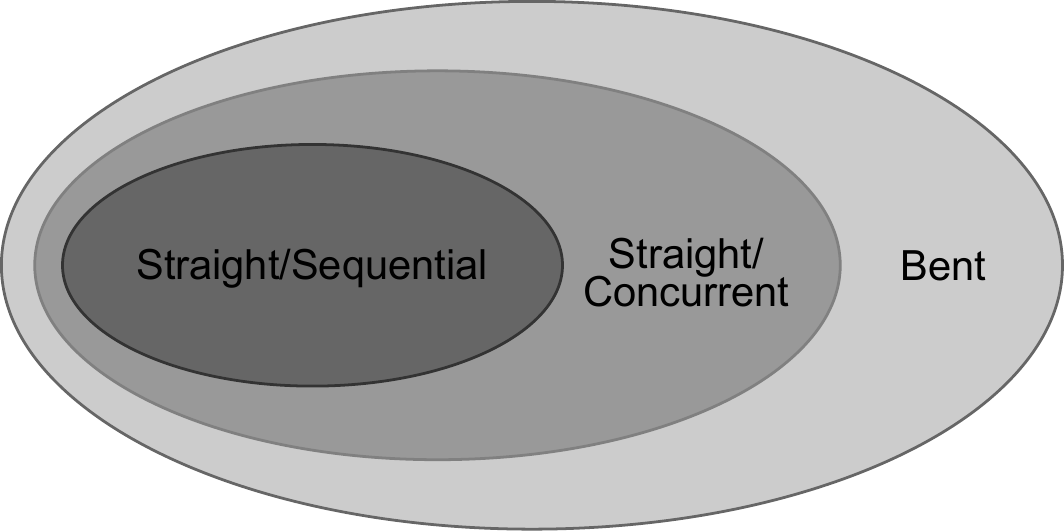}
\caption{Solution spaces of the different modes of motion
  (\texttt{Bent}, \texttt{Straight/Concurrent},
  \texttt{Straight/Sequential}).}
\label{fig:solvability}
\end{figure}

\subsubsection{Realizability by \bentseq{}} \label{subsub:bent_sequential} \hspace*{\fill}

It is proved in~\cite{hert1994ties} that: ``\textit{If each robot
  moves along its target cable line, the final cable configuration
  will be the same as the target cable configuration.}''  Note that
the proof is non-trivial because of the assumption that the cables
remain taut at all time.  A robot's cable line will thus deform and
auto-retract to maintain tautness if the robots that cause bends in
its target cable line are not yet in position.

\subsubsection{Realizability by \bentconc{}} \label{subsub:bent_concurrent} \hspace*{\fill}

It is straightforward to infer that \bentseq{} could be regarded as an
extreme case of \bentconc{}.  Since \bentseq{} mode of motion is
proved to be able to achieve any feasible target cable configuration,
it follows that \bentconc{} can produce a solution for any feasible
target cable configuration as well.

The most efficient motion for this mode would be ``just move'', which
means that the robots move along their target cable lines at the same
time.  When robot $r_i$ passes point $T_j$ on its target cable line,
if $T_j$ has already been occupied by robot $r_j$, cable $C_i$ wraps
 $r_j$ from the desired direction; if $r_j$ has not reached $T_j$
yet, $C_i$ will be retracted and will be pushed by $r_j$ later from
the correct side (see Fig.~\ref{fig:Ben-Con}). 

\begin{figure} [htp] 
\subfloat[]{%
    \includegraphics[width=1.5in]{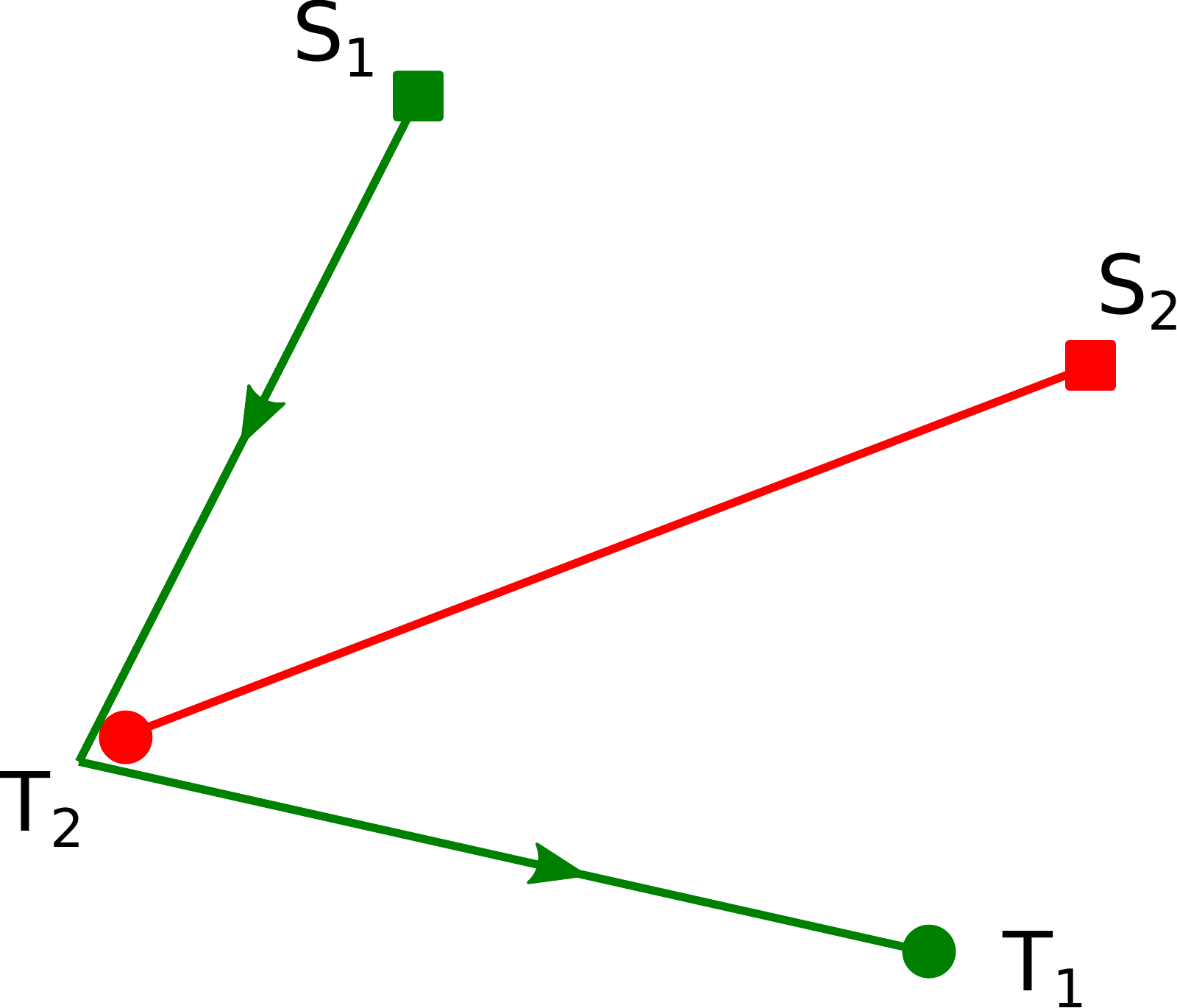} }
\hspace{3mm} 
\subfloat[]{%
    \includegraphics[width=1.5in]{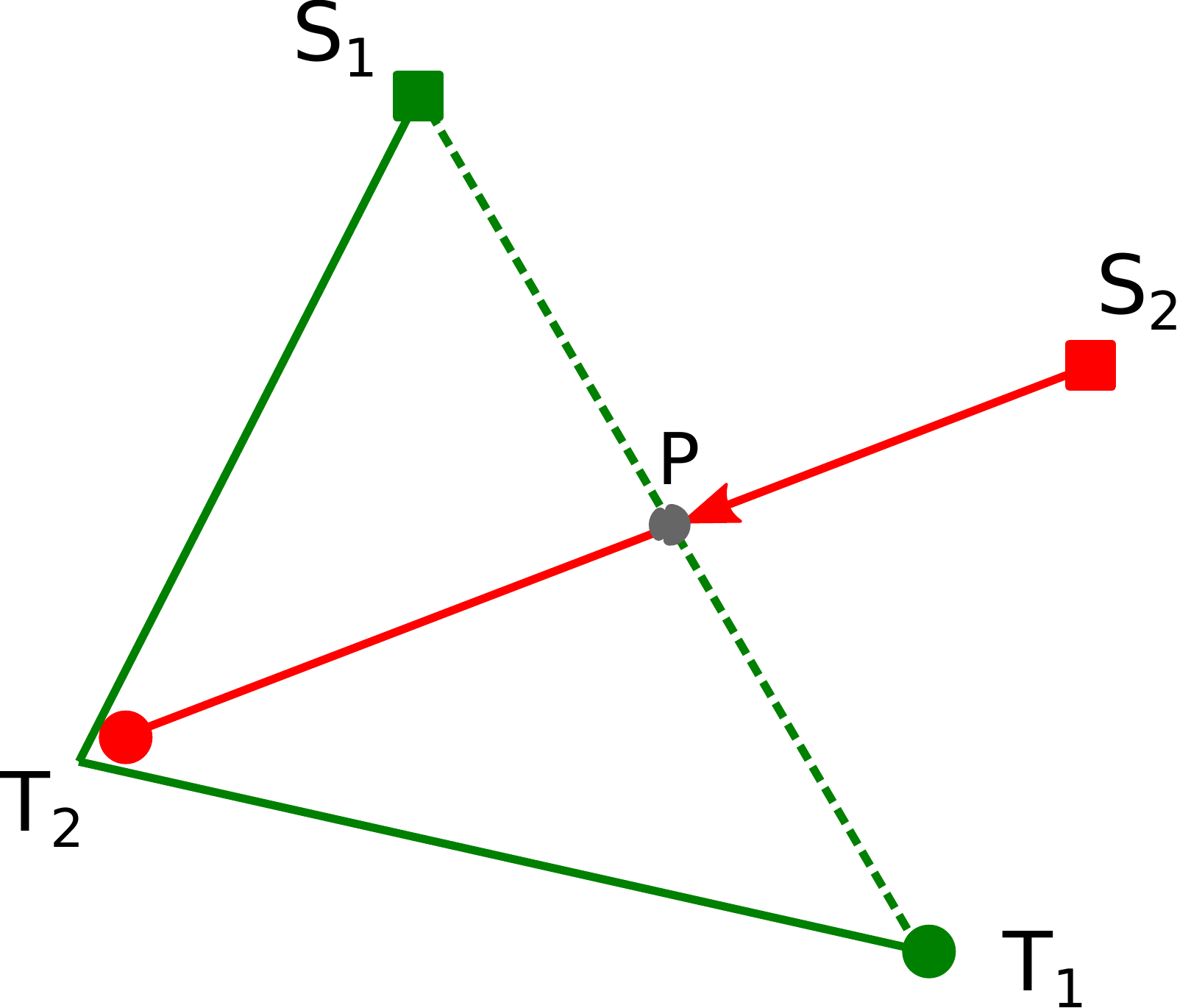} }
 \caption{An illustration of realizability by \bentconc{}. When $r_1$ passes $T_2$: (a) $r_2$ has reached $T_2$. $C_1$ wraps around $r_2$ at $T_2$; (b) $r_2$ has not reached $T_2$ yet. $C_1$ retracts to the dotted line and is pushed by $r_2$ at $P$ during $r_2$'s navigation later.}
 \label{fig:Ben-Con}
\end{figure}

Compared to \bentseq{} motion, \bentconc{} requires less execution
time as robots do not need to wait and move one by one.  However,
restricting the robots to move along their target cable lines is still
a waste of time and energy. The most efficient motion scheme would be
straight-line motion for robots to navigate from starting to target
positions.

\subsubsection{Realizability by \strseq{}} \label{subsub:straight_sequential} \hspace*{\fill}

In~\cite{hert1994ties}, Hert and Lumelsky provide an algorithm where
robots are prioritized and scheduled to move sequentially in a
straight line as much as possible.  In this case, ``\emph{each robot 
must move before any robot its cable
  line bends around has moved}''~\cite{hert1994ties}. With this
observation, a directed graph can be constructed: each vertex in the
graph represents a robot, and a directed edge from $v_i$ to $v_j$
implies that robot $r_i$ must move before $r_j$.  The order of robot
motions can then be determined by extracting at each step the root
node of the graph, which corresponds to the robot that does not have
to wait for any other robots.  In the event of loop occurrence in the
graph, the robot motion falls in deadlock situation. As an illustration, 
Fig.~\ref{fig:deadlock_str_seq} is the directed graph constructed from
Fig.~\ref{fig:tether-intro}. Since cable $C_1$, $C_2$ and $C_3$ have to
wrap around robot $r_2$, $r_3$ and $r_1$ at their target positions
respectively, a loop is formed in the directed graph, and no root node
can be extracted in this case.  Therefore, there is no feasible
solution for the target cable configuration by \strseq{} motion,
leading to a deadlock situation.

\begin{figure} [htp]
\centering
  \includegraphics[width=1.3in]{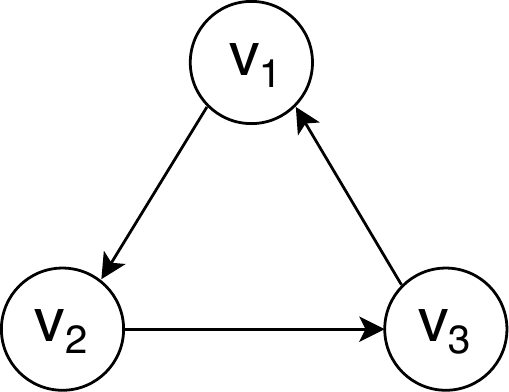}
   \caption{The corresponding directed graph~\cite{hert1994ties} for 
   the target cable configuration in Fig.~\ref{fig:tether-intro}.
   A directed loop implies that it is a deadlock situation for 
   \strseq{} motion.}

 \label{fig:deadlock_str_seq}
\end{figure}

This approach is intuitive, but it fails to detect all possible deadlocks.  
Consider the scenario depicted in Fig.~\ref{fig:straight_sequential}(a), which is a deadlock situation for \strseq{} motion.
However, with its corresponding directed graph as shown in 
Fig.~\ref{fig:straight_sequential}(c),
the robots' motion order is extracted, and eventually leads to
a wrong final cable configuration as shown in
Fig.~\ref{fig:straight_sequential}(b)~\cite{hert1994ties}.

\begin{figure} [htp]
  \subfloat[The given target cable configuration.\label{subfig:target}]{%
    \includegraphics[width=1.5in]{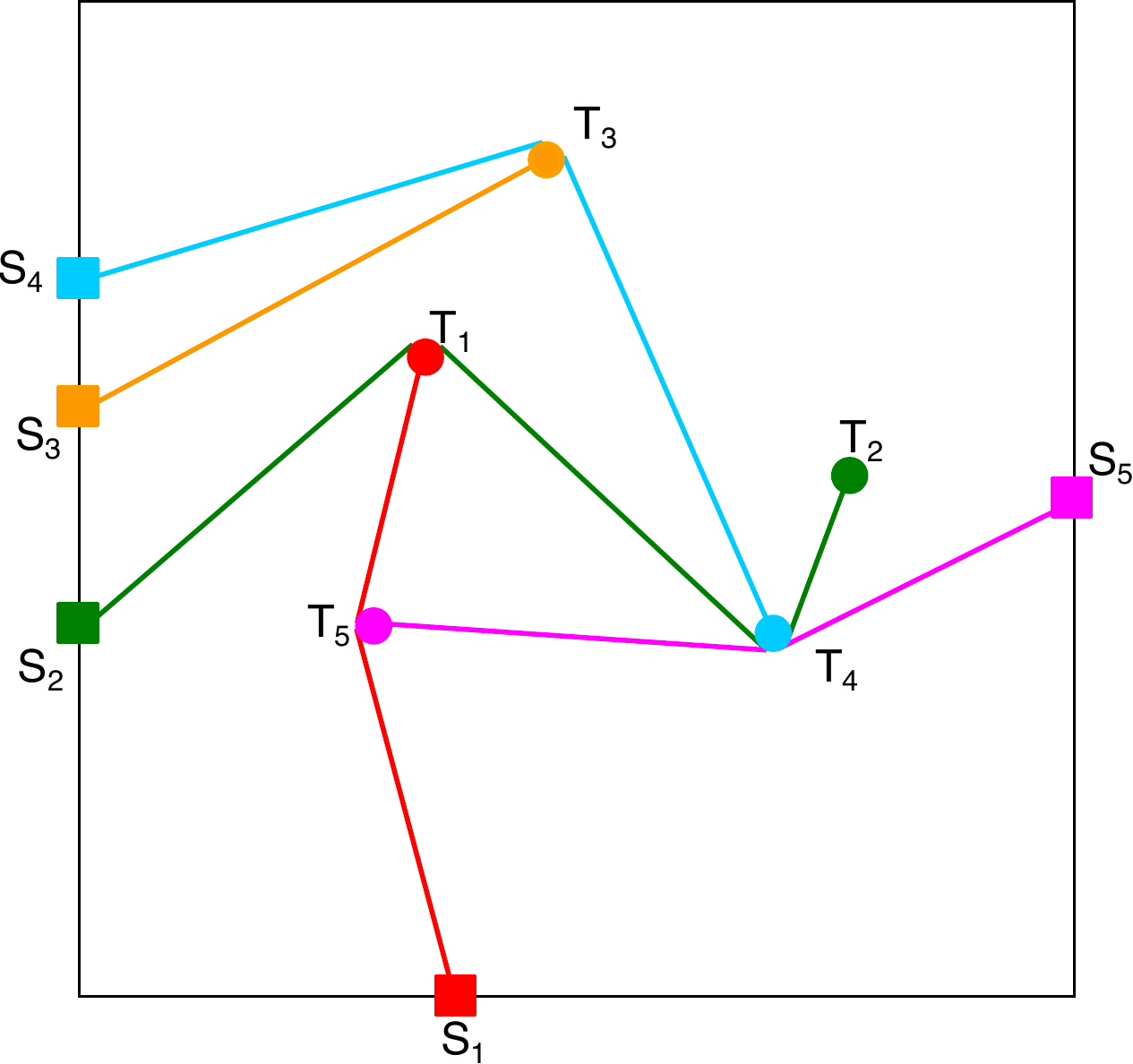}
 }
 \hspace{3mm}
  \subfloat[Actual final cable configuration following motion order extracted from the directed graph.\label{subfig:actual}]{%
    \includegraphics[width=1.5in]{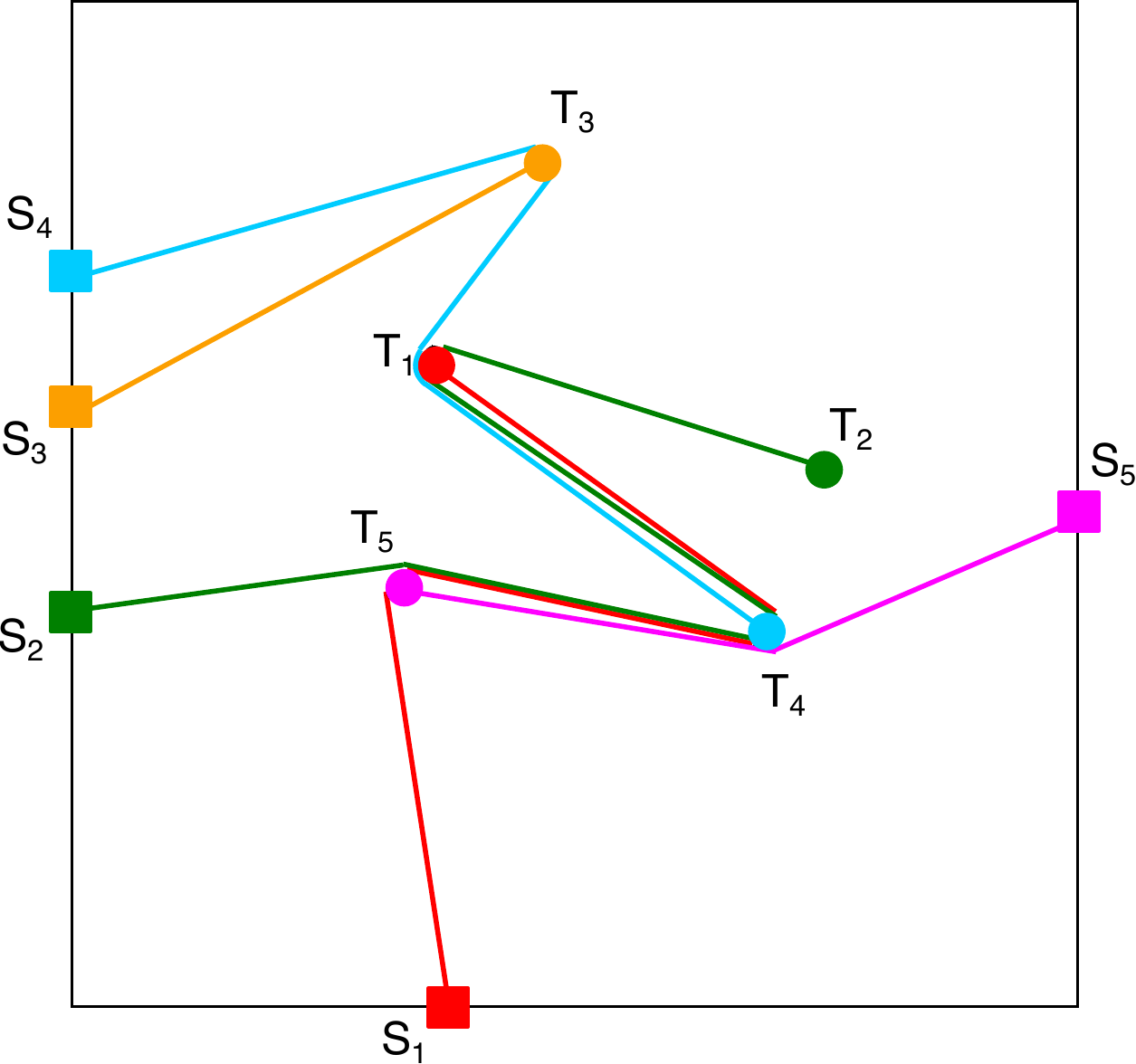}
 }
  \vspace{1mm}
  \centering
  \subfloat[Directed graph for order extraction.\label{subfig:directed_graph}]{%
    \includegraphics[width=2.2in]{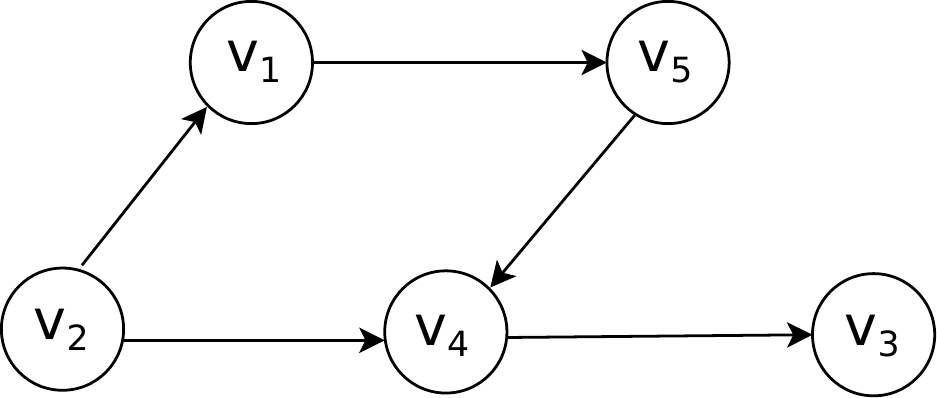}
 }
 
 \caption{An example of deadlock detection failure for \strseq{} motion with the algorithm proposed in~\cite{hert1994ties}.}
 \label{fig:straight_sequential}
\end{figure}

A later work by Hert and Lumelsky~\cite{HERT1996187} supplements the
previous algorithm. In addition to the previous rule,
they consider each pair of robots $r_i$ and $r_j$
that satisfies the following constraints:
\begin{enumerate}
\item there is no directed edge between $v_i$ and $v_j$ in the
  directed graph;
\item the straight-line paths of $r_i$ and $r_j$ cross each other.
\end{enumerate}
Then an edge from $v_i$ to $v_j$ is added to the graph if $T_j$ is in
the cable polygon of $r_i$, and vice versa.  This supplemented
constraint plays an important role in deadlock detection.
Recall the target cable configuration in 
Fig.~\ref{fig:straight_sequential}(a). With this supplemented constraint,
an edge from $v_4$ to $v_1$ should be added to 
Fig.~\ref{fig:straight_sequential}(c), forming a directed
loop $v_1-v_5-v_4-v_1$, which reveals the deadlock nature of the
target cable configuration by \strseq{} motion.  However, this
algorithm is still not complete in detecting all possible deadlock
situations. It fails to include the case when straight-line paths
of robots do not cross each other, but their target points are in each
other's cable polygon.
This case is a deadlock situation for \texttt{Straight} mode, and 
is illustrated in Fig.~\ref{fig:pair-interaction}(e, f).
We define it as \textit{pair deadlock}, and detail the conception in 
Section~\ref{sec:PIG}.

\subsubsection{Realizability by \strconc{}} \label{subsub:straight_concurrent} \hspace*{\fill}

\strseq{} motion improves the efficiency compared to the \texttt{Bent} modes due to its shorter paths, yet it is a waste of time if the target cable configuration can be achieved by concurrent robot motion. 

It is not difficult to infer that the solution space of \strconc{} mode
covers that of \strseq{} mode. In other words, target cable configurations 
that can be achieved by \strseq{} motion can also be achieved by 
\strconc{} motion, with a more efficient solution; deadlock conditions for 
\strconc{} motion must also be deadlock conditions for \strseq{} motion.

Although in~\cite{HERT1996187}, the simultaneous motion of robots is
discussed, it is not leveraged to solve deadlocks.  
Take Fig.~\ref{fig:straight_sequential}(a) as an example. 
As an output from their algorithm, the target cable configuration is a 
deadlock situation, and the path for $r_4$ is $S_4-T_1-T_4$.
However, this target cable configuration is shown to be realizable by 
\strconc{} motion with the algorithms we present in the next section.

\subsubsection{Summary} \label{subsub:summary} \hspace*{\fill}

In this paper, we propose algorithms that (a) detect whether a given
target cable configuration can be achieved by \strconc{} motion, and
(b) return a valid coordinated motion plan, if applies.  When there is 
no feasible solution, some robots will be selected to move along their
target cable lines to maintain the correctness of the final cable
configuration.

\section{Algorithm for \strconc{} motion} \label{sec:proposed-algorithm}

\subsection{Overview of the algorithm} \label{sub:summary}

The flow of the algorithm is as follows (see
Fig.~\ref{fig:algo-flow}): given an input target cable configuration,
\begin{figure}[htp]
\centering
\includegraphics[width=2.55in]{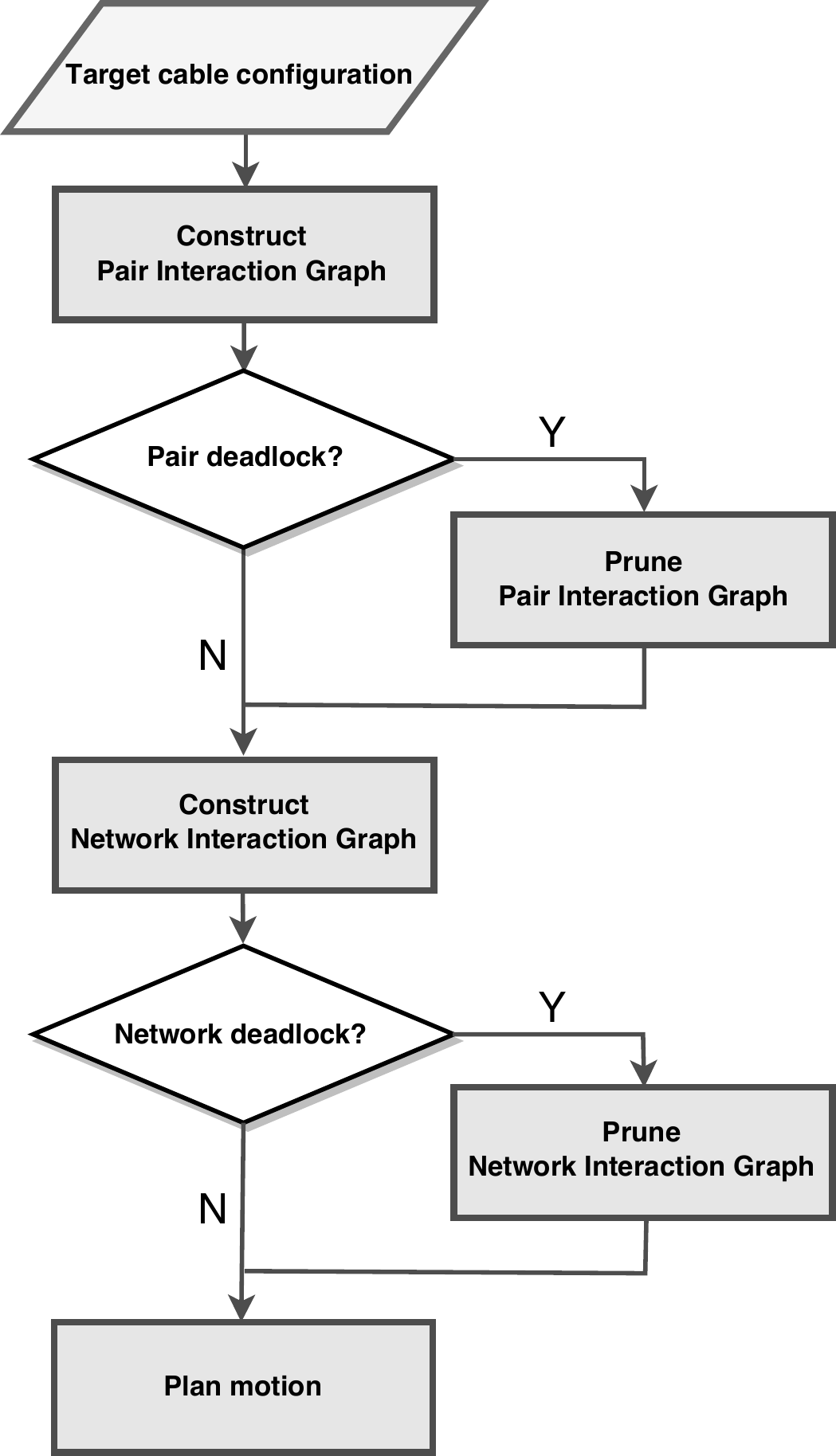}
\caption{Flow of the algorithm. \label{fig:algo-flow}}
\end{figure}

\begin{enumerate}
\item construct the Pair Interaction Graph (PIG), which encodes the
  interactions between pair of robots based on their cable polygons;
\item detect \emph{pair deadlocks} in the PIG, and solve them (prune
  the PIG) by assigning one robot of each pair deadlock to follow its
  target cable line;
\item from the pruned PIG, construct the Network Interaction Graph
  (NIG), which encodes the network interactions induced by the pair
  interactions, taking into account the intersections of robot paths;
\item detect \emph{network deadlocks} in the NIG, and solve them
  (prune the NIG) by assigning one robot involved in
  each network deadlock to follow its cable line;
\item from the pruned NIG, compute the final motion schedule.
\end{enumerate}

The different steps of the algorithm are detailed in the following
sections.

\subsection{Constructing the Pair Interaction Graph (PIG)} 
\label{sec:PIG}

We consider first the elementary interactions that occur between a
pair of robots. Those interactions are determined by the position of
one robot's target with respect to the other robot's cable polygon
(recall from Assumption~\ref{assumption:starting-positions} that a
robot's starting position is always outside other robots' cable
polygons).  Given a pair of robots $(r_i,r_j)$, there are thus four
types of pair interactions: (1) $T_i \notin \Pi_j$, $T_j\notin\Pi_i$
[Fig.~\ref{fig:pair-interaction}(a)]; (2) $T_i\in\Pi_j$,
$T_j\notin\Pi_i$ [Fig.~\ref{fig:pair-interaction}(c)]; (3)
$T_i\notin\Pi_j$, $T_j\in\Pi_i$; (4) $T_i\in\Pi_j$, $T_j\in\Pi_i$
[Fig.~\ref{fig:pair-interaction}(e)]. Note that types  2 and 3 are
symmetrical and will be treated together.

\begin{figure}[htp]
  \centering
  \includegraphics[width=3.5in]{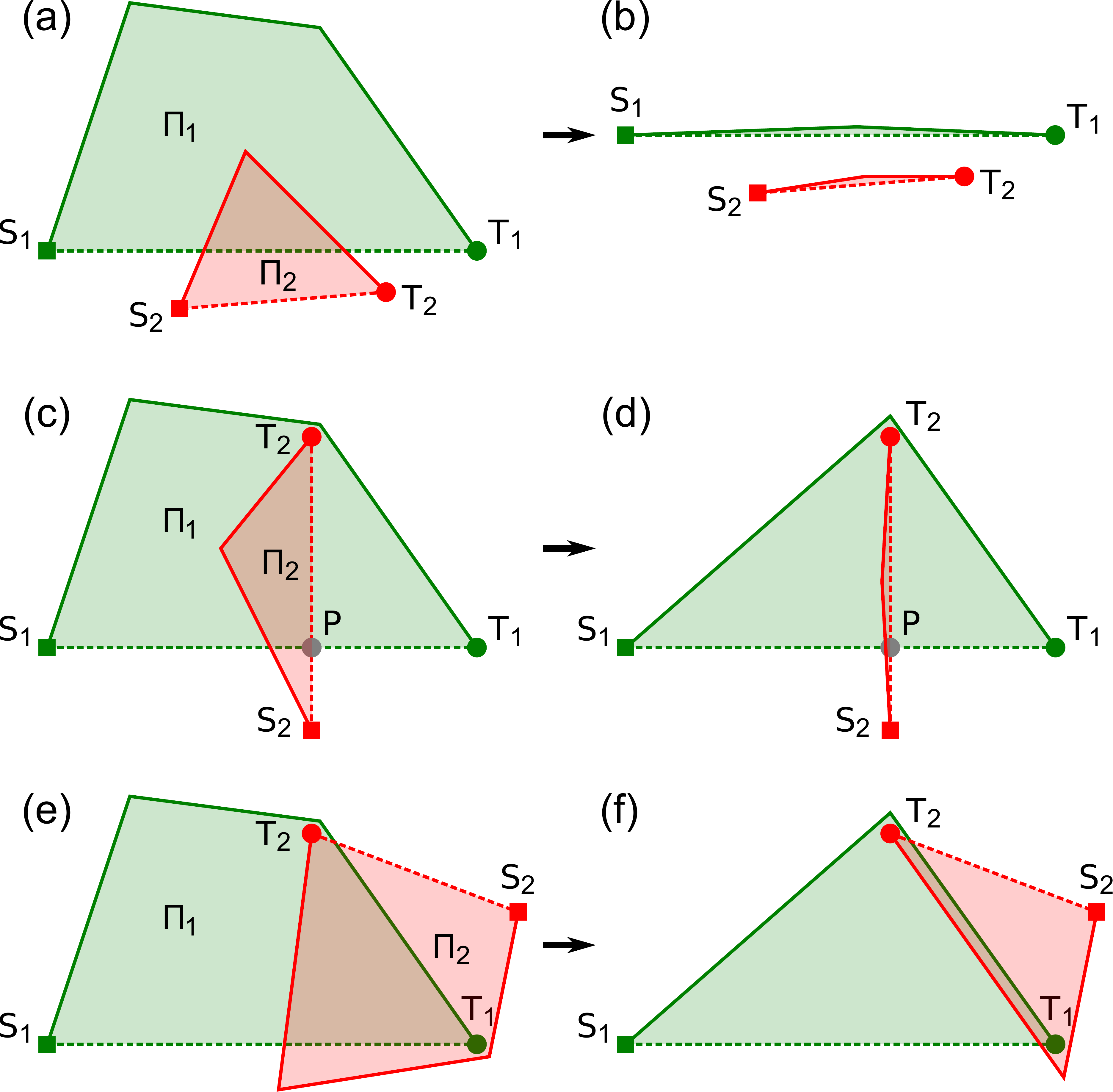} 
  \caption{Pair robot interactions are determined by the position of
    targets with respect to cable polygons. Left (a, c, e): cable
    polygons for type (1), (2-3), and (4) respectively. Right (b, d,
    f): retracted cable polygons after removing robots that are not
    involved in the pair interaction.}
  \label{fig:pair-interaction}
\end{figure}

Since we are interested in pair interactions, we suppose that the
other robots are temporarily removed from the problem (multi-robot
interactions will be addressed in Section~\ref{sec:NIG}). As a result,
the cable polygons are \emph{retracted} as in
Fig.~\ref{fig:pair-interaction}(b, d, f). Note that the retraction
process does not affect the interaction types just described.

The following propositions establish the motion priorities induced by
each type of pair interactions.

\begin{proposition}If $T_i \notin \Pi_j$ and  $T_j\notin\Pi_i$, then there
  are no priority induced by the pair interaction.
\end{proposition}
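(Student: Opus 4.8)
The plan is to reduce the claim, via the retraction, to a short geometric observation about two non‑crossing segments. First I would pin down the target cable lines in the retracted pair problem. Since every robot other than $r_i$ and $r_j$ has been removed, the only robot that the retracted target cable line of $r_i$ could bend around is $r_j$, and it bends around $r_j$ exactly when $T_j$ is one of its waypoints — in which case $T_j$ is a vertex, hence a point, of the cable polygon $\Pi_i$. In type~(1) we have $T_j\notin\Pi_i$ and $T_i\notin\Pi_j$, so neither retracted target cable line bends around the other robot: the retracted target cable line of $r_i$ is the straight segment $(S_i,T_i)$ and that of $r_j$ is the straight segment $(S_j,T_j)$. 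Consequently, in this pair problem the \strconc{} straight-line path of each robot \emph{coincides with its own target cable line}.

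Next I would argue that $(S_i,T_i)$ and $(S_j,T_j)$ are disjoint. These are the two cable lines of the retracted target cable configuration, so it is enough to know that this configuration is still non-intersecting; and retracting around the robots not involved in the pair cannot manufacture a crossing, essentially because (by Assumption~\ref{assumption:starting-positions}) each robot's starting point lies outside the other's cable polygon, so the two cables shrink monotonically without ever being forced across one another. Hence $(S_i,T_i)\cap(S_j,T_j)=\emptyset$. I expect this preservation-of-non-intersection step to be the only delicate point; one can sidestep it by noting that, in type~(1), each robot's \strconc{} straight-line path \emph{is} its target cable line, so the \strconc{} motion is a \bentconc{} motion and the target is realized by the argument of Section~\ref{subsub:bent_concurrent}.

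Finally the conclusion follows at once. In the retracted pair problem $C_i$ is a sub-segment of $(S_i,T_i)$ for as long as nothing pushes it, and the only thing that could push it, robot $r_j$, always lies on $(S_j,T_j)$, which never meets $(S_i,T_i)$; since $C_i\subseteq(S_i,T_i)$ holds at $t=0$, a standard connectedness argument on the set of times at which it holds shows that it holds for all time, and symmetrically $C_j\subseteq(S_j,T_j)$ always. Therefore, whatever the robots' velocities and whatever the order in which they move, the two cables finish in exactly their target configurations $(S_i,T_i)$ and $(S_j,T_j)$. Because the target is realized for every relative timing of the two robots, the pair interaction imposes no ordering constraint, i.e.\ it induces no priority.
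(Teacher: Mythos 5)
Your overall structure matches the paper's: reduce to the retracted pair problem, observe that under $T_i\notin\Pi_j$, $T_j\notin\Pi_i$ the retracted cable lines degenerate to the straight segments $(S_i,T_i)$ and $(S_j,T_j)$, and conclude that no priority is induced because these segments never meet. The problem is that the one nontrivial ingredient --- the disjointness $(S_i,T_i)\cap(S_j,T_j)=\emptyset$ --- is exactly what you do not prove. The paper establishes it in~\ref{appendix:cable-polygons} by a double parity argument based on Lemma~\ref{lemma:Jordan}: if the segments crossed at a point $P$, then $\calC_j$ (endpoints outside $\Pi_i$, no crossings with $\calC_i$) would meet $(S_i,T_i)$ an even number of times, so $(S_i,T_i)$ would meet the boundary of $\Pi_j$ an odd number of times, forcing $S_i$ or $T_i$ into $\Pi_j$, a contradiction. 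Your substitute --- ``retracting around the robots not involved in the pair cannot manufacture a crossing, the cables shrink monotonically without being forced across one another'' --- is an assertion, not an argument: each cable is tightened with respect to a \emph{different} obstacle set (only the other robot's target point remains), and a cable is allowed to enter the other robot's cable polygon through the chord $(S,T)$, so it is not at all evident that independent tightening of two disjoint polylines cannot create a crossing. Nothing in Assumption~\ref{assumption:starting-positions} alone delivers this; some global topological input such as the Jordan parity lemma is needed, and you never supply it.

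The proposed sidestep does not repair this, because it is circular. Invoking the realizability of \bentconc{} motion for the retracted pair problem presupposes that the retracted pair target (the two straight segments) is a feasible, non-intersecting target configuration --- which is precisely the disjointness fact in question. If the segments did cross, the two cables could not end up as two crossing straight segments (one robot would push the other's cable at the crossing point), the ``target'' would not be realized for arbitrary timings, and priority would matter; that is exactly what happens in interaction types 2 and 3. Indeed your own final paragraph quietly reuses the missing fact (``$(S_j,T_j)$, which never meets $(S_i,T_i)$''). To make the proof complete, either import the appendix proposition (via Lemma~\ref{lemma:Jordan}) or give an honest proof that retraction preserves non-intersection; as written, the key step is a gap.
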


\begin{proof}
  We prove in~\ref{appendix:cable-polygons} that, in this
  case, the two segments $(S_i, T_i)$ and $(S_j, T_j)$ do not
  intersect. Thus, the retracted cable polygons are reduced to two
  independent segments [Fig.~\ref{fig:pair-interaction}(b)], inducing no
  motion priority between $r_i$ and $r_j$.
\end{proof}

\begin{proposition}If $T_i\in\Pi_j$ and $T_j\notin\Pi_i$, then, with $P$
  denoting the intersection point between $(S_i, T_i)$ and
  $(S_j, T_j)$, $r_j$ must pass $P$ before $r_i$ does.
\end{proposition}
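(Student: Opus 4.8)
The plan is a proof by contradiction that leans on the rigidity of straight-line motion together with a taut-cable invariant. Write $\ell$ for the line through $S_j$ and $T_j$. Since $r_j$ moves straight from $S_j$ to $T_j$, its path lies on $\ell$; and since $P$ is the crossing point of the segments $(S_i,T_i)$ and $(S_j,T_j)$ (its existence, and the fact that it lies in the relative interior of both, being established in~\ref{appendix:cable-polygons}), each of $r_i$ and $r_j$ passes through $P$ exactly once, at well-defined times $\tau_i$ and $\tau_j$ respectively. The goal is to show $\tau_j<\tau_i$.

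First I would pin down what the target asks of $r_j$'s cable. In the retracted pair problem the only point at which $r_j$'s cable could bend is $T_i$, and self loops are excluded, so the target cable line of $r_j$ is either the segment $(S_j,T_j)$ or the bent chain $S_jT_iT_j$; the hypothesis $T_i\in\Pi_j$ excludes the first (a degenerate alignment of $T_i$ with $(S_j,T_j)$ being ruled out by general position), so the target cable line of $r_j$ bends nondegenerately around $r_i$ at $T_i$, and in particular $T_i\notin\ell$. Next I would record the invariant that does the work: as long as no robot has pushed it, $r_j$'s cable is the straight segment from $S_j$ to the current position $r_j(t)$, hence a sub-segment of $(S_j,T_j)\subset\ell$; and a cable can be deflected only by a robot in contact with it. The only robots present are $r_i$ and $r_j$; $r_j$ travels along $\ell$ and so never pushes its own cable off $\ell$; and the straight path of $r_i$ meets $\ell$ only at $P$, which $r_i$ occupies only at the instant $\tau_i$. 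Hence $r_j$'s cable can leave $\ell$ only at time $\tau_i$, and only if it then contains the point $P$, i.e.\ only if $r_j$ has already reached $P$ by time $\tau_i$ --- that is, only if $\tau_j\le\tau_i$.

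To conclude, suppose for contradiction that $\tau_i\le\tau_j$. By the invariant, $r_j$'s cable can leave $\ell$ only if $\tau_j\le\tau_i$, which together with the assumption forces $\tau_j=\tau_i$; but in that boundary case $r_i$ merely grazes the (moving) free end of $r_j$'s cable at $P$ and then crosses strictly to the far side of $\ell$, never to return, while the free end travels on with $r_j$ along $\ell$, so again no bend is created. Hence in every case $r_j$'s cable stays on $\ell$ for all time and ends in the straight layout $(S_j,T_j)$, contradicting the first step, since the target cable line of $r_j$ bends around $r_i$ at $T_i\notin\ell$. Therefore $\tau_j<\tau_i$, i.e.\ $r_j$ must pass $P$ before $r_i$ does. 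The two places I expect to need the most care are the precise statement of the taut-cable invariant in a concurrent-motion model --- so that ``only a robot in contact can deflect a cable'' is rigorous --- and the equality case $\tau_j=\tau_i$, where one must check that touching the moving endpoint of a taut cable and then separating from it leaves the cable undeflected. Note that the other hypothesis, $T_j\notin\Pi_i$, is not used for the priority itself (it only guarantees that $r_i$'s own cable stays straight along $(S_i,T_i)$) and merely serves to fix the interaction type.
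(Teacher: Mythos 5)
Your proof is correct and takes essentially the same route as the paper: the existence of the crossing point $P$ is imported from the appendix propositions, and the priority follows because $r_i$ can only push $r_j$'s cable at $P$, which requires $r_j$ to have already reached $P$ --- exactly the ``so that $r_2$ can push the cable line of $r_1$'' argument the paper gives by inspection of its figure. The difference is only one of rigor: you formalize the paper's ``one can then see'' step via the taut-cable invariant, the characterization of the retracted target cable line as the bent chain $S_jT_iT_j$, and the $\tau_i=\tau_j$ edge case (and, incidentally, you state the order correctly, whereas the paper's proof text momentarily swaps $r_i$ and $r_j$ before its example restores the direction given in the statement).
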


\begin{proof}
  We prove in~\ref{appendix:cable-polygons} that, in this
  case, the two segments $(S_i, T_i)$ and $(S_j, T_j)$ must intersect,
  at a point $P$. One can then
  see that $r_i$ must pass $P$ before $r_j$
  does. Fig.~\ref{fig:pair-interaction}(d) illustrates this statement:
  the motion paths $(S_1,T_1)$ and $(S_2,T_2)$ intersect at $P$ and
  $T_2\in\Pi_1$. Clearly, $r_1$ must pass $P$ before $r_2$ does, so
  that $r_2$ can ``push'' the cable line of $r_1$.
\end{proof}

\begin{proposition}If $T_i\in\Pi_j$ and $T_j\in\Pi_i$, then there is a
  deadlock, which we call a \emph{pair deadlock}. One of the two
  robots must diverge from the straight line motion.
\end{proposition}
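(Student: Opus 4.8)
The plan is to argue by contradiction: assume both $r_i$ and $r_j$ travel in straight lines from their starts to their targets, and derive an impossibility. First I would fix the geometry of a type-(4) pair. Since $T_i\in\Pi_j$ while $S_i\notin\Pi_j$ (Assumption~\ref{assumption:starting-positions}), the retracted polygon $\Pi_j$ has positive area, so the target cable line of $r_j$ is genuinely bent; in the two-robot problem it can only bend around $r_i$, so it has the normal form $S_j\!\to\!T_i\!\to\!T_j$, and symmetrically the target cable line of $r_i$ is $S_i\!\to\!T_j\!\to\!T_i$ (the configuration of Fig.~\ref{fig:pair-interaction}(e,f)). I would establish this normal form with the same planar arguments already used in~\ref{appendix:cable-polygons}.

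I would then split on whether the straight paths $(S_i,T_i)$ and $(S_j,T_j)$ cross. If they cross at a point $P$, I would invoke the mechanism of the preceding proposition (the case $T_i\in\Pi_j$, $T_j\notin\Pi_i$) twice. The key point is that a robot moving straight keeps its cable on the fixed segment from its start to its current position, so the only instant at which $r_i$ can push $r_j$'s cable (and thereby create the required bend of $C_j$ around $r_i$ at $T_i$) is when $r_i$ reaches $P$, and this only works if $r_j$ has already passed $P$; hence $T_i\in\Pi_j$ forces $r_j$ to pass $P$ before $r_i$, exactly as in the preceding proposition. By the symmetric argument, $T_j\in\Pi_i$ forces $r_i$ to pass $P$ before $r_j$. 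The two priorities are contradictory, so no straight-line schedule exists. (I should check that the preceding proposition's proof does not secretly use its extra hypothesis $T_j\notin\Pi_i$: it only uses that the pushing robot moves straight, so the argument transfers verbatim.) If instead the paths do not cross, then $(S_j,T_j)$ never meets the segment $(S_i,T_i)$, so at no instant does $r_j$ lie on $r_i$'s cable; that cable therefore is never pushed, stays straight, and ends as the segment $(S_i,T_i)$ rather than the bent target $S_i\!\to\!T_j\!\to\!T_i$, so the target configuration is not realized.

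In both cases simultaneous straight-line motion of the two robots is impossible, which is the claimed pair deadlock; and since the \bentseq{} and \bentconc{} modes realize any feasible target cable configuration, reassigning one of $r_i,r_j$ to follow its target cable line restores realizability --- hence ``one of the two robots must diverge from the straight line motion.'' I expect the main obstacle to be making the ``pushing'' arguments rigorous in both cases: one has to pin down the cable-dynamics model under concurrent motion precisely enough that ``$r_i$'s cable can only be pushed at $P$, and the resulting kink is carried to the pusher's final position'' becomes a proof rather than a picture, and one must dispose of the degenerate sub-configurations (e.g. $T_j$ lying on the segment $(S_i,T_i)$, or the two robots reaching $P$ simultaneously), which should be folded into, respectively, the crossing case and a separate collision argument.
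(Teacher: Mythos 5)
Your proposal is correct in outline, but it takes a genuinely different route from the paper. The paper's proof rests on a single topological fact, proved in~\ref{appendix:cable-polygons} via the parity argument of Lemma~\ref{lemma:Jordan}: under Assumption~\ref{assumption:starting-positions}, $T_i\in\Pi_j$ and $T_j\in\Pi_i$ together force the segments $(S_i,T_i)$ and $(S_j,T_j)$ \emph{not} to intersect. Hence your crossing branch is vacuous, and the paper never needs the double application of the type-(2)/(3) priority mechanism, nor the check that that mechanism survives dropping the hypothesis $T_j\notin\Pi_i$, nor the normal-form claim about the retracted two-robot cable lines. With non-intersection in hand, the paper concludes exactly as your non-crossing branch does: the two straight-line motions never interact, the final cables are the two independent straight segments, which differ from the bent retracted target cable lines, so the target configuration is unrealizable by straight motion and one robot must deviate (resolved, as you say, by sending it along its target cable line). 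What your route buys is independence from the Jordan-type lemma; what it costs is that the crossing branch — which is actually empty — carries the whole burden of making the cable-pushing dynamics rigorous (something the paper itself only does informally even for its Proposition~2), plus the extra bookkeeping for degenerate placements that you flag at the end. If you instead invoke the appendix non-intersection proposition for the type-(4) hypotheses, your argument collapses to the paper's shorter one.
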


\begin{proof}
  We prove in~\ref{appendix:cable-polygons} that, in this
  case, the two segments $(S_i, T_i)$ and $(S_j, T_j)$ do not
  intersect. This case implies a deadlock situation, which we call
  \emph{pair deadlock}, as illustrated in
  Fig.~\ref{fig:pair-interaction}(f).  The motion paths $(S_1, T_1)$
  and $(S_2, T_2)$ do not intersect, but $T_1\in\Pi_2$ and
  $T_2\in\Pi_1$.  When $r_1$ and $r_2$ move along their motion paths,
  the final cable configuration will be two independent segments
  $(S_1, T_1)$ and $(S_2, T_2)$, which are different from their
  retracted polygons as shown in Fig.~\ref{fig:pair-interaction}(f).
  In this case, $r_1$ ($r_2$) has to diverge from the straight line
  motion and move around $T_2$ ($T_1$) in order to push $\calC_2$
  ($\calC_1$) to the desired target cable configuration.
\end{proof}

Based on the above propositions, we can formulate
Algorithm~\ref{algo:directed_layout_graph} to encode pair
interactions. The algorithm constructs a directed graph, termed the
Pair Interaction Graph (PIG), whose vertices represent the robots, and
whose edges encode motion priority relationships: (1) if there is a
unidirectional edge pointing from vertex $v_i$ to vertex $v_j$, then
robot $r_i$ must pass the intersection point before $r_j$; (2) if
there is a bidirectional edge between vertex $v_i$ and vertex $v_j$,
then there is a pair deadlock.

\begin{algorithm}
\caption{Constructing the Pair Interaction Graph}
\label{algo:directed_layout_graph}
\begin{algorithmic}[1]
  \renewcommand{\algorithmicrequire}{\textbf{Input:}}
  \renewcommand{\algorithmicensure}{\textbf{Output:}}
  \REQUIRE Target cable configuration
  \ENSURE Pair Interaction Graph (PIG)

  \FOR{$i=1:n$}
     \STATE Add a vertex $v_i$ for each robot $r_i$\;
  \ENDFOR

  \FOR{$i=1:n-1$}
    \FOR{$j=i+1:n$}
      \IF{$(S_i, T_i)$\ \AND $(S_j, T_j)$\
                          intersect at $P$}
        \IF{$T_i \in \Pi_j$}
          \STATE Add an edge from $v_j$
                                        to $v_i$, labeled
                                        $\{v_j\stackrel{P}{\to}
                                        v_i\}$\; 
        \ELSE
          \STATE Add an edge from $v_i$
                                        to $v_j$, labeled 
                                        $\{v_i\stackrel{P}{\to}
                                        v_j\}$\; 
        \ENDIF
      \ELSIF{$T_i\in\Pi_j$ \AND $T_j\in\Pi_i$}
          \STATE Add a bi-directed edge between $v_i$ and $v_j$\;
      \ENDIF
    \ENDFOR
  \ENDFOR
\end{algorithmic}
\end{algorithm}

\subsection{Detecting and solving pair deadlocks}

When a pair deadlock occurs, one of the two robots must diverge from
the straight line motion. Consider the deadlock case of
Fig.~\ref{fig:pair-interaction}(f): the deadlock can be solved by (i)
one robot, say $r_1$, moves in a straight line from $S_1$ to $T_1$;
(ii) the other robot, $r_2$, follows its target cable line (start from
$S_2$, wrap around $T_1$, then go to $T_2$). Although motion paths
other than the robot target cable line are possible in stage (ii), we
shall not consider them to avoid too much complexity.

Accordingly, we propose the following steps to address pair
deadlocks:
\begin{enumerate}
\item Detect all pair deadlocks (by finding all bi-directional edges
  in the PIG), and put the robots involved in those deadlocks into a
  list $\widetilde{L}$;
\item Iteratively extract one robot, say $r_i$, from $\widetilde{L}$;
  put it in a list $L^*$; remove $r_i$ completely from the problem;
  update accordingly the PIG and $\widetilde{L}$; repeat until
  $\widetilde{L}$ is empty.
\end{enumerate}

At the end of this algorithm, one has a list $L^*$ of robots that will
be forced to follow their target cable lines after all other robots
have moved, and a PIG that no longer contains pair deadlocks (pruned
PIG).

In step~2, there are diverse methods to choose which robot to remove
first from the problem. One can for instance choose the robot that is
involved in the largest number of pair deadlocks, or the one whose
target cable line generates the largest detour as compared to the
straight line.

\subsection{Constructing the Network Interaction Graph (NIG)}
\label{sec:NIG}

We have seen in Section~\ref{sec:PIG} that pair interactions of type~1
do not induce priority, while pair interactions of type 4 induce
deadlock situations, which are subsequently solved by forcing one of
the robots of the pairs to follow its target cable line. As also
discussed, pair interactions of types 2 and 3 give rise to priority
relationships between the two robots of the pair at their intersection
point. These relationships can in turn lead to \emph{network}
interactions if some robots are simultaneously involved in more than
one pair interactions.

To illustrate, consider the scenario depicted in
Fig.~\ref{fig:network-interaction}(a). The pair interactions give rise
to the PIG of Fig.~\ref{fig:network-interaction}(c), which includes a
circular priority chain. Yet, this situation is not deadlocked, since
the following coordinated motion leads to the correct target cable
configuration: $r_1$, $r_2$ and $r_3$ move along their $(S_i, T_i)$
motion paths at the same velocity. By doing so, they will indeed pass
respectively points $A$, $B$ and $C$ at the same time, which in turn
satisfies the requirement that $r_1$ passes $A$ before $r_2$ does,
$r_2$ passes $B$ before $r_3$ does, and $r_3$ passes $C$ before $r_1$
does.
 
Consider now the scenario depicted in
Fig.~\ref{fig:network-interaction}(b). The pair interactions give rise
to the same PIG of Fig.~\ref{fig:network-interaction}(c). However, one
can see that the situation is deadlocked.  The intersection points
between each pair of robots have different relative positions along
robots' $(S_i, T_i)$ motion paths now: $C$ is closer to $S_1$ than
$A$, $A$ is closer to $S_2$ than $B$, and $B$ is closer to $S_3$ than
$C$.  Indeed, the constraint imposed by the three directed edges in
PIG (see Fig.~\ref{fig:network-interaction}(c)) implies that $r_1$
passes $C$ before $r_2$ passes $A$, $r_2$ passes $A$ before $r_3$
passes $B$, and $r_3$ passes $B$ before $r_1$ passes $C$.  It is thus
clear that the situation is deadlocked.

\begin{figure}[htp]
\centering
  \subfloat[A non-deadlock configuration for \strconc{}.\label{subfig:non_deadlock}]{%
    \includegraphics[width=1.55in]{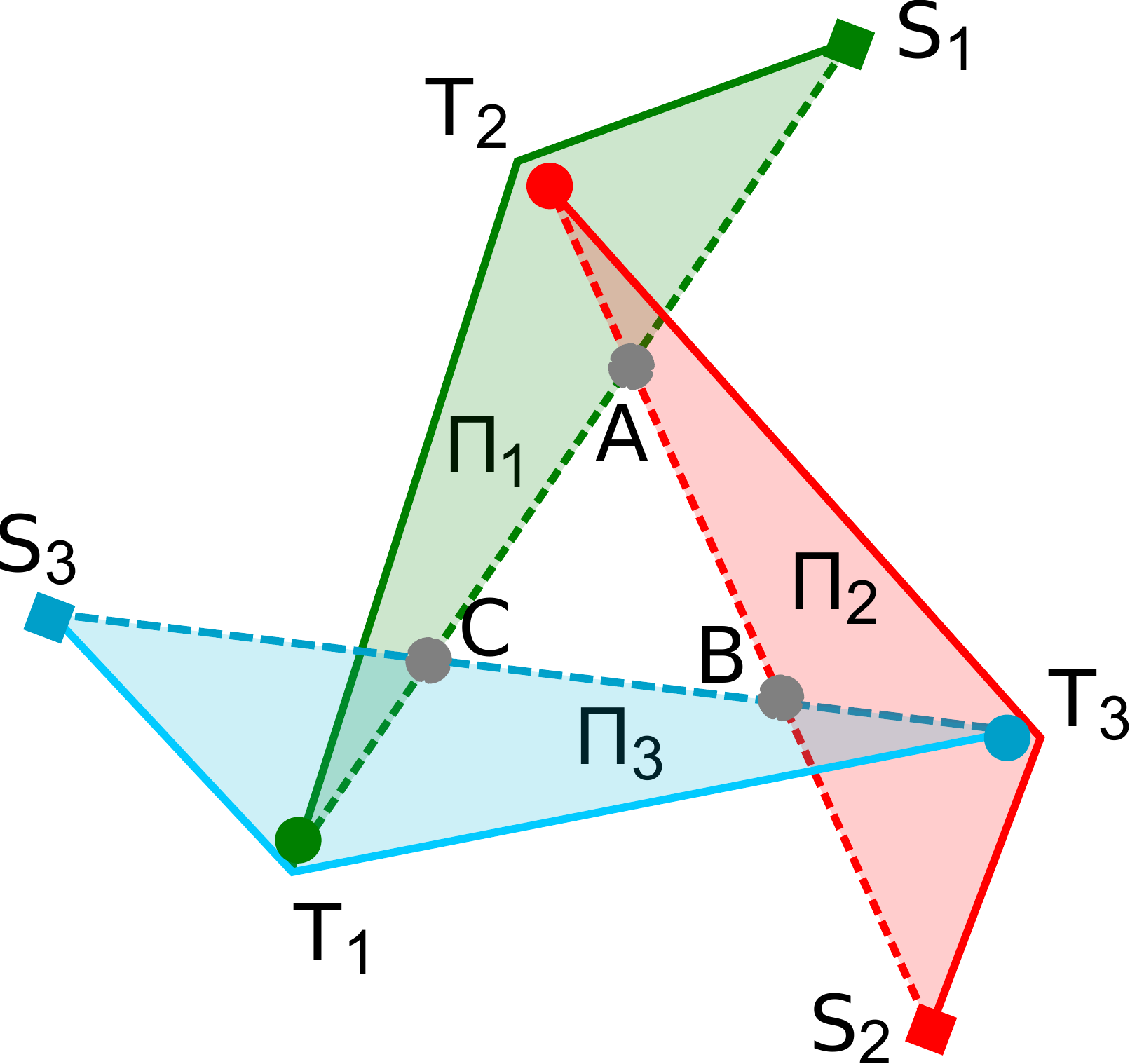}
 }
  \hspace{4mm}
    \subfloat[A deadlock configuration for \strconc{}. \label{subfig:has_deadlock}]{%
    \includegraphics[width=1.4in]{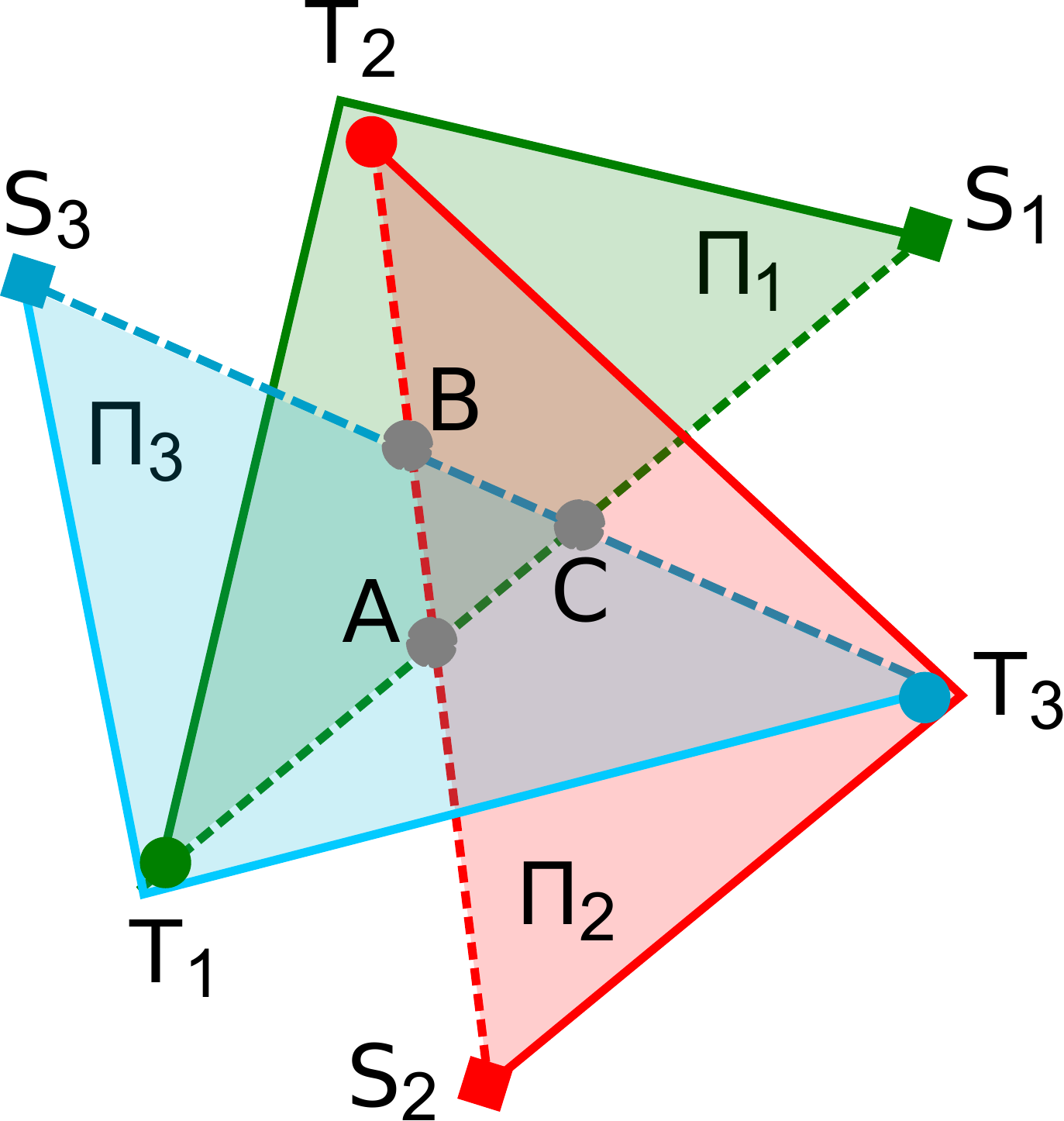}
 }

  \vspace{1mm}
  \subfloat[Pair interaction graph (same for both configurations). \label{subfig:graph_solve}]{%
    \includegraphics[width=1.4in]{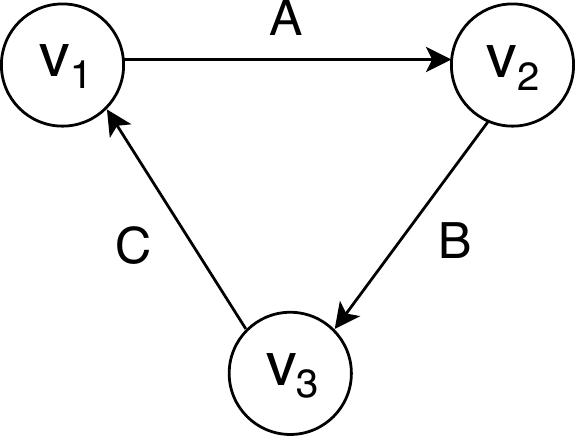}
 }

  \vspace{1mm}
  \subfloat[Network interaction graph for non-motion deadlock case.\label{subfig:dl-loop-detect-non}]{%
    \includegraphics[width=1.6in]{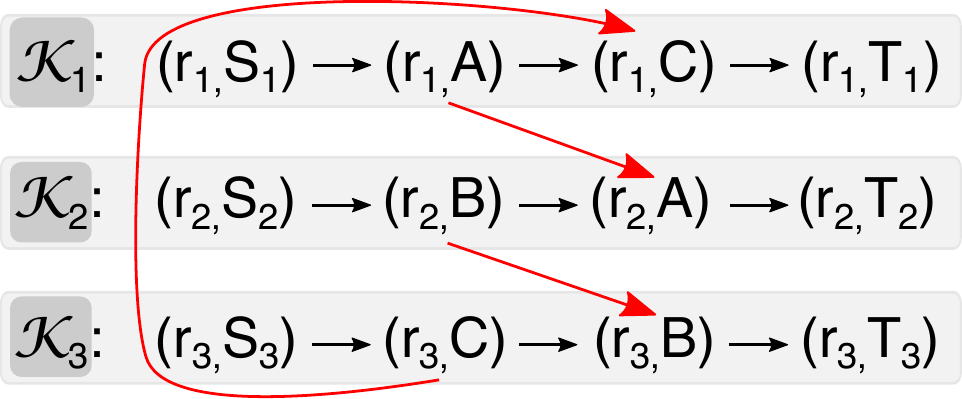}
 }
 \hspace{1mm}
  \subfloat[Network interaction graph for motion deadlock case.\label{subfig:dl-loop-detect-has}]{%
    \includegraphics[width=1.6in]{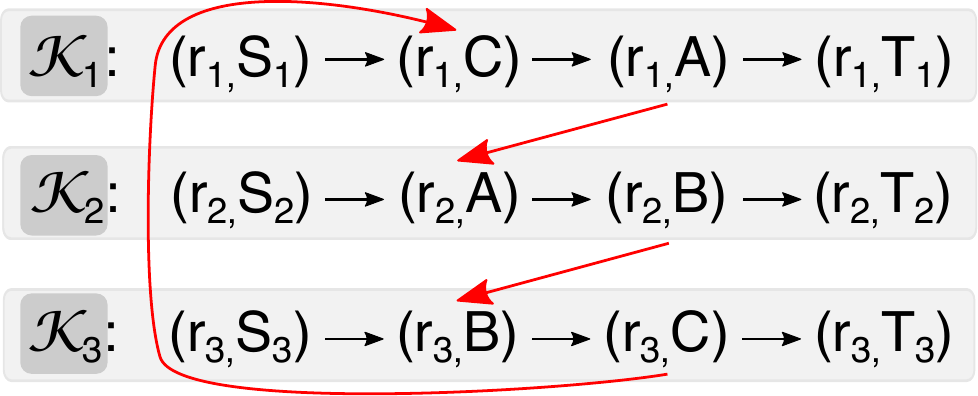}
 }
 \caption{Network interactions are determined, not only by the priority
   relationship between the robots at their intersection point, but
   also by the relative arrangement of the intersection points.}
 \label{fig:network-interaction}
\end{figure}

It appears from the two scenarios discussed above that correctly
handling network interactions requires taking into account, not only
the priority relationship between the robots at their intersection
points, but also the \emph{relative arrangement of those intersection
  points}. To encode this information, we propose to construct the
Network Interaction Graph (NIG) whose vertices are the \emph{events}
``robot $r_i$ passes intersection point $P$'', denoted $(r_i,P)$, and
whose directed edges encode the temporal relationships among the
events, as follows [see Fig.~\ref{fig:network-interaction}(d, e) for
illustrations]:

\begin{enumerate}
\item For each robot $r_i$ of the PIG, add to the NIG a chain
  $\calK_i$:
  $(r_i,S_i) \to (r_i,P_1) \to \dots \to (r_i,P_m) \to (r_i,T_i)$,
  where $P_1, \dots, P_m$ are the ordered intersection points along
  the segment $(S_i, T_i)$;
\item For each edge $\{v_i\stackrel{P}{\to} v_j\}$ of the PIG, add to
  the NIG a directed edge from $(r_i,P)$ (in $\calK_i$) to
  $(r_j,P)$ (in $\calK_j$).
\end{enumerate}

\subsection{Detecting and solving network deadlocks}

We can now state and prove the following proposition.

\begin{proposition}
  If the NIG contains a directed cycle, then there exists a network
  deadlock.
\end{proposition}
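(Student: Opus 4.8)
The plan is to show the contrapositive in spirit: a directed cycle in the NIG encodes a set of temporal constraints on the events $(r_i,P)$ that cannot be simultaneously satisfied by any valid \strconc{} motion, hence the situation is deadlocked. First I would recall the two types of edges in the NIG and what each one means as an inequality on passage times. Write $t(r_i,P)$ for the time at which robot $r_i$ passes point $P$. A chain edge $(r_i,P)\to(r_i,Q)$ inside $\calK_i$ means $P$ precedes $Q$ along the segment $(S_i,T_i)$, so since $r_i$ moves monotonically along that segment (it is a straight-line motion from $S_i$ to $T_i$), we get the \emph{strict} inequality $t(r_i,P) < t(r_i,Q)$; in fact the endpoints $(r_i,S_i)$ and $(r_i,T_i)$ pin down that $r_i$ starts at $S_i$ and ends at $T_i$. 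A priority edge $(r_i,P)\to(r_j,P)$ comes from a PIG edge $\{v_i\stackrel{P}{\to}v_j\}$, which by Proposition~2 encodes the requirement ``$r_i$ must pass $P$ before $r_j$ does'', i.e. $t(r_i,P) < t(r_j,P)$. The key observation is that \emph{every} edge of the NIG, of either type, gives a strict inequality $t(x) < t(y)$ on its endpoints.

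Next I would run around the cycle. Suppose the NIG contains a directed cycle $x_0 \to x_1 \to \dots \to x_{k-1} \to x_0$. Applying the observation to each edge yields $t(x_0) < t(x_1) < \dots < t(x_{k-1}) < t(x_0)$, a contradiction. Therefore no assignment of passage times consistent with a \strconc{} motion can satisfy all the constraints encoded in the cycle. Since any \strconc{} motion that realizes the target cable configuration must, by the pair-interaction analysis of Section~\ref{sec:PIG} (Propositions~2), respect every priority relation recorded in the PIG — and hence every edge of the NIG — no such realizing motion exists. This is precisely what we mean by a network deadlock.

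I would then make explicit the one point that needs a little care, which I also expect to be the main obstacle: why the priority constraints of the PIG are not only \emph{necessary} but must hold \emph{with the intersection points in their true relative positions along each segment}. Concretely, one must argue that when a robot $r_i$ is required by the PIG to pass several intersection points $P_1,\dots,P_m$ in a fixed spatial order along $(S_i,T_i)$, any straight-line motion of $r_i$ visits them in exactly that temporal order, so the chain $\calK_i$ is a faithful record of the constraints and cannot be ``reordered'' to break the cycle. This is where the two scenarios of Fig.~\ref{fig:network-interaction} differ: the PIG alone is identical, but the relative arrangement of $A,B,C$ along the segments — captured by the chains $\calK_i$ — is what creates (or avoids) the cyclic dependency. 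So the argument is: chain edges are forced by monotonicity of straight-line motion, priority edges are forced by Proposition~2, and a directed cycle through any mixture of the two produces the impossible strict inequality loop above. The remaining (easy) part is to note that the labelling of the NIG vertices by events rather than by robots is what lets chain and priority edges be composed into a single cycle in the first place; without the event-level resolution the cycle would be invisible, exactly as in the \strseq{} failure cases discussed earlier.
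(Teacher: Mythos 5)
Your proof is correct and follows essentially the same route as the paper: each NIG edge (intra-chain by monotonicity of straight-line motion, inter-chain by the pair-interaction priority) encodes a strict temporal precedence, and a directed cycle yields an unsatisfiable chain of strict inequalities, hence a deadlock. The only cosmetic difference is that you instantiate explicit passage times $t(\cdot)$ where the paper argues abstractly with the strict order relation ``$<$''; the substance is identical.
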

\begin{proof}
  Note that an edge $(r_i,P)\to (r_i,P')$ added in step 1 of the
  algorithm to construct the NIG (intra-chain edge) encodes the
  requirement that robot $r_i$ must pass point $P$ before point
  $P'$. We note this relationship $(r_i,P) < (r_i,P')$.

  An edge $(r_i,P)\to (r_j,P)$ added in step 2 (inter-chain edge)
  encodes the requirement that robot $r_i$ must pass point $P$ before
  robot $r_j$ passes $P$. We note this relationship
  $(r_i,P) < (r_j,P)$.

  Observe that the relationship ``$<$'' just defined, by its temporal
  nature, is a strict order relationship in the set of events, i.e. it
  verifies the irreflexivity, antisymmetry, and transitivity
  properties.

  Consider now a directed cycle in the NIG. Such a cycle induces a
  cycle for the strict order relationship ``$<$'', which is clearly
  not satisfiable. Therefore, there exists at least one priority
  requirement that cannot be satisfied in the cycle, i.e. a deadlock
  situation.
\end{proof}

When there is a network deadlock, one of the involved robots has to
diverge from the straight-line motion, similarly to the pair deadlock
case. Accordingly, we propose the following steps to address network
deadlocks:
\begin{enumerate}
\item Detect all network deadlocks (by finding all directed cycles in
  the NIG), and put the robots involved in those deadlocks into a list
  $\widetilde{L}$;
\item Iteratively extract one robot, say $r_i$, from $\widetilde{L}$;
  put it in the list $L^*$ (the same as used previously for solving
  pair deadlocks); remove $r_i$ completely from the problem; update
  accordingly the NIG and $\widetilde{L}$; repeat until
  $\widetilde{L}$ is empty.
\end{enumerate}

At the end of this algorithm, one has a list $L^*$ of robots that will
be forced to follow their target cable lines after all other robots
have moved, and a NIG that no longer contains network deadlocks (pruned
NIG).

Similarly to the case of pair deadlocks, there are diverse methods, in
step~2, to choose which robot to remove first from the problem. One
can for instance choose the robot that is involved in the largest
number of network deadlocks, or the one whose target cable line
generates the largest detour as compared to the straight line.

\subsection{Computing the final motion schedule} \label{sec:schedule}

When we coordinate the motion of a pair of robots at the intersection
point of their $(S, T)$ motion paths, we could either make the 
lower priority robot wait until the higher priority robot 
passes the point, or adjust the speed of the robots to produce 
continuous motions.  In this work, the waiting scheme is adopted for
illustration. A coordinated motion plan with shortest traveling time 
is generated if the target cable configuration can be achieved by 
\strconc{} robot motion.

In the following procedures, $(r_i, P^-)$ and $(r_i, P^+)$ denote the
predecessor and successor of $(r_i, P)$ on the chain $\calK_i$
respectively.  $\calT_i$ represents the time-line for robot $r_i$, and
$T_{i,P}$ is the time-stamp on $\calT_i$, upon which robot $r_i$ starts
to navigate from $(r_i, P)$ to $(r_i, P^+)$.  Weight $w_{i,P}$ of node
$(r_i, P)$ indicates the amount of time that $r_i$ takes to travel
from $(r_i, P^-)$ to $(r_i, P)$.

From the NIG (which we suppose does not contain any cycle), we can
extract a coordinated motion schedule for the robots to achieve the
target cable configuration as follows:
\begin{enumerate} 
\item For each chain $\calK_i$ of the NIG, $i = 1, \dots, n$, calculate
  and assign the weight to every node (weight is $0$ for $(r_i, S_i)$);
\item Initialize time-line $\calT_i$ for each robot $r_i$,
  $i = 1, \dots, n$;
\item For the root node $(r_i, R)$ in $\calK_i, i = 1, \dots, n$, 
  \begin{enumerate}
    \item[(a).] If there is no incoming inter-chain edge from any 
      $\calK_j, i \neq j$, append traveling time $w_{i,R}$ to $\calT_i$, 
      and remove $(r_i, R)$ from the NIG; 
    \item[(b).] If there is an incoming inter-chain edge from an already
      removed node $(r_j, P)$ in $\calK_j$, $w_{i,R}$ must be
      appended to $\calT_i$ at a time-stamp that is \emph{no earlier than}
      $T_{j,P}$\footnote{Here we use a relaxed chronological order
      \emph{no earlier than} for a directed inter-chain edge from
      $(r_j, P)$ in $\calK_j$ to $(r_i, R)$ in $\calK_i$ in order to
      exempt $r_i$ from the unnecessary waiting time for the whole
      duration $w_{j,P}$.}, after which $(r_i, R)$ and this incoming 
      inter-chain edge are removed from the NIG;
    \end{enumerate}
\item Repeat step 3 until there is only the node $T_i$ left in each chain $\calK_i, i = 1, \dots, n$ of the NIG;
\item Make robots in $L^*$ move along their target cable lines after
  all other robots have reached their target positions.
\end{enumerate}

A detailed example is presented in Section~\ref{sec:experiments}.

\subsection{Complexity of the algorithms} \label{sec:Complexity}

We now analyze the complexity of the algorithms just presented as a
function of the number $n$ of robots. Note first that, for each robot
$r_i$, the number of vertices (and edges) of its cable polygon $\Pi_i$
is $O(n)$ since it is bounded by the total number of robots (the
cables can only wrap around other robots).

\subsubsection{Pair Interaction Graph} 

To construct the PIG, the step with the highest complexity is to check
whether $T_i\in\Pi_j$ for all pairs $(i,j)$. The test $T_i\in\Pi_j$
has a linear complexity in the number of edges of $\Pi_j$, which is
$O(n)$. Thus, the construction of the PIG has a complexity of
$O(n^3)$.

Detecting pair deadlocks involves finding all bi-directional edges in
the PIG and can therefore be done in $O(n^2)$.

There are potentially $O(n)$ deadlocked robots. For a deadlocked
robot, say robot $r_i$, one needs to remove it and update the PIG
accordingly, which can be done in $O(n^2)$. Thus, pruning the PIG can
be done in $O(n^3)$.

\subsubsection{Network Interaction Graph} 

To construct the NIG, the main step is to compute, for each robot $r_i$,
the intersections of $(S_i,T_i)$ with all the $(S_j,T_j)$ and order
those intersections along $(S_i,T_i)$. This can be done in
$O(n^2\log(n))$. 

Detecting network deadlocks involves checking whether the NIG has a
cycle, which can be done in $O(n^2)$ (topological sort). Pruning the
NIG can be done in $O(n^3)$.

\subsubsection{Motion schedule}

The algorithm for scheduling the robot motions is linear in the number
of nodes of the NIG and is therefore $O(n^2)$.

From the above analysis, the worst time-complexity of the full
pipeline is $O(n^3)$.

\section{Examples} \label{sec:experiments}

\subsection{An example from Hert \& Lumelsky 1996}

In this section, we validate our proposed algorithms with the example
previously shown in Fig.~\ref{fig:straight_sequential}(a), which is a
deadlock situation for \strseq{} motion.  We show that the target
cable configuration can be achieved by \strconc{} motion and $r_4$
does not need to take a detour as in the output of the algorithm
in~\cite{HERT1996187}.

The intermediate stages of achieving the target cable configuration
with our proposed algorithms are shown in Fig.~\ref{fig:example}.  In
addition, a step-wise illustration of motion scheduling is presented
in Fig.~\ref{fig:motion-schedule}.

\begin{figure} [htp]
\centering
  \subfloat[The given target cable configuration.\label{subfig:example-target}]{%
    \includegraphics[width=1.55in]{target}
 }
  \hspace{1mm}
    \subfloat[(S, T) straight-line segments of the robots with labels for intersection points.\label{subfig:solve_str_seq}]{%
    \includegraphics[width=1.63in]{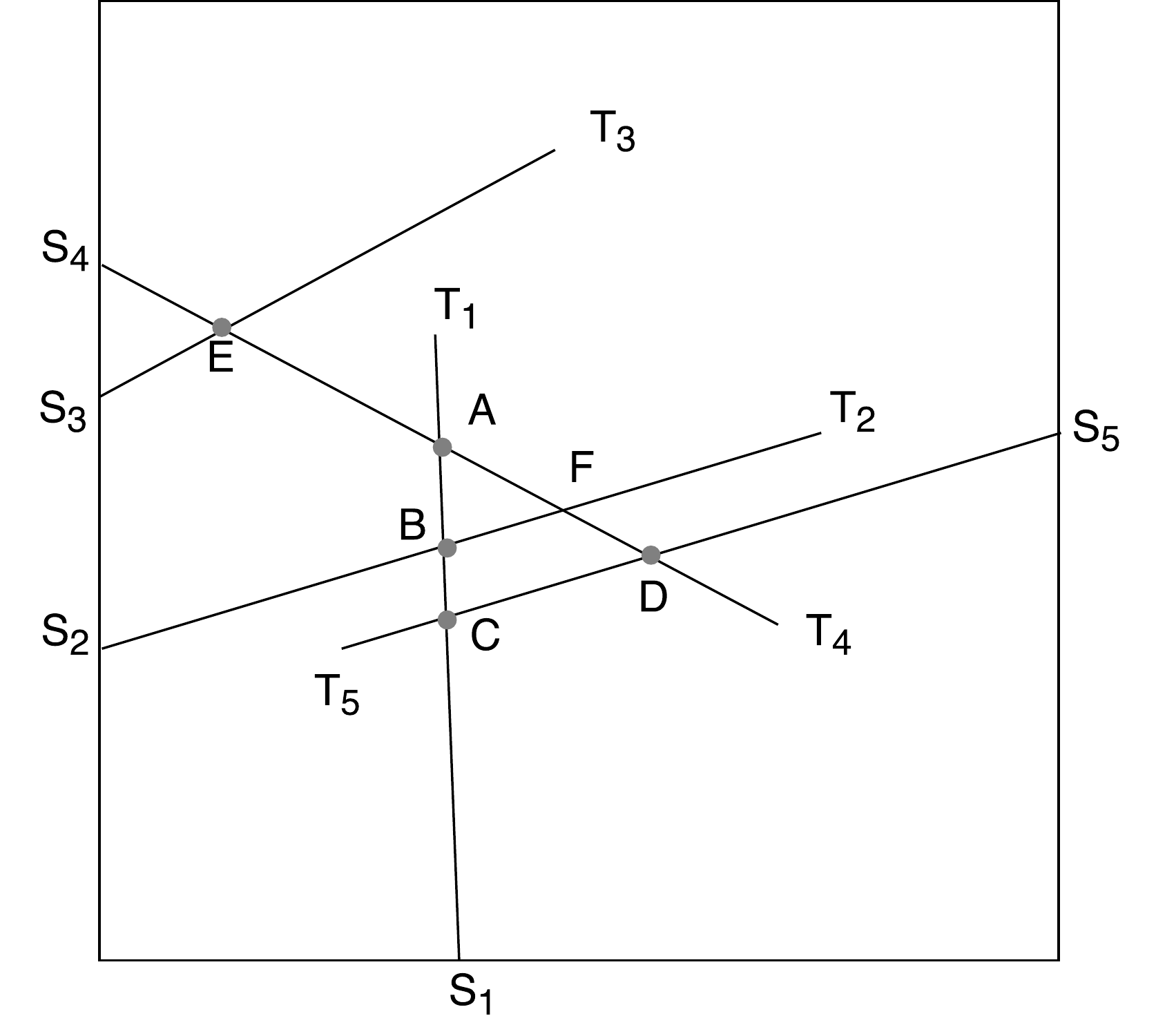}
 }

 \vspace{1mm}
  \subfloat[Pair interaction graph.\label{subfig:graph_solve_str_seq}]{%
    \includegraphics[width=1.4in]{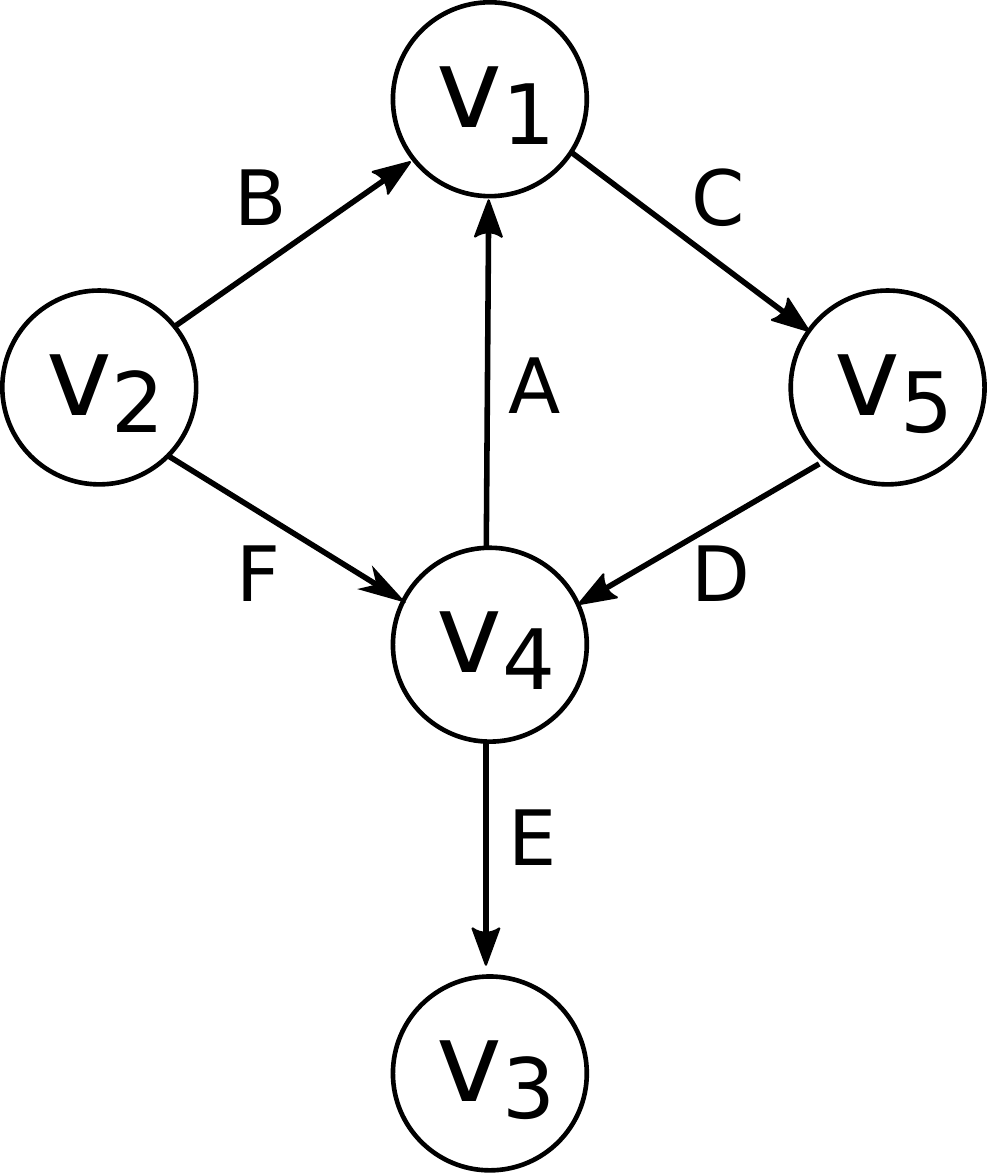}
 }

 \vspace{1mm}
  \subfloat[Network interaction graph.  \label{subfig:example-NIG}]{%
  \includegraphics[width=2.8in]{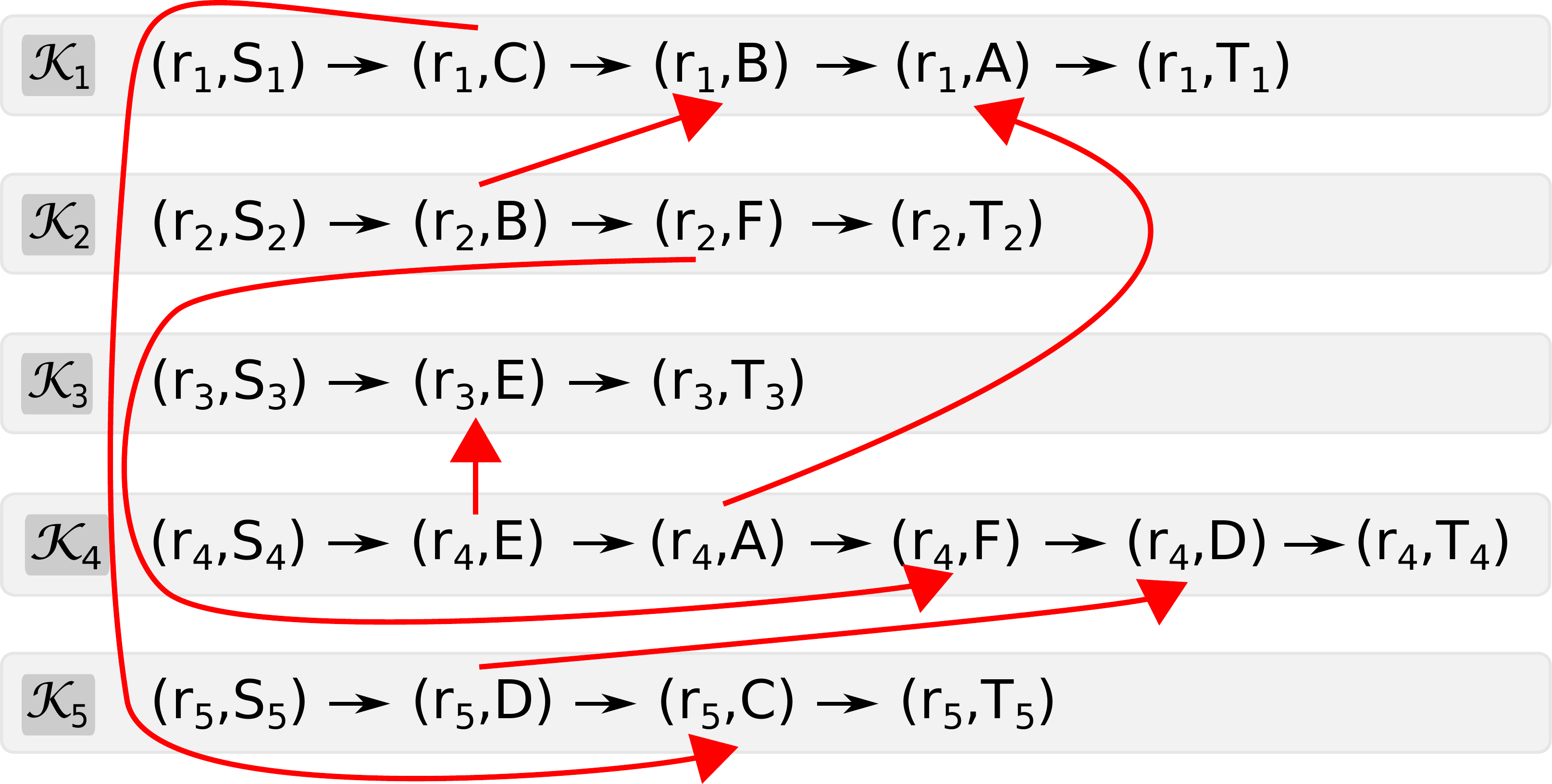}
}
\caption{The proposed algorithms are validated with a deadlocked target cable configuration for \strseq{} motion.}
\label{fig:example}
\end{figure}

To avoid the wrong final cable configuration as shown in
Fig.~\ref{fig:straight_sequential}(b), $r_4$ must pass $A$ before
$r_1$ does. It is not possible in \strseq{} motion, since $r_1$ has to
reach its target position before $r_4$ does.  However, moving in
\strconc{} motion following the local navigation priority as indicated
in Fig.~\ref{fig:example}(d), which contains no directed cycle, the
target cable configuration can be achieved successfully.

\begin{figure*} [htp]
\centering
  \subfloat[Initialized time-lines for the robots.]{%
  \includegraphics[width=1.6in]{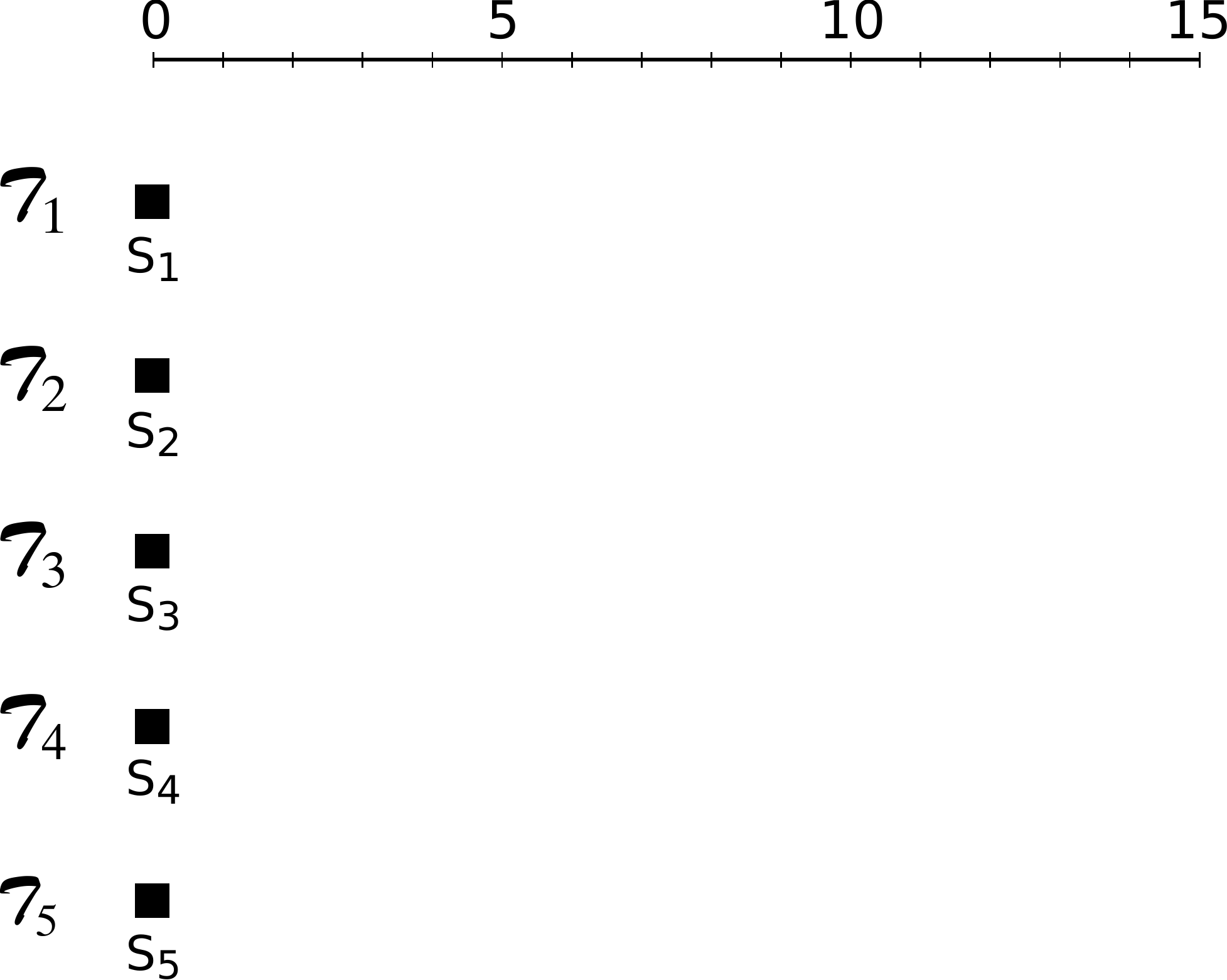}
}
\hspace{5mm}
  \subfloat[The weighted NIG.\label{subfig:schedule-0}]{%
  \includegraphics[width=2.9in]{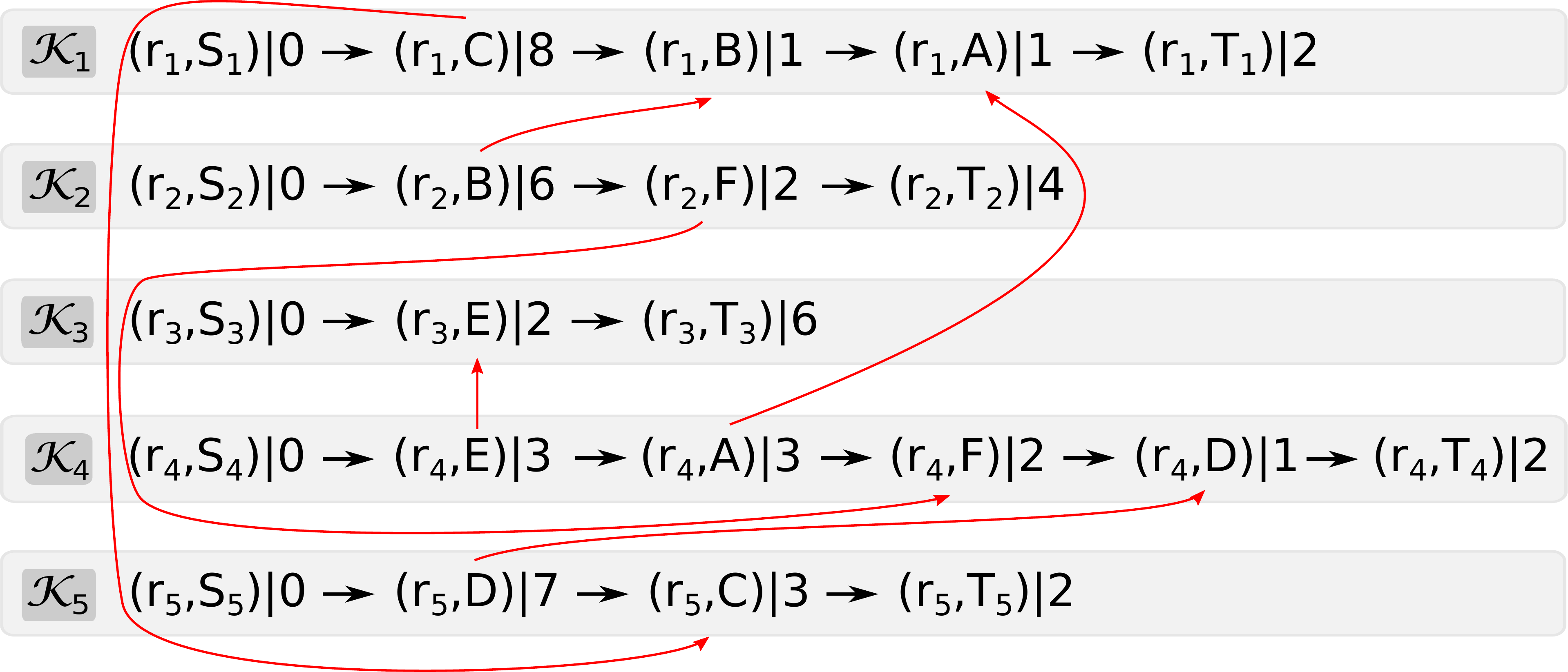}
}

  \subfloat[Schedule step 1.\label{subfig:schedule-1}]{%
  \includegraphics[width=1.6in]{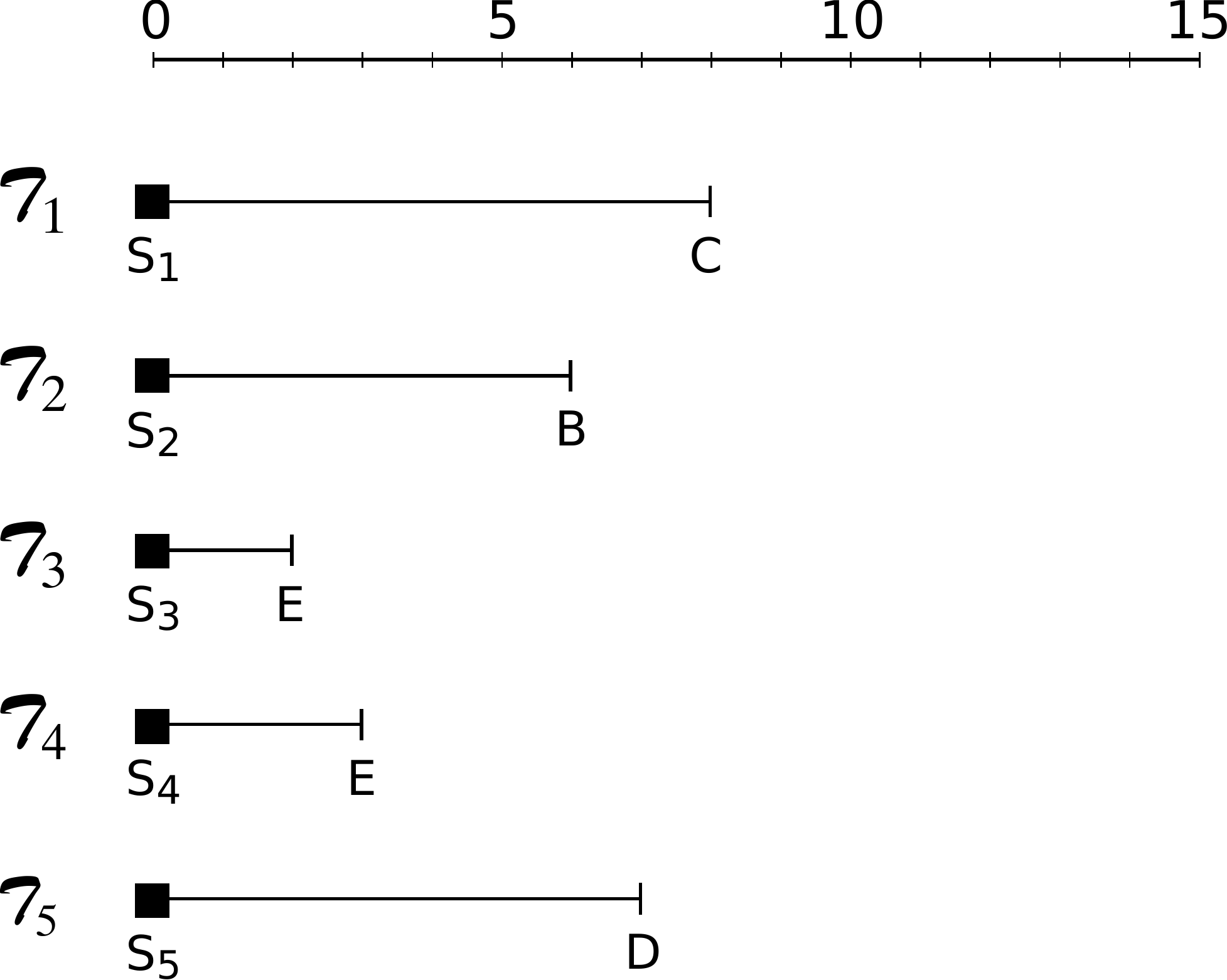}
}
\hspace{5mm}
  \subfloat[Updated NIG after step 1.  \label{subfig:DAG-1}]{%
  \includegraphics[width=2.9in]{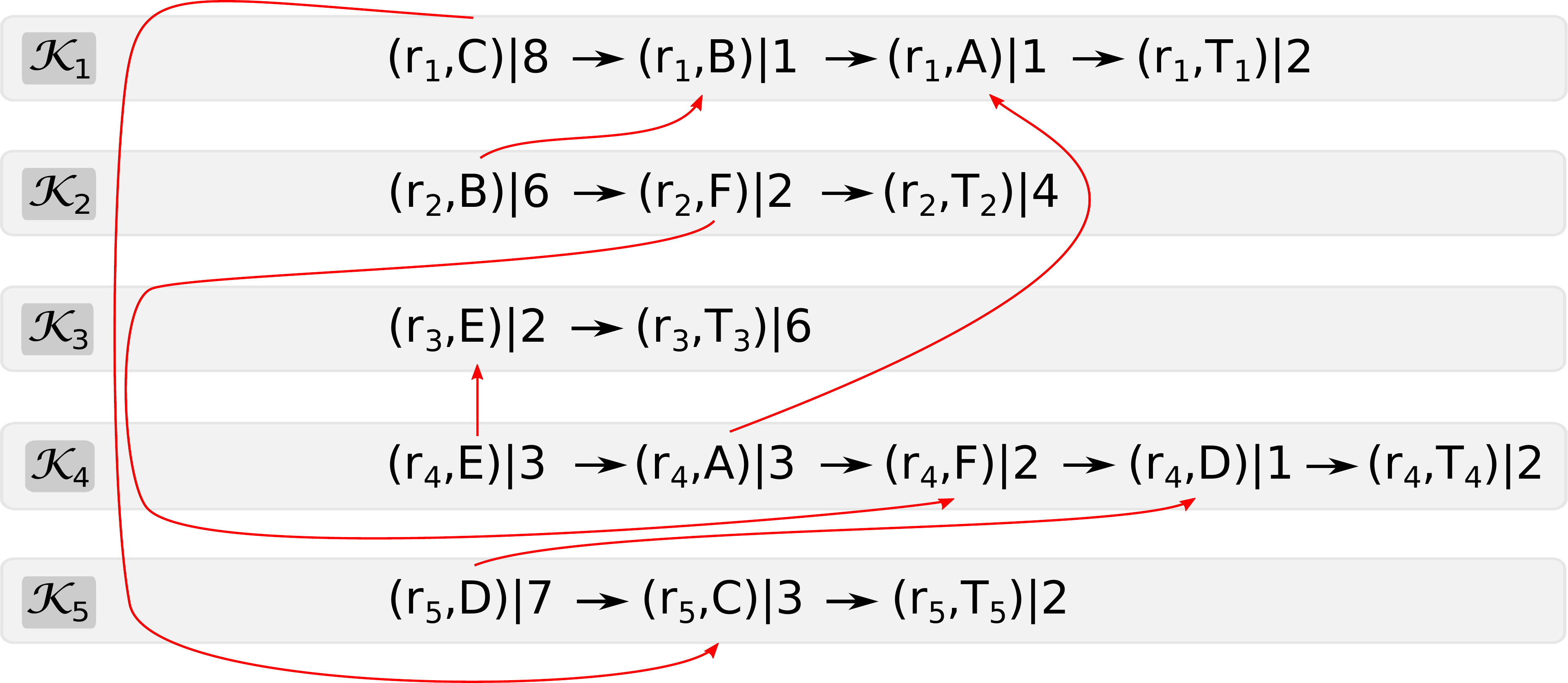}
}

  \subfloat[Schedule step 2.\label{subfig:schedule-2}]{%
  \includegraphics[width=1.6in]{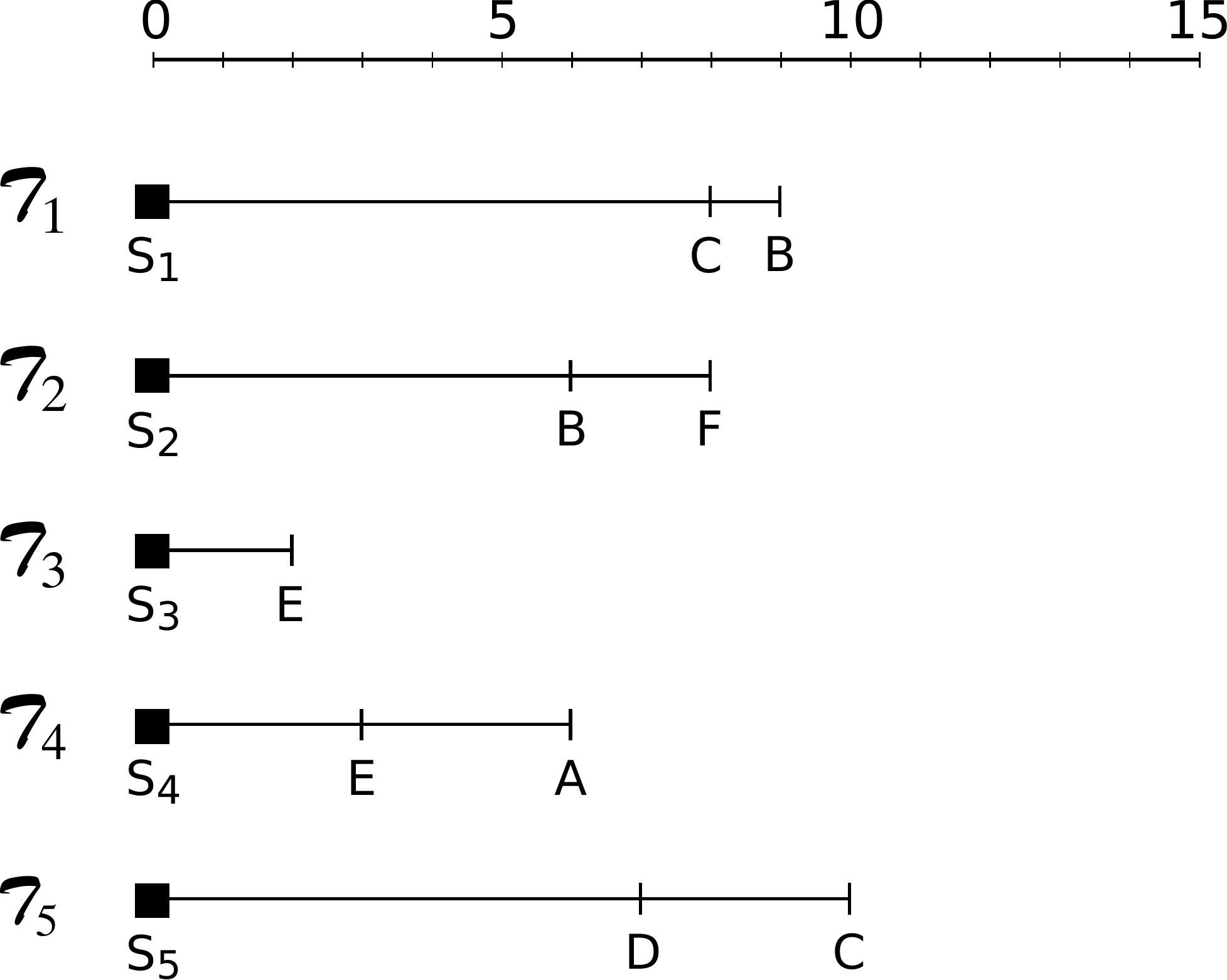}
}
\hspace{5mm}
  \subfloat[Updated NIG after step 2.  \label{subfig:DAG-2}]{%
  \includegraphics[width=2.9in]{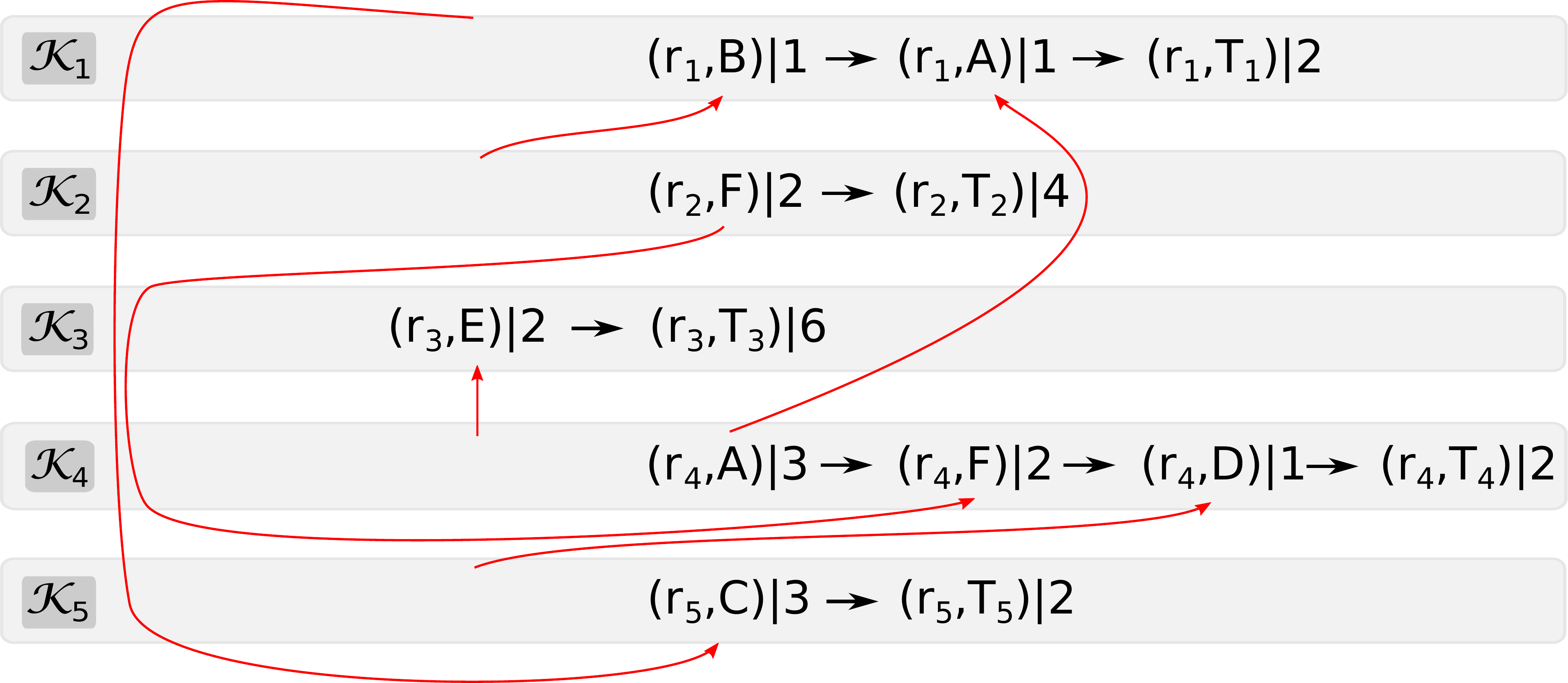}
}

  \subfloat[Schedule step 3.\label{subfig:schedule-3}]{%
  \includegraphics[width=1.6in]{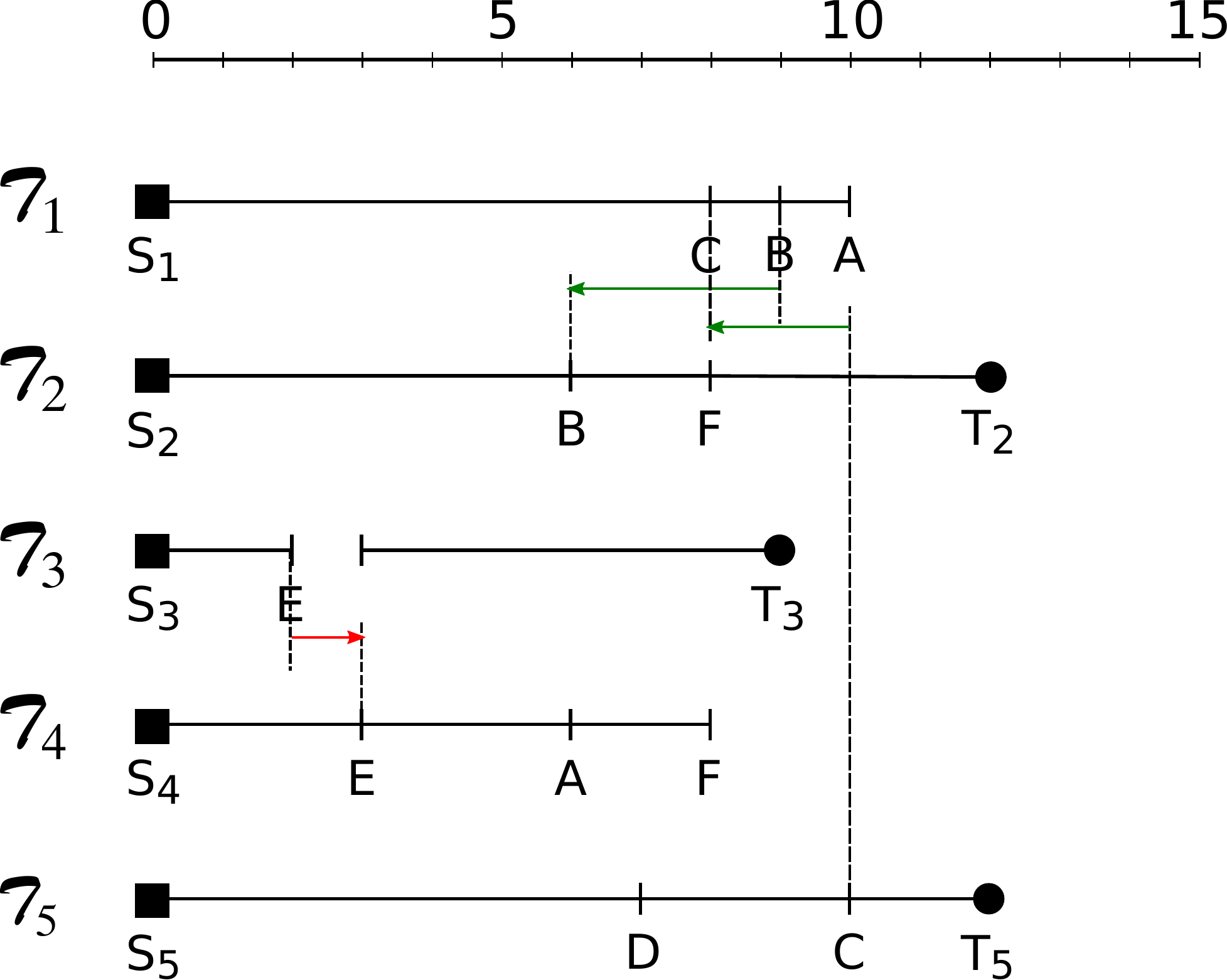}
}
\hspace{5mm}
  \subfloat[Updated NIG after step 3.  \label{subfig:DAG-3}]{%
  \includegraphics[width=2.9in]{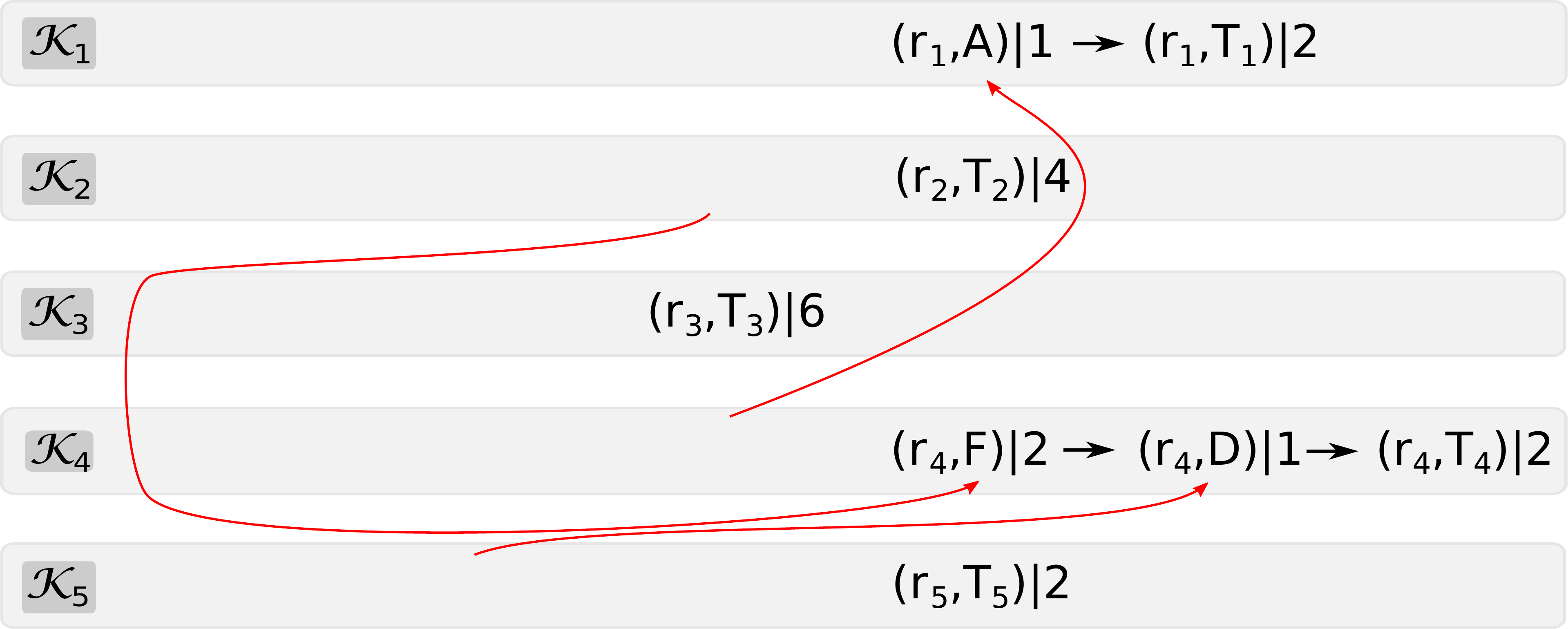}
}

  \subfloat[Schedule step 4.\label{subfig:schedule-4}]{%
  \includegraphics[width=1.6in]{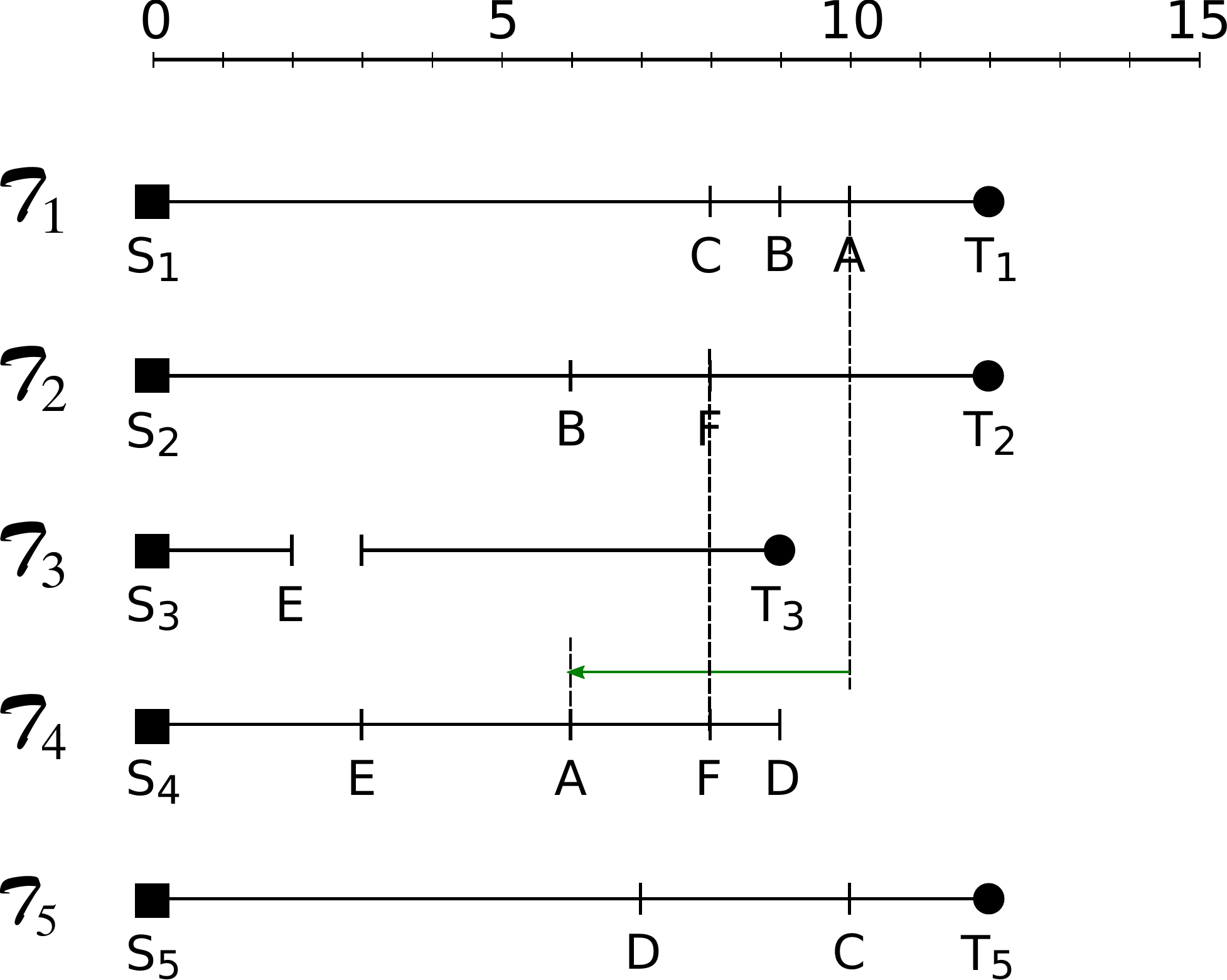}
}
\hspace{5mm}
  \subfloat[Updated NIG after step 4.  \label{subfig:DAG-4}]{%
  \includegraphics[width=2.9in]{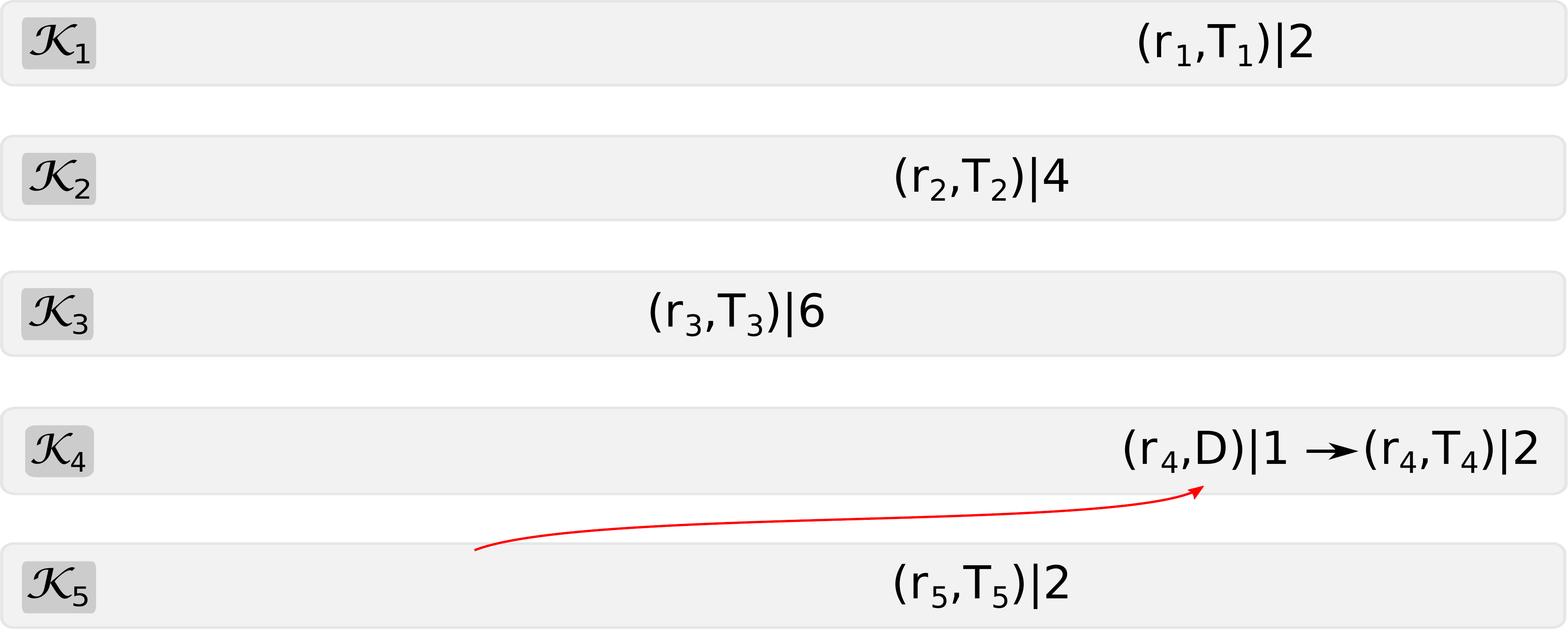}
}

  \subfloat[Schedule step 5.\label{subfig:schedule-5}]{%
  \includegraphics[width=1.6in]{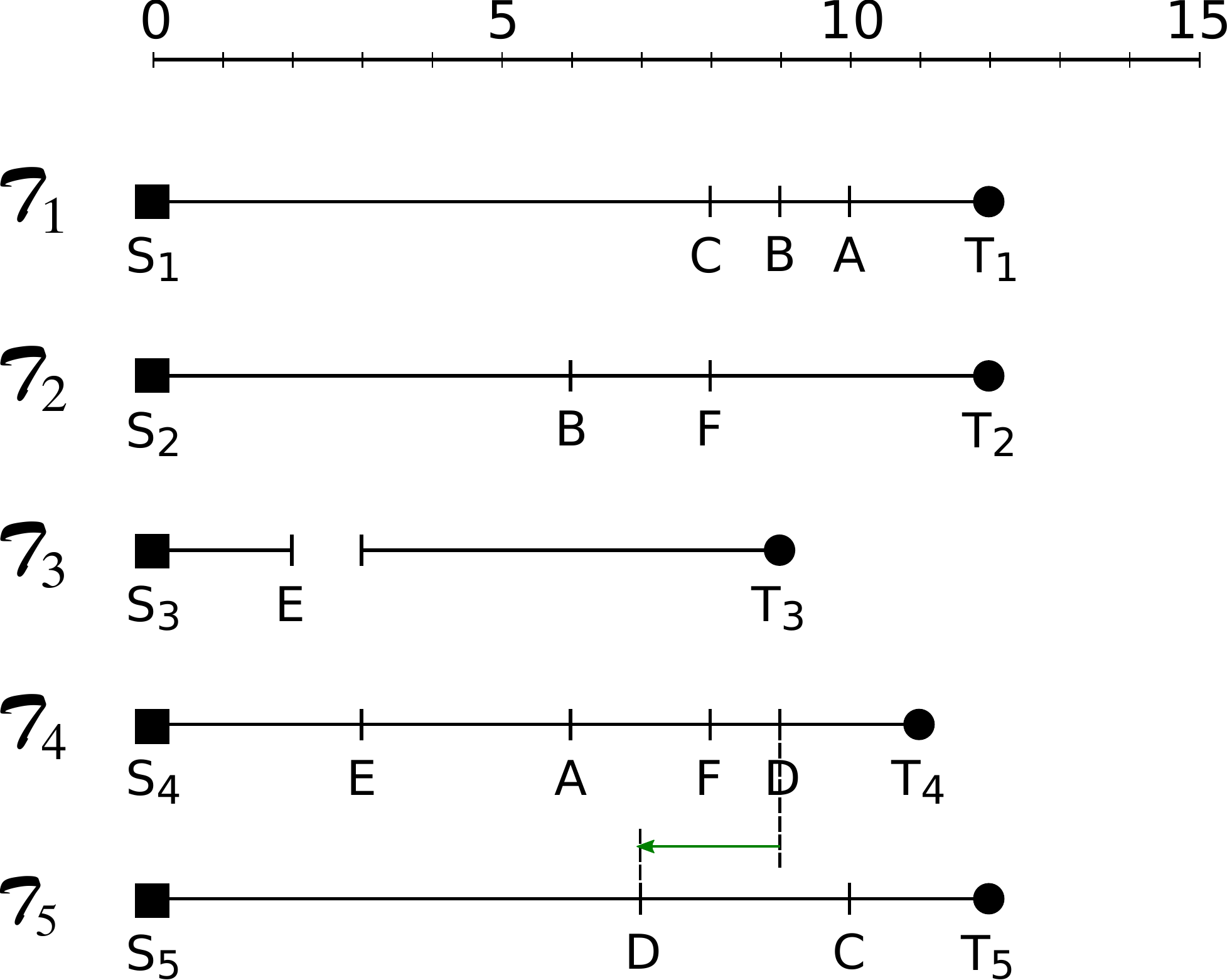}
}
\hspace{5mm}
  \subfloat[Updated NIG after step 5.  \label{subfig:DAG-5}]{%
  \includegraphics[width=2.9in]{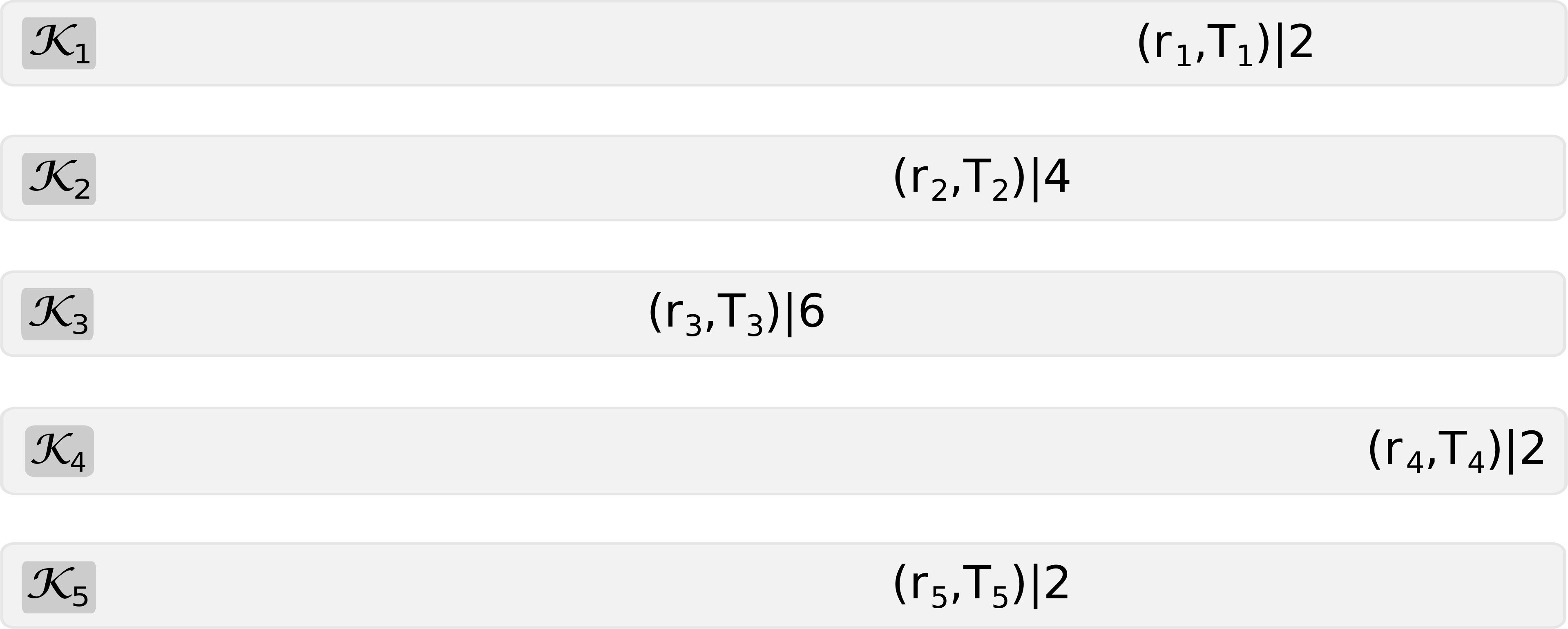}
}
\caption{Motion schedule generation for target cable configuration in Fig.~\ref{fig:straight_sequential}(a)}
\label{fig:motion-schedule}
\end{figure*}

\subsection{Hardware experiment}

We validate the proposed algorithms in a hardware experiment. We
consider the large-scale, multi-robot 3D Printing system previously
developed by our group~\cite{zhang2018large}. The tethers are
indispensable here to deliver the fresh concrete from the mixer to the
print nozzle. 
The size of each mobile robot is 960$\times$793$\times$296\,mm. 
Each cable is 10\,m in length and has a bend radius of 110\,mm.
We implemented the coordinated motion plan of
Fig.~\ref{fig:tether-intro}. 
$(S_i, T_i)$ segment for each robot was 7\,m. The maximal velocity of 
each robots was 0.6\,m/s.
To achieve the target cable configuration,
at each intersection point, the robots must follow the motion priority 
as indicated in the NIG (see Fig.~\ref{fig:network-interaction}(d)).

Fig.~\ref{fig:experiment} shows snapshots of the coordinated motions 
when the robots respected the calculated motion priority. 
The mobile robots moved in \strconc{} mode: they were commanded to 
move simultaneously along their $(S_i, T_i)$ straight-line segments.
It is worthy of mention that at 14\,s in the video of the experiment, 
$r_2$ stopped and waited until $r_1$ passed point $A$, after which 
$r_2$ passed $A$ and proceeded towards $T_2$. This movement 
reflects the motion priority of $r_1$ and $r_2$ at point $A$: 
$r_1$ must pass $A$ before $r_2$ does. 
Although the robots were not strictly points, and that 
the tethers were not perfectly flexible, the target cable 
configuration could be achieved as the robot motions respected the 
calculated priorities at the intersection points.
The result was evaluated purely based on positions of the robots, 
without taking their orientations into account.

\begin{figure*}[htp]
\centering
\includegraphics[width=.9\textwidth]{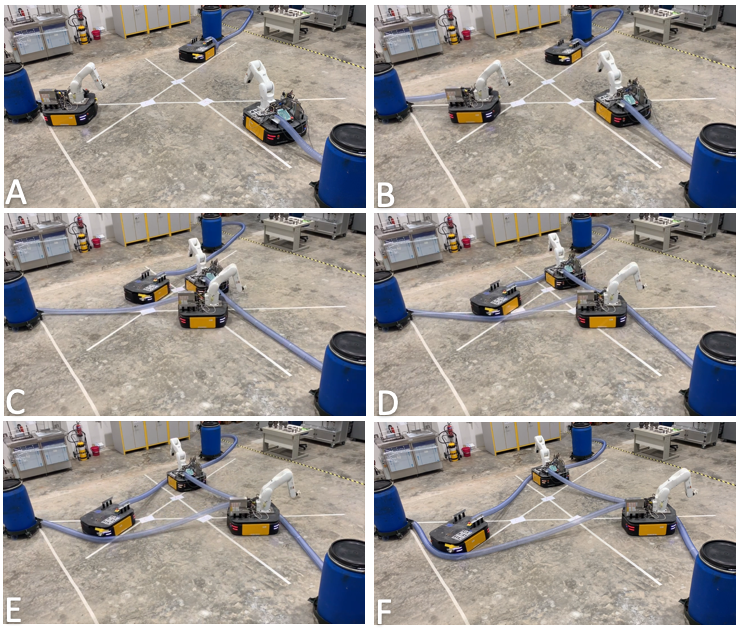}
\caption{Snapshots of the coordinated motions that achieve the desired
  target cable configuration. See the full video of the experiment at
  \url{https://youtu.be/Wdk9E0bB4yA}.  A. Tethered robots at starting
  positions; B. Navigating towards target locations in concurrent
  straight-line motion; C. Passing intersection points with calculated
  priority; D. Pushing other robot's cable during navigation;
  E. Navigation in progress; F. Achieving final (target) cable
  configuration.}\label{fig:experiment}
\end{figure*}

By contrast, Fig.~\ref{fig:experiment-wrong} shows snapshots of a
second run, where the robots did not respect the calculated priorities
at the intersection points. For better illustration of motion order at 
each intersection point, the robots were commanded to move in 
\strseq{} mode in the order of $r_1$, $r_2$ and $r_3$
(see Fig.~\ref{fig:example-intro}(b)).
Therefore, $r_1$ already reached $T_1$ before $r_3$ started to move 
(see Fig.~\ref{fig:experiment-wrong}D), 
which violated the motion priority at point $C$: $r_3$ must pass $C$ 
before $r_1$ does.
When $r_3$ navigated towards $T_3$, it had to push $C_1$ from $C$ all 
the way to $T_3$, resulting in a wrong final cable
configuration as shown in Fig.~\ref{fig:experiment-wrong}F.
Due to the limitation in tether length and the elastic property of the 
tethers, $r_1$ was dragged back when $C_1$ was pushed forward by $r_3$,
and $r_3$ failed to reach its target position.

A comparison of the total travel distance and the travel time for the above mentioned two hardware implementations is provided in Table~\ref{table:compare-mode}. Data for \bentconc{} and \bentseq{} modes was also computed for systematical comparison.
\begin{table}[htb]
\caption{Comparison of travel distance and travel time for different motion modes.}
\label{table:compare-mode}
\resizebox{\columnwidth}{!}{%
  \begin{tabular}{|l|l|l|l|l|}
  \hline
  \multicolumn{1}{|l|}{Motion mode}  
  & \begin{tabular}[c]{@{}l@{}}\texttt{Straight/} \\ \texttt{Concurrent}\end{tabular}
  & \begin{tabular}[c]{@{}l@{}}\texttt{Straight/} \\ \texttt{Sequential}\end{tabular}
  & \begin{tabular}[c]{@{}l@{}}\texttt{Bent/} \\ \texttt{Concurrent}\footnotemark\end{tabular} 
  & \begin{tabular}[c]{@{}l@{}}\texttt{Bent/} \\ \texttt{Sequential}\footnotemark[2]\end{tabular} \\ 
  \hline
  \multicolumn{1}{|l|}{Total travel distance (m)} 
  & 21 & 21 & 30 & 30 \\ 
  \hline
  \multicolumn{1}{|l|}{Travel time (s)} 
  & 14 & 40 & 20 & 60\\ 
  \hline
  \end{tabular}
}
\end{table} 
\footnotetext[2]{Note that \bentconc{} and \bentseq{} modes were not implemented in hardware experiment. The data was computed based on the robot velocity and tether length.}
\begin{figure*}[htp]
\centering
\includegraphics[width=.9\textwidth]{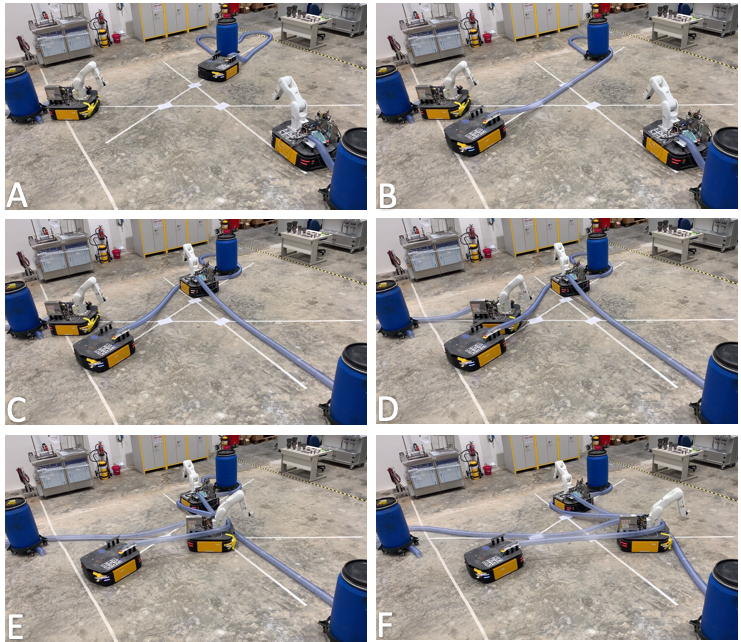}
\caption{Snapshots of the motions that fail to achieve the desired
  target cable configuration.  See second part of the video
  \url{https://youtu.be/Wdk9E0bB4yA}.  A. Tethered robots at starting
  positions; B. Robot $r_1$ passing $A$ before $r_2$ as calculated;
  C. Robot $r_2$ passing $B$ before $r_3$ as calculated; D. Robot
  $r_3$ pushing $r_1$'s cable from the wrong side as the motion order
  of $r_1$ and $r_3$ at $C$ is wrong; E. Robot $r_3$ continuing its
  navigation; F. Resulting in wrong final cable
  configuration.}\label{fig:experiment-wrong}
\end{figure*}

\section{Conclusion} \label{sec:conclusion} 

We have considered the motion planning problem for multiple tethered
planar (point) mobile robots. We have presented motion planning
algorithms for the robots to achieve a given target cable
configuration by straight-line and concurrent (\strconc{})
motions. The algorithms (i) identify whether the target cable
configuration was deadlocked; (ii) return a valid coordinated motion
plan accordingly.  The correctness of the algorithms was proved in the
process of algorithm development, and the worst time-complexity of the
full pipeline was shown to be $O(n^3)$.

The proposed algorithms have been validated in both simulations and
hardware experiments: we showed that, using our algorithms, the
tethered robots could achieve a target cable configuration that
involved a deadlock situation for \strseq{} motions, and that could
not be solved by algorithms previously proposed in the literature.

During the hardware implementation, we found that for some 
applications, it could be challenging to design a tether management 
mechanism to keep the tether taut and retractable.
In addition, robot size and geometry need to be taken into account 
during motion planning, especially for the purpose of collision 
avoidance. Our future work will consider integrating tether stiffness 
property and robot model into the proposed algorithms.

\section*{Acknowledgment} \label{sec:acknowledgment} This work was
partially supported by the Medium-Sized Centre funding scheme (awarded
by the National Research Foundation, Prime Minister's Office,
Singapore) and by Sembcorp Design \& Construction Pte Ltd.  The
authors would like to thank Hung Pham, Teguh Santoso Lembono and Lim
Jian Hui for their constructive remarks on earlier versions of the
manuscript, Lim Jian Hui and Panda Biranchi for their helps with the experiment.

\appendix
\section*{Appendix}

\section{Proofs of the pair interaction propositions}
\label{appendix:cable-polygons}

We first need a topological lemma, which is also used in Jordan's
proof of the Jordan curve theorem (see Lemma~1
of~\cite{hales2007jordan}).

\begin{lemma}
  Consider a polygon $\Pi$, a point $S$ in the exterior of $\Pi$, and
  a continuous path starting from $S$ and ending at $T$. Then $T$ is
  in the interior of $\Pi$ if and only if $\calP$ intersects the
  boundary of $\Pi$ an odd number of times.
  \label{lemma:Jordan}
\end{lemma}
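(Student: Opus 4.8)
\emph{Proof strategy.}

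The plan is to argue by induction on the number of times $\calP$ crosses $\partial\Pi$, after first putting $\calP$ into a convenient general position. The statement only makes sense when that number is finite, so I take $\calP$ to be a polygonal arc; and since $S$ and $T$ lie off $\partial\Pi$, an arbitrarily small perturbation of $\calP$ fixing its two endpoints changes neither the component of the complement of $\partial\Pi$ that contains $T$ nor the parity of the crossing number. So I may assume $\calP$ is in \emph{general position} with respect to $\Pi$: no vertex of $\Pi$ lies on $\calP$, and each maximal straight segment of $\calP$ that meets $\partial\Pi$ meets it transversally and only in the relative interiors of edges of $\Pi$. Subdividing $\calP$ at those crossings, I may also assume each crossing happens in the relative interior of a single segment of $\calP$.

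Let $Q_1,\dots,Q_k$ be the crossings of $\calP$ with $\partial\Pi$, listed in the order $\calP$ visits them; they cut $\calP$ into consecutive sub-arcs $\calP_0,\calP_1,\dots,\calP_k$, with $\calP_0$ beginning at $S$, $\calP_k$ ending at $T$, and each open sub-arc disjoint from $\partial\Pi$. Recall that the simple polygon $\Pi$ bounds a region: the complement of $\partial\Pi$ in the plane has exactly two connected components, the bounded interior and the unbounded exterior. Each sub-arc $\calP_j$, being connected and missing $\partial\Pi$, lies in one of the two, so the whole lemma reduces to the claim that \emph{at every crossing $Q_j$ the sub-arc $\calP_{j-1}$ arriving at $Q_j$ and the sub-arc $\calP_j$ leaving $Q_j$ lie in different components.} Granting that claim and using that $\calP_0$ contains the exterior point $S$, a trivial induction gives that $\calP_j$ is exterior when $j$ is even and interior when $j$ is odd; in particular $T\in\calP_k$ is interior precisely when $k$ is odd, which is the asserted equivalence.

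Everything is thus concentrated in the local toggle claim, which is the step I expect to be the real obstacle. Near a crossing $Q_j$ --- where $\calP$ is a single straight segment crossing an edge $e$ of $\Pi$ transversally in the relative interior of $e$ --- choose a small open disk $U$ centred at $Q_j$ such that $U\cap\partial\Pi$ is a diameter of $U$ (a sub-segment of $e$); this cuts $U$ into two open half-disks $U^+$ and $U^-$, and by transversality the part of $\calP_{j-1}$ near $Q_j$ lies in one half-disk while the part of $\calP_j$ near $Q_j$ lies in the other. Each half-disk is connected and misses $\partial\Pi$, hence lies inside a single component of the complement; it remains to show $U^+$ and $U^-$ lie in \emph{different} components, and for this I would invoke the classical even--odd (ray-casting) rule underlying Lemma~1 of~\cite{hales2007jordan}. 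For $x$ off $\partial\Pi$, let $\pi(x)\in\{0,1\}$ be the parity of the number of intersections with $\partial\Pi$ of a generic ray from $x$ --- one missing every vertex of $\Pi$ and parallel to no edge of $\Pi$. One checks that (i) $\pi(x)$ does not depend on the generic direction chosen, so it is well defined; (ii) $\pi$ is locally constant off $\partial\Pi$, hence constant on each of the two components; and (iii) fixing a direction not parallel to $e$, exactly one intersection with $e$ is created or destroyed as $x$ is moved from $U^-$ to $U^+$, so $\pi$ takes opposite values on $U^+$ and on $U^-$. By (ii) and (iii), $U^+$ and $U^-$ cannot sit in the same component, so they sit in the two distinct ones, which proves the toggle claim (and, incidentally, since $\pi\equiv 0$ at points far from $\Pi$, it follows that $\pi\equiv 0$ on the exterior and $\pi\equiv 1$ on the interior).

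The one genuinely delicate point is (i): as the ray is rotated continuously, its intersection count with $\partial\Pi$ can change only when the ray sweeps through a direction pointing at a vertex $V$ of $\Pi$, and there it changes by $0$ or $\pm 2$, because the two edges of $\Pi$ incident to $V$ enter or leave the ray simultaneously; hence the parity is preserved. I expect (ii), (iii), the perturbation argument of the first paragraph, and the induction of the second to all be routine. As a remark, one can replace the last two paragraphs by tracking the winding number of $\partial\Pi$ about the point that moves along $\calP$ from $S$ to $T$: it starts at $0$, changes by $\pm1$ at each transversal crossing, and is nonzero exactly at interior points --- but this rests on the same topological input rather than circumventing it.
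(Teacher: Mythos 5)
Your argument is correct in outline, but note that the paper does not prove this lemma at all: it is imported as a known result, namely Lemma~1 of Hales's account of Jordan's proof of the Jordan curve theorem~\cite{hales2007jordan}, and is used as a black box in~\ref{appendix:cable-polygons}. What you have written is essentially a reconstruction of that classical argument: reduce to a polygonal path in general position, cut it at the crossing points, and prove the local ``toggle'' claim via the even--odd ray-casting parity $\pi(x)$, whose well-definedness under rotation of the ray (your point (i)) is indeed the only delicate step. Two small remarks. First, your restriction to polygonal arcs with transversal crossings is not merely convenient but necessary for the statement to be literally true (a tangential touch counts as an intersection yet does not change sides, and your perturbation step can change the parity of such intersections); this restriction is harmless here because in the paper the lemma is only ever applied to straight segments and polygonal cable lines meeting $\partial\Pi$ generically, but it is worth saying explicitly that you are proving the transversal-crossing version. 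Second, your step (iii) needs the same vertex-avoidance care as (i), since translating the ray sweeps an unbounded strip that may contain vertices; a cleaner route is to use (i) to choose, for both half-disk points, the ray direction along the segment of $\calP$ through $Q_j$ itself, so the two rays differ exactly by a sub-segment crossing $\partial\Pi$ once. With those touch-ups your proof stands on its own, which is more than the paper offers; the paper's choice buys brevity by citation, yours buys self-containedness at the cost of re-deriving a standard fact.
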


We can now prove the following propositions, which are required by the
pair interaction propositions of Section~\ref{sec:PIG}.

\begin{proposition}
  Assume that $S_i, T_i \notin \Pi_j$ and $S_j, T_j\notin\Pi_i$, then
  the two segments $(S_i, T_i)$ and $(S_j, T_j)$ do not intersect.
\end{proposition}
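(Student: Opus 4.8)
The plan is to apply the topological parity criterion of Lemma~\ref{lemma:Jordan} twice, once for each cable polygon, in order to force the number of crossings between the segments $(S_i,T_i)$ and $(S_j,T_j)$ to be even, hence zero. Write $L_i$ for the target cable line of $r_i$, so that the boundary of the cable polygon decomposes as $\partial\Pi_i = L_i \cup (S_i,T_i)$, with the two pieces meeting only at the corners $S_i$ and $T_i$; similarly $\partial\Pi_j = L_j\cup(S_j,T_j)$. Throughout I assume general position (no spurious collinearities or tangencies), so that every intersection among the segments and chains involved is a transversal crossing at an isolated, non-corner point; degenerate cases are handled by a perturbation argument. Observe first that, since $(S_i,T_i)\subseteq\Pi_i$ while $S_j,T_j\notin\Pi_i$, neither $S_j$ nor $T_j$ lies on $(S_i,T_i)$, and symmetrically $S_i,T_i\notin(S_j,T_j)$; in particular the four endpoints are distinct and none of the crossings counted below sits on a corner of $\partial\Pi_i$ or $\partial\Pi_j$.

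First I would count the crossings of the segment $(S_i,T_i)$ with $\partial\Pi_j$: applying Lemma~\ref{lemma:Jordan} to the polygon $\Pi_j$, the exterior point $S_i$, and the straight path from $S_i$ to $T_i$, and using that $T_i$ is likewise exterior to $\Pi_j$, the segment $(S_i,T_i)$ meets $\partial\Pi_j$ an even number of times. Since $\partial\Pi_j = L_j\cup(S_j,T_j)$ and the segment avoids the corners $S_j,T_j$, this says that $a+b$ is even, where $a$ is the number of crossings of $(S_i,T_i)$ with $L_j$ and $b$ the number of crossings of $(S_i,T_i)$ with $(S_j,T_j)$.

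Next I would fix the parity of $a$ by a second application of the lemma, this time to the polygon $\Pi_i$, the exterior point $S_j$, and the polygonal path $L_j$ running from $S_j$ to $T_j$. Since $T_j$ is exterior to $\Pi_i$, the chain $L_j$ meets $\partial\Pi_i = L_i\cup(S_i,T_i)$ an even number of times; but because the target cable configuration is non-intersecting, $L_j$ does not meet $L_i$ at all, so $L_j$ meets $(S_i,T_i)$ an even number of times, i.e. $a$ is even. Combining with the previous paragraph, $b$ is even. Since two distinct, non-collinear line segments cross in at most one point, $b\le 1$; being even, $b=0$, so $(S_i,T_i)$ and $(S_j,T_j)$ do not intersect.

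The bulk of the work is bookkeeping; the delicate point, and the one I would state most carefully, is the general-position issue, because the parity count of Lemma~\ref{lemma:Jordan} is only clean for transversal crossings away from vertices, so one must either perturb or verify directly that corner-hits, tangencies, and the degenerate possibility that $(S_i,T_i)$ and $(S_j,T_j)$ are collinear and overlap either do not occur or can be removed without disturbing any of the ``$\in\Pi$/$\notin\Pi$'' relations in the hypothesis. The other ingredient that is easy to miss is that the non-intersection of the cable lines $L_i$ and $L_j$, part of the standing assumption that the target cable configuration is non-intersecting, is precisely what pins the parity of $a$; without it, ``$a+b$ even'' alone would not force $b=0$.
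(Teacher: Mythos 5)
Your proof is correct and follows essentially the same route as the paper's: two applications of the Jordan parity lemma (once to the segment $(S_i,T_i)$ against $\partial\Pi_j$, once to the cable line of $r_j$ against $\partial\Pi_i$) combined with the non-intersection of the cable lines, the only difference being that you argue directly via parity bookkeeping while the paper phrases the same count as a contradiction from an assumed crossing point $P$. Your added care about general position and corner hits is a reasonable refinement but not a departure in method.
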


\begin{proof}
  Assume by contradiction that $(S_i, T_i)$ and $(S_j, T_j)$ intersect
  at a point $P$. Consider the cable line $\calC_j$ of robot $r_j$. By
  lemma~\ref{lemma:Jordan}, $\calC_j$ will intersect the boundary of
  $\Pi_i$ an even number of times. Since $\calC_j$ has zero
  intersection with $\calC_i$ (cable lines are non intersecting),
  $\calC_j$ must intersect $(S_i, T_i)$ an even number of times. Since
  $(S_i, T_i)$ already intersects $(S_j, T_j)$ at one point $P$, the
  total number of intersections between $(S_i, T_i)$ and the boundary
  of $\Pi_j$ is odd. By lemma~\ref{lemma:Jordan}, either $S_i$ or
  $T_i$ must therefore be in the interior of $\Pi_j$, which raises
  a contradiction.
\end{proof}

\begin{proposition}
  Assume that $S_i \notin \Pi_j$, $S_j\notin\Pi_i$, $T_i \in \Pi_j$,
  $T_j\notin\Pi_i$, then the two segments $(S_i, T_i)$ and
  $(S_j, T_j)$ must intersect.
\end{proposition}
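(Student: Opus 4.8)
The plan is to reproduce the parity argument used in the preceding proposition, but now exploiting the fact that exactly one of the two endpoints of $(S_i,T_i)$ lies inside $\Pi_j$, which is precisely what makes the relevant crossing count odd instead of even.

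First I would apply Lemma~\ref{lemma:Jordan} to the straight-line path $(S_i,T_i)$ with respect to the polygon $\Pi_j$. Since $S_i\notin\Pi_j$ (so $S_i$ is in the exterior of $\Pi_j$) while $T_i\in\Pi_j$ (so $T_i$ lies in the interior of $\Pi_j$), the lemma forces $(S_i,T_i)$ to meet the boundary $\partial\Pi_j$ an odd number of times. By the definition of the cable polygon, $\partial\Pi_j=\calC_j\cup(S_j,T_j)$, so the number of times $(S_i,T_i)$ crosses $\calC_j$ plus the number of times it crosses $(S_j,T_j)$ is odd.

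Second I would show that $(S_i,T_i)$ crosses $\calC_j$ an \emph{even} number of times. For this, apply Lemma~\ref{lemma:Jordan} again, this time to the path $\calC_j$ (which runs from $S_j$ to $T_j$) with respect to the polygon $\Pi_i$: since $S_j\notin\Pi_i$ and $T_j\notin\Pi_i$, the path $\calC_j$ meets $\partial\Pi_i=\calC_i\cup(S_i,T_i)$ an even number of times. Because the target cable configuration is non-intersecting, $\calC_j$ does not meet $\calC_i$ at all, hence $\calC_j$ meets $(S_i,T_i)$ an even number of times. Combining this with the previous count, $(S_i,T_i)$ must cross $(S_j,T_j)$ an odd number of times, in particular at least once, which is the claim.

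The main technical point to handle carefully is the same one that is implicit in Lemma~\ref{lemma:Jordan} and in the proof of the previous proposition: the parity bookkeeping is clean only when the paths meet the polygon boundaries transversally and avoid their vertices (and do not overlap boundary segments, and $T_i$ is not exactly on $\partial\Pi_j$). I would dispose of these degenerate configurations by a standard general-position perturbation, noting that a sufficiently small perturbation preserves each of the membership relations $S_i\notin\Pi_j$, $T_i\in\Pi_j$, $S_j\notin\Pi_i$, $T_j\notin\Pi_i$ and the disjointness of $\calC_i$ and $\calC_j$, so the conclusion transfers back to the original configuration. (Alternatively one may invoke the robust, perturbation-stable counting version of the lemma used in Hales' treatment of the Jordan curve theorem~\cite{hales2007jordan}.)
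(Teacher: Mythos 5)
Your proposal is correct and uses the same ingredients as the paper's proof: Lemma~\ref{lemma:Jordan} plus the non-intersection of $\calC_i$ and $\calC_j$, with parity bookkeeping of crossings of the boundaries $\partial\Pi_i = \calC_i \cup (S_i,T_i)$ and $\partial\Pi_j = \calC_j \cup (S_j,T_j)$. The only structural difference is that the paper argues by contradiction (assume the segments are disjoint, apply the lemma to $\calC_i$ against $\Pi_j$, and conclude that $S_j$ or $T_j$ would have to lie inside $\Pi_i$), whereas you count directly along $(S_i,T_i)$ and $\calC_j$ to get an odd number of crossings between the two segments; this is an equivalent rearrangement of the same argument, and your explicit general-position remark handles degeneracies at least as carefully as the paper does.
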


\begin{proof}
  Assume by contradiction that $(S_i, T_i)$ and $(S_j, T_j)$ do not
  intersect.  By lemma~\ref{lemma:Jordan}, $\calC_i$ will intersect
  boundary of $\Pi_j$ for odd number of times.  Since $\calC_i$ does
  not intersect $\calC_j$ under the problem formulation, $\calC_i$
  must intersect $(S_j, T_j)$ for odd number of times.  The assumption
  that $(S_j, T_j)$ and $(S_j, T_j)$ do not intersect then implies
  that $(S_j, T_j)$ intersects $\Pi_i$ for odd number of times.
  Therefore either $S_j$ or $T_j$ has to be in the interior of
  $\Pi_i$, which raises a contradiction.
\end{proof}

\begin{proposition}
  Assume that $S_i \notin \Pi_j$, $S_j\notin\Pi_i$, $T_i \in \Pi_j$,
  $T_j \in \Pi_i$, then the two segments $(S_i, T_i)$ and $(S_j, T_j)$
  do not intersect.
\end{proposition}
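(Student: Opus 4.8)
The plan is to argue by contradiction, exactly mirroring the structure of the two preceding propositions and relying only on the parity statement of Lemma~\ref{lemma:Jordan} together with the standing hypothesis that distinct cable lines do not intersect. Assume, for contradiction, that the two segments $(S_i, T_i)$ and $(S_j, T_j)$ meet; by general position they meet in a single point $P$.

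First I would apply Lemma~\ref{lemma:Jordan} to the path $\calC_j$ relative to the polygon $\Pi_i$. Since $S_j \notin \Pi_i$ while $T_j \in \Pi_i$, the path $\calC_j$ must cross the boundary of $\Pi_i$ an odd number of times. The boundary of $\Pi_i$ is $\calC_i \cup (S_i, T_i)$, and $\calC_j$ is disjoint from $\calC_i$ because cable lines are non-intersecting; hence all of those crossings lie on the segment $(S_i, T_i)$, so $(S_i, T_i)$ meets $\calC_j$ an odd number of times.

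Next I would apply Lemma~\ref{lemma:Jordan} to the path $(S_i, T_i)$ relative to the polygon $\Pi_j$, whose boundary is $\calC_j \cup (S_j, T_j)$. Since $S_i \notin \Pi_j$ and $T_i \in \Pi_j$, the path $(S_i, T_i)$ must cross the boundary of $\Pi_j$ an odd number of times. But from the previous step its crossings with $\calC_j$ are odd in number, and its crossings with $(S_j, T_j)$ number exactly one (the point $P$), so the total is even --- a contradiction. Hence $(S_i, T_i)$ and $(S_j, T_j)$ cannot intersect. The only point requiring a little care is the additivity of the two crossing counts along $\partial\Pi_j$, i.e. that $\calC_j$ and $(S_j, T_j)$ are never crossed by $(S_i, T_i)$ at a common point and that $P$ is a single transversal crossing; as in the earlier proofs this is covered by the general-position assumption, noting in addition that the shared endpoints $S_j, T_j$ of $\calC_j$ and $(S_j,T_j)$ do not lie on $(S_i, T_i) \subseteq \overline{\Pi_i}$, since $S_j \notin \Pi_i$ and $T_j$ is interior to $\Pi_i$. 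This bookkeeping is the only mild obstacle; the parity argument itself is routine. (The symmetric argument, swapping the roles of $i$ and $j$, works identically.)
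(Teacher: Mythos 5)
Your proof is correct and follows essentially the same route as the paper's: a contradiction argument applying Lemma~\ref{lemma:Jordan} twice and counting parities of crossings of $(S_i,T_i)$ and $\calC_j$ with the two polygon boundaries. You merely invoke the non-intersection of cable lines at the first step and land the contradiction on a parity mismatch for $\partial\Pi_j$, whereas the paper defers it to conclude that $\calC_i$ and $\calC_j$ would cross an odd number of times --- the same argument rearranged, with your version being slightly more careful about the endpoint/general-position bookkeeping.
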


\begin{proof}
  Assume by contradiction that $(S_i, T_i)$ and $(S_j, T_j)$
  intersects at a point $P$.  By lemma~\ref{lemma:Jordan},
  $(S_i, T_i)$ will intersect boundary of $\Pi_j$ for odd number of
  times, and $\calC_j$ will intersect boundary of $\Pi_i$ for odd
  number of times.  Since $(S_i, T_i)$ and $(S_j, T_j)$ intersects at
  $P$, $(S_i, T_i)$ will intersect $\calC_j$ for even number of times.
  Therefore $\calC_j$ will intersect $\calC_i$ for odd number of
  times, which raises a contradiction as cable lines are
  non intersecting.
\end{proof}

\section{Revising and supplementing Hert and Lumelsky
  1997}\label{Hert-Lumelsky}

In this section, route intersection detection algorithms for case 1(b) and
2(b) of the paper~\cite{hert1997planar} are supplemented and revised.
Some background terms are first introduced here.

A \textit{cable route ($\calR$)} for robot $r_i$ through 
graph $G$ is defined as the set of adjacent edges leading from $S_i$ 
to some node. Without loss of generality, it is assumed that no three 
consecutive nodes from a single route are collinear.
Therefore, any two adjacent edges in a route form two distinct angles, one convex ($< \pi$) and the other concave ($> \pi$).

To find a feasible target cable configuration from $G$, the most 
fundamental task is to identify whether there is any route intersection 
in a given cable configuration.
As discussed in~\cite{hert1997planar}, there
are five different ways $\calR_A$ and $\calR_B$ can meet at a node $T_k$, considering the
relative placement of edges from $\calR_A$ and $\calR_B$:

\begin{enumerate}
\item $T_k$ is the last node of one of the routes, say $\calR_A$. $\calR_A$ 
  has one edge incident to $T_k$ and $\calR_B$ has two. The routes
  meet in one of the following two ways:
  \begin{itemize}
  \item[(a)] $\calR_A$ and $\calR_B$ do not share an edge incident to $T_k$;
  \item[(b)] $\calR_A$ and $\calR_B$ do share an edge incident to $T_k$.
  \end{itemize}

\item $T_k$ is not the last node of either of the routes. Each route has two edges incident to $T_k$. There are three cases as the following:
  \begin{itemize}
  \item[(a)] $\calR_A$ and $\calR_B$ share no edge incident to $T_k$;
  \item[(b)] $\calR_A$ and $\calR_B$ share one edge incident to $T_k$;
  \item[(c)] $\calR_A$ and $\calR_B$ share both edges incident to $T_k$.
  \end{itemize}
\end{enumerate}

Route intersection may occur in cases from 1(a) to 2(b), but can be avoided
in case 2(c).
Besides the above possible intersections, there is one more case where
two routes do not intersect at certain node, instead, they intersect
in their interiors.
Readers may refer to~\cite{hert1997planar} for more details on the cable intersection detection algorithm, but the algorithm for case 1(b) is not complete, and that for case 2(b) needs revision.
In the following section, we provide the revised and supplemented algorithms.

\subsection{Detecting Route Intersection (revised and
  supplemented)}

\subsubsection{Case 1(b)} \hspace*{\fill}

\textbf{Case 1(b)}: Suppose $\calR_A$ and $\calR_B$ have one or more edges in
common, and one of them is the last edge for $\calR_A$. $T_l$ is the
goal node of $\calR_A$.

In~\cite{hert1997planar}, the authors conclude this case as the
following:

Let $e_{a1}$ and $e_{b1}$ be the two edges from $\calR_A$ and $\calR_B$,
immediately before this common set of edges. Let $e_{b2}$ be the edge
from $\calR_B$ immediately after the common set of edges. Let $e_{fc}$
be the first edge shared by $\calR_A$ and $\calR_B$, and $e_{lc}$ the 
last edge shared by $\calR_A$ and $\calR_B$.

\begin{figure}[htp]
\centering
\includegraphics[width=2.5in]{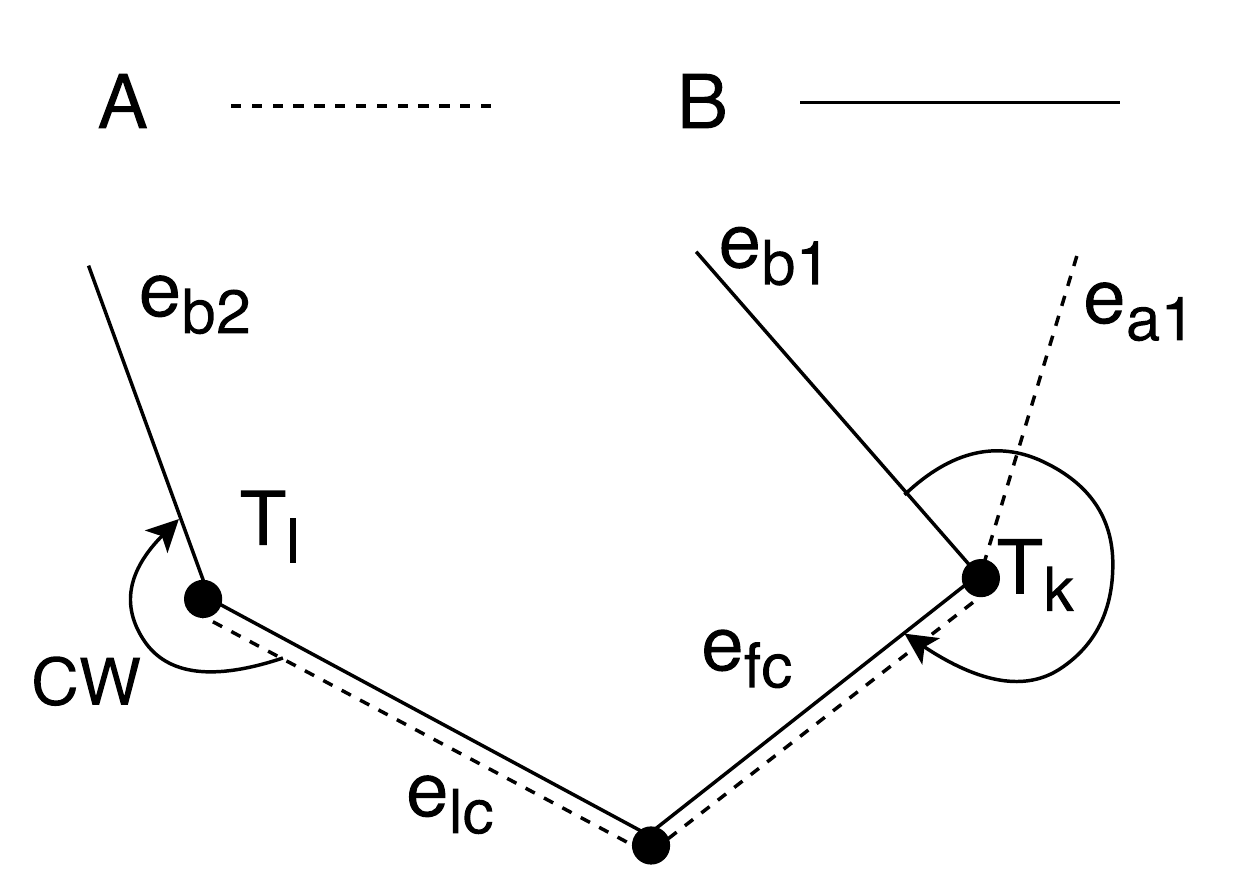}
\caption{Case 1(b). $\calR_A$ and $\calR_B$ intersect at $T_l$. (The
  figure is taken from~\cite{hert1997planar}
  Fig. 11.)  \label{fig:case1(b)}}
\end{figure}

\textbf{Lemma 2} $\calR_A$ and $\calR_B$ intersect at $T_l$ if and only if
$e_{a1}$ is contained in the clockwise (counterclockwise) angle from
edge $e_{b1}$ to $e_{fc}$ and the clockwise (counterclockwise) angle
from $e_{lc}$ to $e_{b2}$ is $> \pi$.

However, the above statement and Lemma 2 are capable of intersection 
detection for case 1(b) only if edge $e_{b1}$ exists. 
Considering the situation where $\calR_A$ and $\calR_B$ are developed from
different directions and $T_k$ is the target position of $\calR_B$, the
case has to be reformulated as in Fig.~\ref{fig:case1(b)_r} and Lemma 2s.

\begin{figure}[htp]
\centering
\includegraphics[width=2.3in]{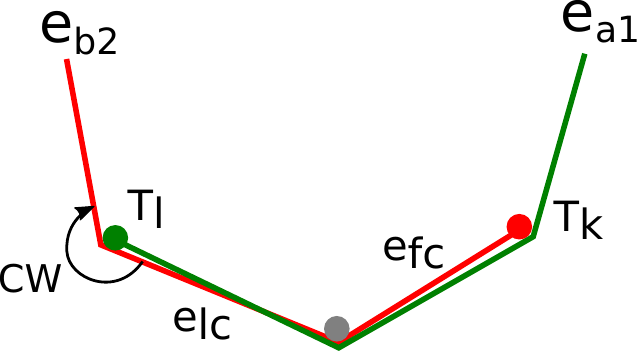}
\caption{Case 1(b) supplement. Edge $e_{b1}$ does not exist, meaning
  node $T_k$ is goal node of $\calR_B$. $\calR_A$ and $\calR_B$
  intersect at $T_l$.  \label{fig:case1(b)_r}}
\end{figure}

\textbf{Lemma 2s} If edge $e_{b1}$ does not exist, and node $T_k$ is
goal node of $\calR_B$. $\calR_A$ and $\calR_B$ intersect at $T_k$ if and only if
clockwise (counterclockwise) angle from $e_{a1}$ to $e_{fc}$ is
$> \pi$ and the clockwise (counterclockwise) angle from $e_{lc}$ to
$e_{b2}$ is $> \pi$.

\subsubsection{Case 2(b)} \hspace*{\fill}

\textbf{Case 2(b)}: Suppose $\calR_A$ and $\calR_B$ meet at node $T_k$, and
each route has two edges incident to $T_k$. They share one of the
edges. Let $\calR_A$ be the route currently in construction and $\calR_B$
be a previously constructed route.

In~\cite{hert1997planar}, this case is concluded as the following:

The edges and nodes labeling follow the convention in case 1(b), and an 
edge $e_{a2}$ of $\calR_A$ is added immediately after the common set of 
edges. An illustration is shown in Fig.~\ref{fig:case2(b)}. 
Assume that the edge $e_{b2}$ is in the counterclockwise angle from 
$e_{a2}$ to $e_{lc}$. Then Lemma 4 is used to determine intersection.

\begin{figure} [htp]
\centering
  \subfloat[\label{subfig:a}]{%
    \includegraphics[width=1.75in]{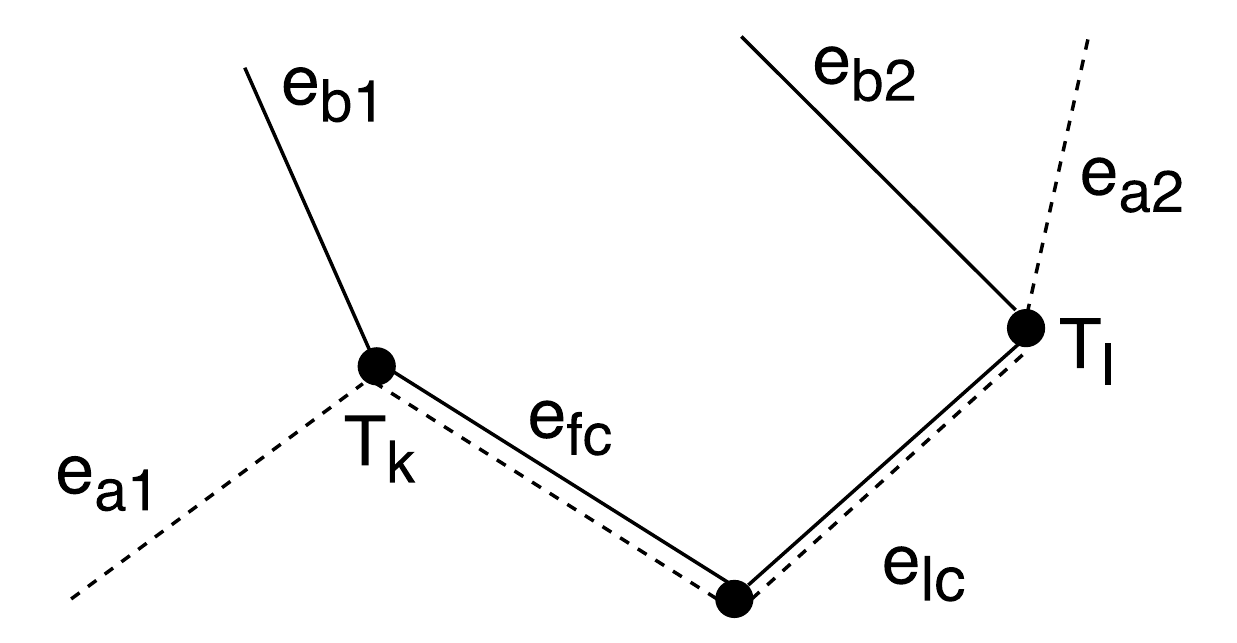}
 }
  \subfloat[\label{subfig:b}]{%
    \includegraphics[width=1.75in]{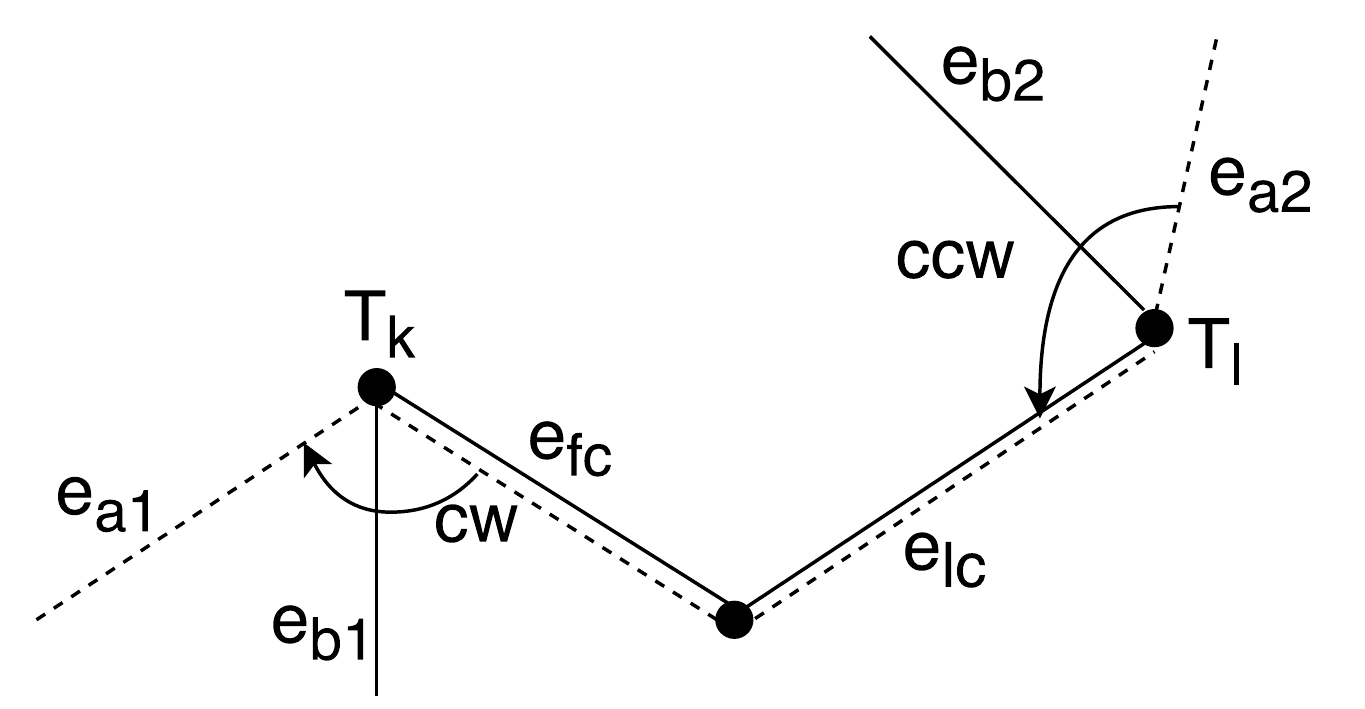}
 } 
 \caption{Case 2(b). In both (a) and (b) $\calR_A$ and $\calR_B$
   intersect at node $T_k$. (The figure is taken
   from~\cite{hert1997planar} Fig. 14.)  \label{fig:case2(b)}}
\end{figure}

\textbf{Lemma 4} $\calR_A$ and $\calR_B$ intersect at $T_k$ if and only if (a)
$e_{b1}$ is contained in the concave angle between $e_{a1}$ and
$e_{fc}$ and $e_{a1}$ is contained in the concave angle between
$e_{b1}$ and $e_{fc}$ or (b) $e_{b1}$ is contained in the clockwise
angle from $e_{fc}$ to $e_{a1}$.

However, the algorithm is redundant for option (a) in Lemma 4. 
Since the cable configuration is constructed in a depth-first manner, 
there is no need to proceed further if intersection has already been 
detected. Hence, it can be re-formulated as in Fig.~\ref{fig:case2(b)_r}.

\begin{figure}[htp]
\centering
\includegraphics[width=1.5in]{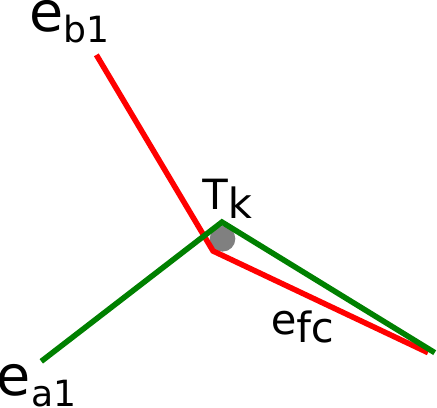}
\caption{Revised Case 2(b) for Lemma 4(a).  \label{fig:case2(b)_r}}
\end{figure}

In addition, option (b) in Lemma 4 only satisfy the situation when 
edge $e_{b2}$ exists (recall the discussion for case 1(b)). 
When $T_l$ is the goal node for $\calR_B$, meaning there is no 
edge $e_{b2}$, Lemma 4(b) has to be revised.
Below we present the revised version of Lemma 4.

\textbf{Lemma 4r} $\calR_A$ and $\calR_B$ intersect at $T_k$ if and only if (a)
$e_{b1}$ is contained in the concave angle between $e_{a1}$ and
$e_{fc}$ and $e_{a1}$ is contained in the concave angle between
$e_{b1}$ and $e_{fc}$; or (b1) $e_{b1}$ is contained in the clockwise
(counterclockwise) angle from $e_{fc}$ to $e_{a1}$ and edge $e_{b2}$
is in the counterclockwise (clockwise) angle from $e_{a2}$ to
$e_{lc}$; or (b2) $e_{b1}$ is contained in the clockwise
(counterclockwise) angle from $e_{fc}$ to $e_{a1}$ and
counterclockwise (clockwise) angle from $e_{a2}$ to $e_{lc}$ is $<\pi$
when $T_l$ is the goal node for $\calR_B$.

\section{Extension to workspace with obstacles}\label{Stage1-Obstacle}

An obstacle interferes in robot's motion only when the whole piece or
part of it is in the robot's cable polygon. 
An example is given in Fig.~\ref{fig:obstacle-interfere}.

\begin{figure} [htp]
  \centering \subfloat[The obstacle is out of $\Pi_1$ and $\Pi_2$. The
  target cable configuration can be achieved by straight-line robot
  motion as long as $r_1$ passes point $P$ before $r_2$
  does. \label{subfig:obstacle-out}]{%
    \includegraphics[width=1.6in]{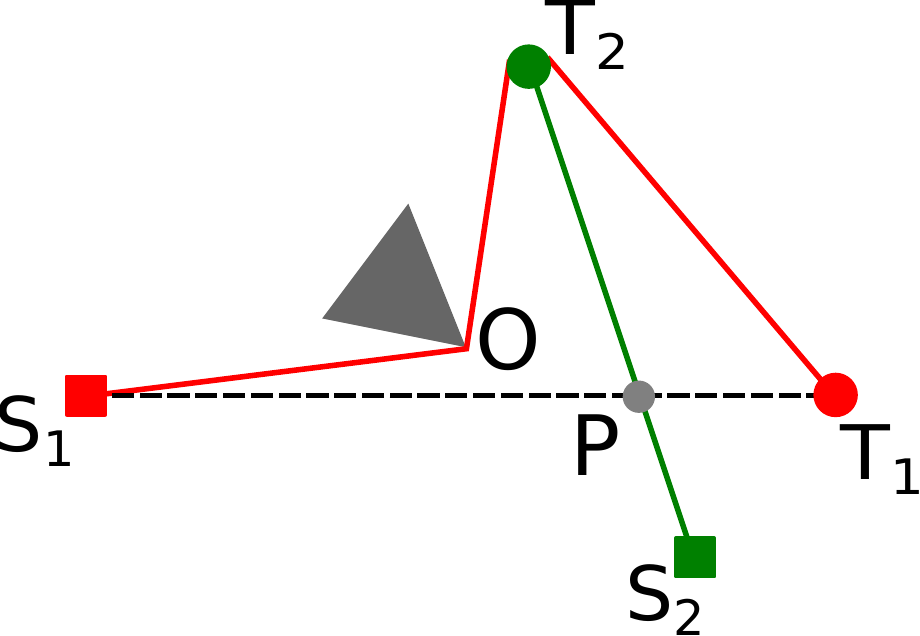}
}

\subfloat[The obstacle is in $\Pi_1$. The target cable configuration
cannot be achieved if $r_1$ moves along $(S_1,
T_1)$. \label{subfig:obstacle-in}]{%
    \includegraphics[width=1.5in]{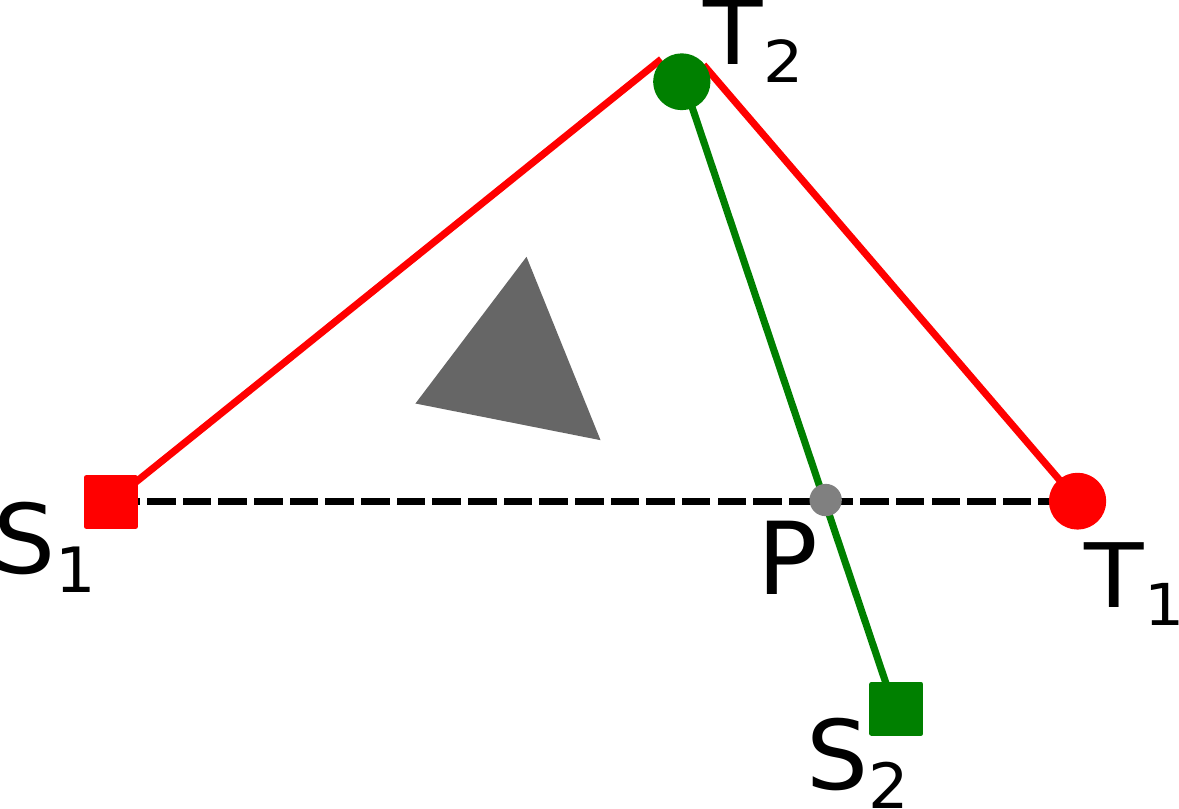}
  } \hspace{2mm} \subfloat[The shortest paths for the robots for the
  case in (b). To achieve the target cable configuration, $r_1$ has to
  move along $S_1-O'-P'-T_1$ and passes $P'$ before $r_2$
  does.\label{subfig:obstacle-in-shortest}]{%
    \includegraphics[width=1.5in]{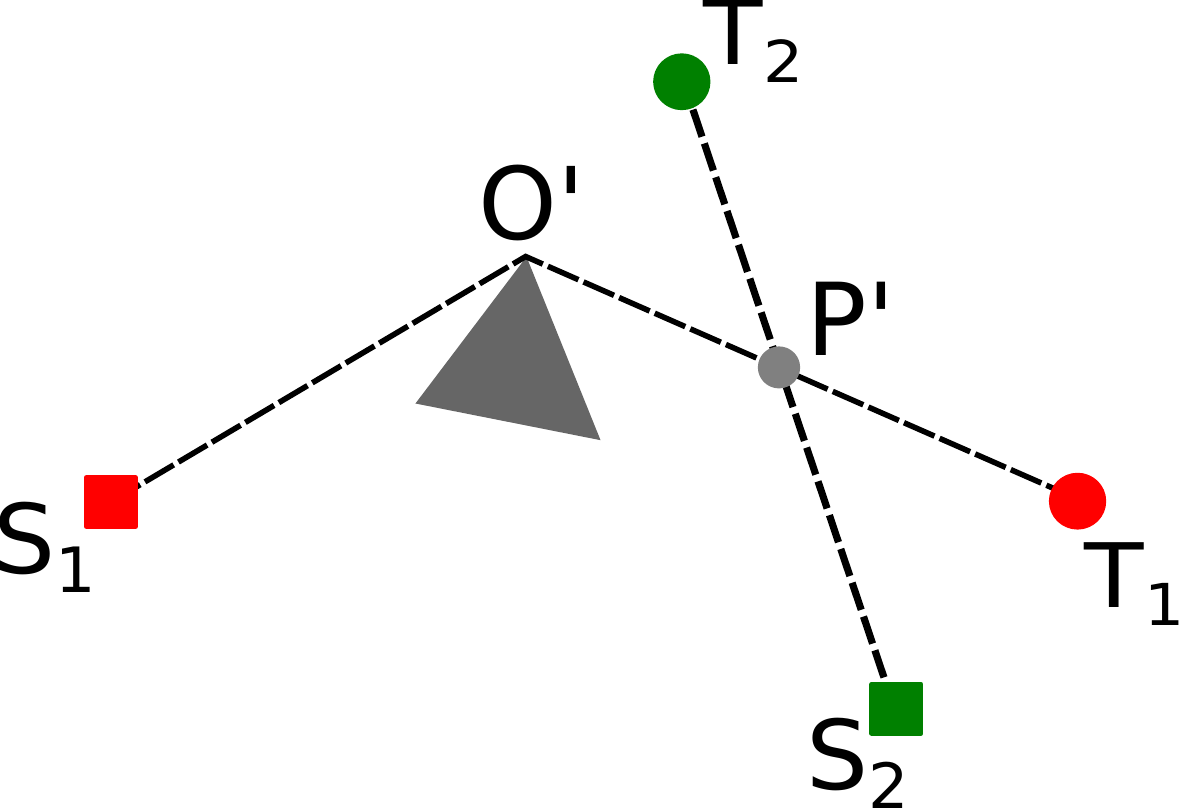}
}
\caption{Interference in robot's motion from the obstacle that is
  located in and out of robot's retraction polygon.}
 \label{fig:obstacle-interfere}
\end{figure}

In Fig.~\ref{fig:obstacle-interfere}(a), the obstacle is located in the exterior of a robot's cable polygon. Although cable line $C_1$ wraps
around the obstacle at point $O$ in the target cable configuration,
$r_1$ and $r_2$ can still move along their $(S, T)$ straight-line
path to achieve the target cable configuration as long as $r_1$
passes point $P$ before $r_2$ does, which behaves the same as the case without obstacles.
By contrast, robot's path needs to be modified when the obstacle is (partially) in the interior of the robot's cable polygon. 
As depicted in Fig.~\ref{fig:obstacle-interfere}(b, c),
to achieve the target cable configuration, the shortest path for $r_1$ 
is $S_1-O'-P'-T_1$, and $r_1$ must pass $P'$ earlier than $r_2$.

The proposed algorithms for \strconc{} motion can be adapted easily to
the situation where there are obstacles in the workspace. The only
modification is the robots' paths.  Recall that the motivation to adopt
straight-line motion is that it is the shortest path for a robot to
reach its target position.  When there are obstacles in the workspace,
the shortest path of a robot is no longer its $(S, T)$ straight-line
segment, but its retracted cable line after all other robots are
removed from the environment. 
The algorithms for \strconc{} motion still hold for workspace with obstacles with the updated shortest paths.  An example is presented in
Fig.~\ref{fig:obstacle-PIG}.

\begin{figure} [htp]
  \centering \subfloat[A given target cable configuration in the
  workspace with obstacles. \label{subfig:obstacle1}]{%
    \includegraphics[width=2.0in]{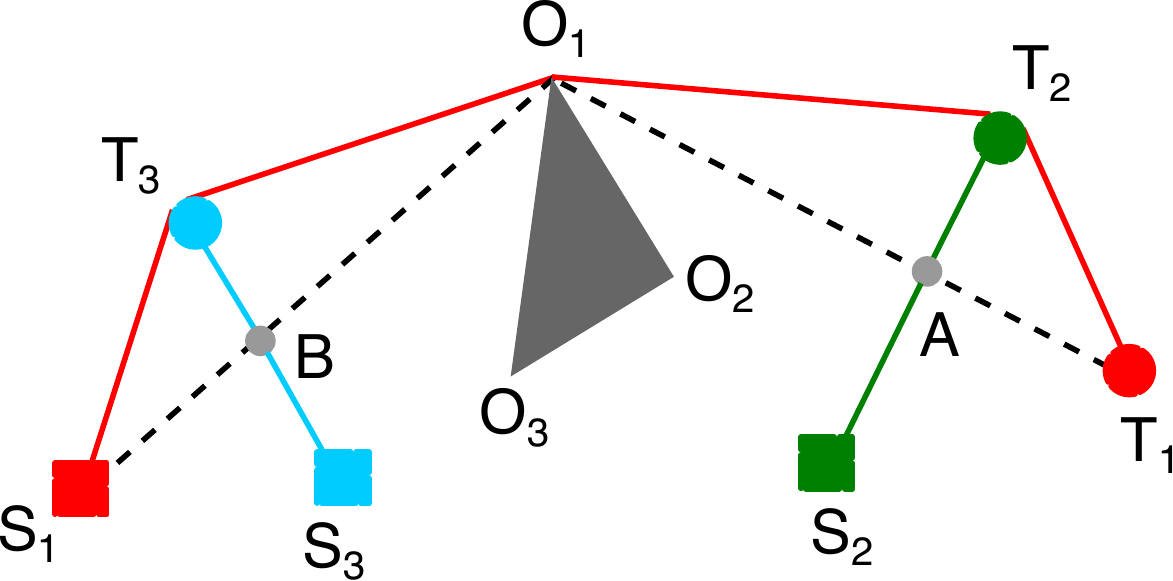}
  } \hspace{1mm} \subfloat[PIG of the target cable configuration.\label{subfig:obstacle-graph}]{%
    \includegraphics[width=1.2in]{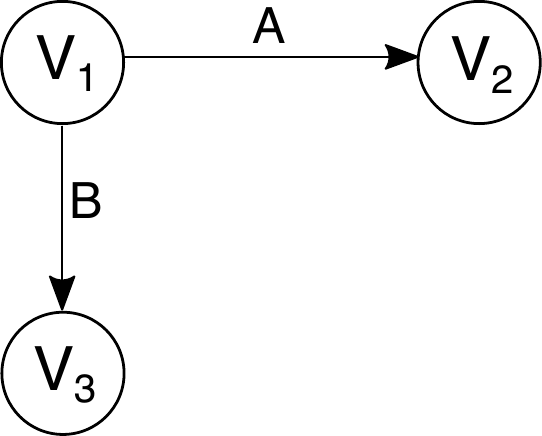}
  } \vspace{1mm} \subfloat[NIG of the target cable configuration with updated robot motion path.\label{subfig:obstacle-NIG}]{%
    \includegraphics[width=2.8in]{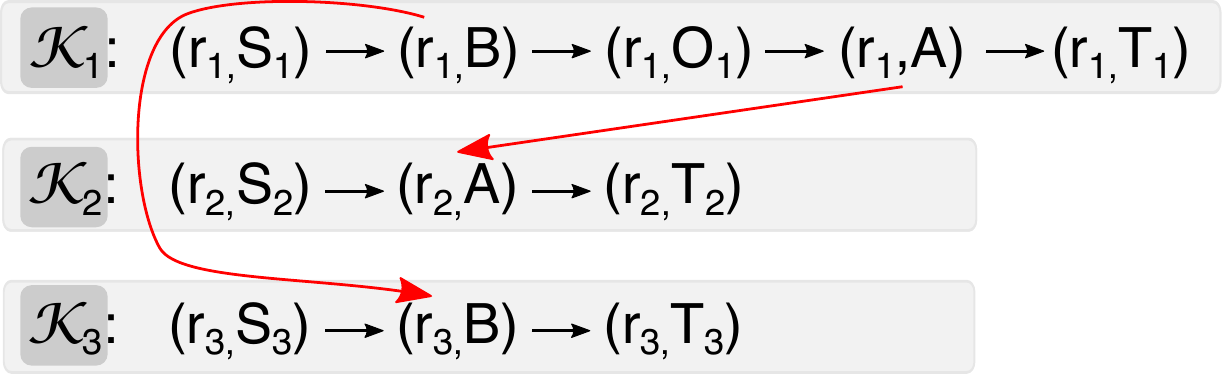}
  }
  \caption{An implementation of the proposed algorithms for the case where the workspace contains obstacles.}
 \label{fig:obstacle-PIG}
\end{figure}


\section*{References} \label{sec:references}

\begin{thebibliography}{10}
\expandafter\ifx\csname url\endcsname\relax
  \def\url#1{\texttt{#1}}\fi
\expandafter\ifx\csname urlprefix\endcsname\relax\def\urlprefix{URL }\fi
\expandafter\ifx\csname href\endcsname\relax
  \def\href#1#2{#2} \def\path#1{#1}\fi

\bibitem{nassiraei2007concept}
A.~A. Nassiraei, Y.~Kawamura, A.~Ahrary, Y.~Mikuriya, K.~Ishii, Concept and
  design of a fully autonomous sewer pipe inspection mobile robot" kantaro",
  in: Robotics and Automation, 2007 IEEE International Conference on, IEEE,
  2007, pp. 136--143.

\bibitem{hong1997tethered}
D.~Hong, S.~A. Velinsky, K.~Yamazaki, Tethered mobile robot for automating
  highway maintenance operations, Robotics and Computer-Integrated
  Manufacturing 13~(4) (1997) 297--307.

\bibitem{zhang2018large}
X.~Zhang, M.~Li, J.~H. Lim, Y.~Weng, Y.~W.~D. Tay, H.~Pham, Q.-C. Pham,
  Large-scale 3d printing by a team of mobile robots, Automation in
  Construction 95 (2018) 98--106.

\bibitem{pratt2008use}
K.~S. Pratt, R.~R. Murphy, J.~L. Burke, J.~Craighead, C.~Griffin, S.~Stover,
  Use of tethered small unmanned aerial system at berkman plaza ii collapse,
  in: Safety, Security and Rescue Robotics, 2008. SSRR 2008. IEEE International
  Workshop on, IEEE, 2008, pp. 134--139.

\bibitem{michael2012collaborative}
N.~Michael, S.~Shen, K.~Mohta, Y.~Mulgaonkar, V.~Kumar, K.~Nagatani, Y.~Okada,
  S.~Kiribayashi, K.~Otake, K.~Yoshida, K.~Ohno, E.~Takeuchi, S.~Tadokoro,
  Collaborative mapping of an earthquake-damaged building via ground and aerial
  robots, Journal of Field Robotics 29~(5) (2012) 832--841.

\bibitem{StephenCass}
S.~Cass, Darpa unveils atlas drc robot,
  \url{https://spectrum.ieee.org/automaton/robotics/humanoids/darpa-unveils-atlas-drc-robot}
  (2013).

\bibitem{mcgarey2016system}
P.~McGarey, F.~Pomerleau, T.~D. Barfoot, System design of a tethered robotic
  explorer (trex) for 3d mapping of steep terrain and harsh environments, in:
  Field and Service Robotics, Springer, 2016, pp. 267--281.

\bibitem{hert1997planar}
S.~Hert, V.~Lumelsky, Planar curve routing for tethered-robot motion planning,
  International Journal of Computational Geometry \& Applications 7~(03) (1997)
  225--252.

\bibitem{siegwart2011introduction}
R.~Siegwart, I.~R. Nourbakhsh, D.~Scaramuzza, R.~C. Arkin, Introduction to
  autonomous mobile robots, MIT press, 2011.

\bibitem{hert1994ties}
S.~Hert, V.~Lumelsky, The ties that bind: Motion planning for multiple tethered
  robots, in: Robotics and Automation, 1994. Proceedings., 1994 IEEE
  International Conference on, IEEE, 1994, pp. 2734--2741.

\bibitem{HERT1996187}
S.~Hert, V.~Lumelsky, The ties that bind: Motion planning for multiple tethered
  robots, Robotics and Autonomous Systems 17~(3) (1996) 187 -- 215.

\bibitem{xavier1999shortest}
P.~G. Xavier, Shortest path planning for a tethered robot or an anchored cable,
  in: Robotics and Automation, 1999. Proceedings. 1999 IEEE International
  Conference on, Vol.~2, IEEE, 1999, pp. 1011--1017.

\bibitem{Xu2012AnIA}
N.~Xu, P.~Brass, I.~Vigan, An improved algorithm in shortest path planning for
  a tethered robot, 2012.

\bibitem{igarashi2010homotopic}
T.~Igarashi, M.~Stilman, Homotopic path planning on manifolds for cabled mobile
  robots, in: Algorithmic Foundations of Robotics IX, Springer, 2010, pp.
  1--18.

\bibitem{kim2014path}
S.~Kim, S.~Bhattacharya, V.~Kumar, Path planning for a tethered mobile robot,
  in: Robotics and Automation (ICRA), 2014 IEEE International Conference on,
  IEEE, 2014, pp. 1132--1139.

\bibitem{kim2015path}
S.~Kim, M.~Likhachev, Path planning for a tethered robot using multi-heuristic
  a* with topology-based heuristics, in: Intelligent Robots and Systems (IROS),
  2015 IEEE/RSJ International Conference on, IEEE, 2015, pp. 4656--4663.

\bibitem{bhattacharya2015topological}
S.~Bhattacharya, S.~Kim, H.~Heidarsson, G.~S. Sukhatme, V.~Kumar, A topological
  approach to using cables to separate and manipulate sets of objects, The
  International Journal of Robotics Research 34~(6) (2015) 799--815.

\bibitem{salzman2015optimal}
O.~Salzman, D.~Halperin, Optimal motion planning for a tethered robot:
  Efficient preprocessing for fast shortest paths queries, in: Robotics and
  Automation (ICRA), 2015 IEEE International Conference on, IEEE, 2015, pp.
  4161--4166.

\bibitem{teshnizi2014computing}
R.~H. Teshnizi, D.~A. Shell, Computing cell-based decompositions dynamically
  for planning motions of tethered robots, in: 2014 IEEE International
  Conference on Robotics and Automation (ICRA), IEEE, 2014, pp. 6130--6135.

\bibitem{teshnizi2016planning}
R.~H. Teshnizi, D.~A. Shell, Planning motions for a planar robot attached to a
  stiff tether, in: 2016 IEEE International Conference on Robotics and
  Automation (ICRA), IEEE, 2016, pp. 2759--2766.

\vfill\break


\bibitem{shnaps2014online}
I.~Shnaps, E.~Rimon, Online coverage by a tethered autonomous mobile robot in
  planar unknown environments, IEEE Transactions on Robotics 30~(4) (2014)
  966--974.

\bibitem{abad2011motion}
P.~Abad-Manterola, I.~A. Nesnas, J.~W. Burdick, Motion planning on steep
  terrain for the tethered axel rover, in: Robotics and Automation (ICRA), 2011
  IEEE International Conference on, IEEE, 2011, pp. 4188--4195.

\bibitem{tanner2013online}
M.~M. Tanner, J.~W. Burdick, I.~A. Nesnas, Online motion planning for tethered
  robots in extreme terrain, in: Robotics and Automation (ICRA), 2013 IEEE
  International Conference on, IEEE, 2013, pp. 5557--5564.

\bibitem{mcgarey2016line}
P.~McGarey, K.~MacTavish, F.~Pomerleau, T.~D. Barfoot, The line leading the
  blind: Towards nonvisual localization and mapping for tethered mobile robots,
  in: 2016 IEEE International Conference on Robotics and Automation (ICRA),
  IEEE, 2016, pp. 4799--4806.

\bibitem{mcgarey2017tslam}
P.~McGarey, K.~MacTavish, F.~Pomerleau, T.~D. Barfoot, Tslam: Tethered
  simultaneous localization and mapping for mobile robots, The International
  Journal of Robotics Research 36~(12) (2017) 1363--1386.

\bibitem{mcgarey2017falling}
P.~McGarey, M.~Polzin, T.~D. Barfoot, Falling in line: Visual route following
  on extreme terrain for a tethered mobile robot, in: 2017 IEEE International
  Conference on Robotics and Automation (ICRA), IEEE, 2017, pp. 2027--2034.

\bibitem{tsai2013autonomous}
D.~Tsai, I.~A. Nesnas, D.~Zarzhitsky, Autonomous vision-based tethered-assisted
  rover docking, in: 2013 IEEE/RSJ International Conference on Intelligent
  Robots and Systems, IEEE, 2013, pp. 2834--2841.

\bibitem{yamashita1998cooperative}
A.~Yamashita, J.~Sasaki, J.~Ota, T.~Arai, Cooperative manipulation of objects
  by multiple mobile robots with tools, in: Proceedings of the 4th
  Japan-France/2nd Asia-Europe Congress on Mechatronics, Vol. 310, 1998, p.
  315.

\bibitem{bhattacharya2011cooperative}
S.~Bhattacharya, H.~Heidarsson, G.~S. Sukhatme, V.~Kumar, Cooperative control
  of autonomous surface vehicles for oil skimming and cleanup, in: Robotics and
  automation (ICRA), 2011 IEEE international conference on, IEEE, 2011, pp.
  2374--2379.

\bibitem{khuller1998graphbots}
S.~Khuller, E.~Rivlin, A.~Rosenfeld, Graphbots: cooperative motion planning in
  discrete spaces, IEEE Transactions on Systems, Man, and Cybernetics, Part C
  (Applications and Reviews) 28~(1) (1998) 29--38.

\bibitem{hert1999motion}
S.~Hert, V.~Lumelsky, Motion planning in r/sup 3/for multiple tethered robots,
  IEEE transactions on robotics and automation 15~(4) (1999) 623--639.

\bibitem{sinden1990tethered}
F.~Sinden, The tethered robot problem, The international journal of robotics
  research 9~(1) (1990) 122--133.

\bibitem{hales2007jordan}
T.~C. Hales, Jordan’s proof of the jordan curve theorem, Studies in logic,
  grammar and rhetoric 10~(23) (2007) 45--60.

\end{thebibliography}

\end{document}